\documentclass{article} % For LaTeX2e
\usepackage{iclr2024_conference,times}
\renewcommand{\paragraph}[1]{\textbf{#1}}
% Optional math commands from https://github.com/goodfeli/dlbook_notation.
%%%%% NEW MATH DEFINITIONS %%%%%

\usepackage{amsmath,amsfonts,bm}

% Mark sections of captions for referring to divisions of figures

% Highlight a newly defined term

% Figure reference, lower-case.

% Figure reference, capital. For start of sentence

% Section reference, lower-case.

% Section reference, capital.

% Reference to two sections.

% Reference to three sections.

% Reference to an equation, lower-case.
\def\eqref#1{equation~\ref{#1}}
% Reference to an equation, upper case

% A raw reference to an equation---avoid using if possible

% Reference to a chapter, lower-case.

% Reference to an equation, upper case.

% Reference to a range of chapters

% Reference to an algorithm, lower-case.

% Reference to an algorithm, upper case.

% Reference to a part, lower case

% Reference to a part, upper case

\def\1{\bm{1}}

% Random variables

% rm is already a command, just don't name any random variables m

% Random vectors

% Elements of random vectors

% Random matrices

% Elements of random matrices

% Vectors

% Elements of vectors

% Matrix

% Tensor
\DeclareMathAlphabet{\mathsfit}{\encodingdefault}{\sfdefault}{m}{sl}
\SetMathAlphabet{\mathsfit}{bold}{\encodingdefault}{\sfdefault}{bx}{n}

% Graph

% Sets

% Don't use a set called E, because this would be the same as our symbol
% for expectation.

% Entries of a matrix

% entries of a tensor
% Same font as tensor, without \bm wrapper

% The true underlying data generating distribution

% The empirical distribution defined by the training set

% The model distribution

% Stochastic autoencoder distributions

 % Laplace distribution

% Wolfram Mathworld says $L^2$ is for function spaces and $\ell^2$ is for vectors
% But then they seem to use $L^2$ for vectors throughout the site, and so does
% wikipedia.

 % See usage in notation.tex. Chosen to match Daphne's book.

\DeclareMathOperator*{\argmax}{arg\,max}

\usepackage{hyperref}
\usepackage{url}
\usepackage[utf8]{inputenc} % allow utf-8 input
\usepackage[T1]{fontenc}    % use 8-bit T1 fonts
\usepackage{amsmath}
\usepackage{amsthm}
\usepackage{hyperref}       % hyperlinks
\usepackage{url}            % simple URL typesetting
\usepackage{booktabs}       % professional-quality tables
\usepackage{amsfonts}       % blackboard math symbols
\usepackage{nicefrac}       % compact symbols for 1/2, etc.
\usepackage{microtype}      % microtypography
\usepackage{xcolor}         % colors
\usepackage{amssymb}

\usepackage{floatrow}
\usepackage{adjustbox}
\usepackage{booktabs}
\usepackage{makecell}
\usepackage{graphicx}
\usepackage{subcaption}
\usepackage{algorithm}
\usepackage{algpseudocode}
\usepackage[capitalize,noabbrev]{cleveref}
\newcommand\numberthis{\addtocounter{equation}{1}\tag{\theequation}}

\newtheorem{lemma}{Lemma}

\Crefname{dfn}{def.}{defs.}
\newtheorem{corollary}{Corollary}

\Crefname{assumption}{Assumption}{Assumptions}
\newcommand{\doo}{\textrm{do}}
\usepackage{thmtools}
\usepackage{thm-restate}
\usepackage{hyperref}
\usepackage{cleveref}

\usepackage{comment}
\usepackage{bm}
\usepackage[normalem]{ulem}
\usepackage{supertabular}

\usepackage[decisionutilitycolor,compact]{influence-diagrams}
\newcommand{\EE}{\mathbb E}

\usepackage[]{todonotes}

\usepackage{xcolor}
\usepackage{caption}
\usepackage{tabularray}
\UseTblrLibrary{booktabs}

\title{Robust agents learn causal world models}

\author{%
  Jonathan Richens\thanks{jonrichens@deepmind.com} \\
  Google DeepMind\\
    \And
  Tom Everitt \\
  Google DeepMind\\
}

\iclrfinalcopy 
\begin{document}

\maketitle

\begin{abstract}
    It has long been hypothesised that causal reasoning plays a fundamental role in robust and general intelligence.
    However, it is not known if agents must learn causal models in order to generalise to new domains, or if other inductive biases are sufficient.
    We answer this question, showing that any agent capable of satisfying a regret bound for a large set of distributional shifts must have learned an approximate causal model of the data generating process, which converges to the true causal model for optimal agents. 
    We discuss the implications of this result for several research areas including transfer learning and causal inference. 
\end{abstract}

\section{Introduction}\label{section: intro}
What capabilities are necessary for general intelligence \citep{legg2007universal}?
One candidate is causal reasoning, which plays a foundational role in human cognition \citep{gopnik2007causal,sloman2015causality}. 
It has even been argued that human-level AI is impossible without causal reasoning \citep{pearl2018theoretical}. 
However, recent years have seen the development of agents that do not explicitly learn or reason on causal models, but nonetheless are capable of adapting to a wide range of environments and tasks  \citep{reed2022generalist,team2023gemini,brown2020language}.

This raises the question, do agents have to learn causal models in order to adapt to new domains, or are other inductive biases sufficient?
To answer this question, we have to be careful not to assume that agents use causal assumptions a priori. 
For example, transportability theory determines what causal knowledge is necessary for transfer learning when all assumptions on the data generating process (inductive biases) can be expressed as constraints on causal structure \citep{bareinboim2016causal}. 
However, deep learning algorithms can exploit a much larger set of inductive biases \citep{neyshabur2014search,battaglia2018relational,rahaman2019spectral,goyal2022inductive} which in many real-world settings may be sufficient to identify low regret policies without requiring causal knowledge. 

The main result of this paper is to answer this question by showing that,
\begin{quote}
    \centering
    \emph{Any agent capable of adapting to a sufficiently large set of distributional shifts must have learned a causal model of the data generating process.}
\end{quote}

Here, adapting to a distributional shift means learning a policy that satisfies a regret bound following
an intervention on the data generating process---for example, changing the distribution of features
or latent variables.
It is known that a causal model of the data generating process can be used to identify regret-bounded policies following a distributional shift (sufficiency), with more accurate models allowing lower regret policies to be found.
We prove the converse (necessity)---given regret-bounded policies for a large set of distributional shifts, we can learn an approximate causal model of the data generating process, with the approximation becoming exact for optimal policies. 
Hence, learning a causal model of the data generating process is necessary for robust adaptation. 

This has consequences for a number of fields and questions. 
For one, it implies that causal identification laws also constrain domain adaptation. 
%For one, it implies that an agent's ability to adapt to distributional shifts is bounded by its ability to learn causal relations between environment variables, opening a path to deriving fundamental causal limitations for transfer learning using impossibility results for causal discovery.
For example, we show that adapting to covariate and label shifts is only possible if the causal relations between features and labels can be identified from the training data---a non-trivial causal discovery problem. 
This provides further theoretical justification for causal representation learning \citep{scholkopf2021toward}, showing that learning causal representations is necessary for achieving strong robustness guarantees.
Our result also implies that we can learn causal models from adaptive agents. 
We demonstrate this by solving a causal discovery task on synthetic data by observing the policy of a regret-bounded agent under distributional shifts. 
More speculatively, our results suggest that causal models could play a role in emergent capabilities. 
Agents trained to minimise a loss function across many domains are incentivized to learn a causal world model, which could in turn enable them to solve a much larger set of decision tasks they were not explicitly trained on. 

\paragraph{Outline of paper.}
In \Cref{section: preliminaries} we introduce concepts from causality and decision theory used to derive our results. We present our main theoretical results in \Cref{section: main theorem} and discuss their interpretation in terms of adaptive agents, transfer learning and causal inference. In \Cref{section: interpretation} we discuss limitations, as well as implications for a number of fields and open questions. In section \ref{section: related work} we discuss related work including transportability \citep{bareinboim2016causal} and the causal hierarchy theorem \citep{bareinboim2022pearl}, and recent empirical work on emergent world models. In \Cref{appendix: simplified proof} we describe experiments applying our theoretical results to causal discovery problems. 

\section{Preliminaries}\label{section: preliminaries}
\subsection{Causal models}
We use capital letters for random variables $V$, and lower case for their values $v\in \dom(V)$.
For simplicity, we assume each variable has a finite number of possible values, $|\dom(V)|< \infty$.
Bold face denotes sets of variables $\bm V = \{V_1, \dots, V_n\}$, and their values $\bm v\in \dom(\bm V) = \times_{i} \dom(V_i)$.
A probabilistic model specifies the joint distribution $P(\bm V)$ over a set of variables $\bm V$.
These models can support associative queries, for example $P(\bm Y = \bm y \mid \bm X = \bm x)$ for $\bm X, \bm Y\subseteq \bm V$. 
Interventions describe external changes to the data generating process (and hence changing the joint distribution), for example a \emph{hard} intervention $\doo (\bm X = \bm x)$ describes forcing the set of variables $\bm X\subseteq \bm V$ to take value $\bm x$. 
This generates a new distribution $P(\bm V \mid \doo (X=x)) = P(\bm V_x)$ where $\bm V_{\bm x}$ refers to the variables $\bm V$ following this intervention.
The power of causal models is that they specify not only $P(\bm V)$ but also the distribution of $\bm V$ under all interventions, and hence these models can be used to evaluate both associative and interventional queries e.g. $P(\bm Y = \bm y \mid \doo (\bm X = \bm x))$. 

For the derivation of our results we focus on a specific class of causal models---causal Bayesian networks (CBNs).
There are several alternative models and formalisms that are studied in the literature, including structural equation models \citep{pearl2009causality} and the Neyman-Rubin causal models \citep{rubin2005causal}, and results can be straightforwardly adapted to these.
\begin{restatable}[Bayesian networks]{dfn}{bayesiannetwork}\label{def: causal bayesian network }
A \emph{Bayesian network} $M = (G, P)$ over a set of variables $\bm{V} =\{V_1, \dots, V_n\}$ is a joint probability distribution $P(\bm{V})$ that factors according to a directed acyclic graph (DAG) $G$, i.e.\ $P(V_1, \dots, V_n) = \prod_{i=1}^n P(V_i\mid \Pa_{V_i})$, where $\Pa_{V_i}$ are the parents of $V_i$ in $G$.
\end{restatable}
A Bayesian network is \emph{causal} if the graph $G$ captures the causal relationships between the variables or, formally, if the result of any intervention $\doo(\bm X=\bm x)$ for $\bm X\subseteq \bm V$ can be computed from the truncated factorisation formula:
\[
    P(\bm{v}\mid \doo(\bm{x})) 
    =\begin{cases}
    \prod_{i: v_i\not\in \bm{x}} P(v_i\mid \pa_{v_i}) 
    & \text{if $\bm{v}$ consistent with $\bm{x}$}\\
    0 & \text{otherwise.}
    \end{cases} 
\]
More generally, a \emph{soft} intervention $\sigma_{v_i} = P'(V_i \mid \Pa^*_i)$ replaces the conditional probability distribution for $V_i$ with a new distribution $P'(V_i \mid \Pa^*_i)$, possibly resulting in a new parent set $\Pa^*_i\neq \Pa_i$ as long as no cycles are introduced in the graph. We refer to
$\sigma_{v_i}$ as a \textit{domain indicator} \citep{correa2020calculus} (it has also been called an environment index, \citealp{arjovsky2019invariant}). 
The updated distribution is denoted
$P(\bm{v} ; \sigma_{\bm{v}'}) = 
\prod_{i: v_i \in \bm{v}'} P'(v_i\mid \pa^*_{v_i}) 
\prod_{i: v_i\not\in \bm{v}'} P(v_i\mid \pa_{v_i})$.

In general, soft interventions cannot be defined without knowledge of $G$.
For example, the soft intervention $\sigma_Y = P'(y \mid x)$ is incompatible with the causal structure $Y \rightarrow X$ as it would induce a causal cycle.
As our results are concerned with learning causal models (and hence causal structure), we focus our theoretical analysis on a subset of the soft interventions, \textit{local interventions}, that are compatible with all causal structures and so can be used without tacitly assuming knowledge of $G$.  
\pagebreak
\begin{restatable}[Local interventions]{dfn}{localinterventions}\label{def: local interventions}
Local intervention $\sigma$ on $V_i\in \bm V$ involves applying a map to the states of $V_i$ that is not conditional on any other endogenous variables, $v_i \mapsto f(v_i)$. 
We use the notation $\sigma = \doo (V_i = f(v_i))$ (variable $V_i$ is assigned the state $f(v_i)$).  
Formally, this is a soft intervention on $V_i$ that transforms the conditional probability distribution as, 
\begin{equation}
P(v_i \,|\, \pa_i ; \sigma) = \sum_{v_i’ : f(v_i') = v_i} P(v_i'\, | \,\pa_i)
\end{equation}
\end{restatable}
\textit{Example:} Hard interventions $\doo (V_i = v'_i)$ are local interventions where $f(v_i)$ is a constant function. %$\forall$ $v_i$. 

\textit{Example:} Translations are local interventions as $\doo (V_i = v_i + k) = \doo (V_i = f(v_i))$ where $f(v_i) = v_i + k$. Examples include changing the position of objects in RL environments \citep{shah2022goal} and images \citep{engstrom2019exploring}. 

\textit{Example:} Logical NOT operation $X \mapsto \neg X$ for Boolean $X$ is a local intervention.

We also consider stochastic interventions, noting that mixtures of local interventions can also be defined without knowledge of $G$. 
For example, adding noise to a variable $X = X + \epsilon$, $\epsilon \sim \mathcal N(0, 1)$, is a soft intervention on $X$ described by a mixture over local interventions (translations).
\begin{restatable}[Mixtures of interventions]{dfn}{mixtures}
\label{def: mixtures of interventions}
A \emph{mixed intervention} $\sigma^* =\sum_{i} p_i\sigma_i$ for  $\sum p_i=1$ performs intervention $\sigma_i$ with probability $p_i$.
Formally,
$P(\bm{v}\mid \sigma^*) = \sum_{i} p_i P(\bm{v}\mid \sigma_i)$.
\end{restatable}
\subsection{Decision tasks}\label{section: decision tasks}
Decision tasks involve a decision maker (agent) choosing a policy so as to optimise an objective function (utility).
To give a causal description of decision tasks we use the causal influence diagram (CID) formalism \citep{howard2005influence,everitt2021agent}, which extend a CBN of the environment (chance) variables by introducing decision and utility nodes (see \Cref{fig:supervised_learning} for examples).
For simplicity we focus on tasks involving a single decision and a single utility function.
\begin{restatable}[Causal influence diagram]{dfn}{causalinfluencediagram}\label{def: CID}
 A (single-decision, single-utility) \emph{causal influence diagram} (CID) is a CBN $M=(G, P)$ where the variables $\bm{V}$ are partitioned into decision, utility, and chance variables, $\bm{V}=(\{D\}, \{U\}, \bm{C})$.
 The utility variable is a real-valued function of its parents, $U(\pa_U)$. %$\dom(U)\subseteq \mathbb{R}$.
\end{restatable}
Single-decision single-utility 
CIDs can represent most decision tasks such as classification and regression as they specify what decision should be made ($d\in D$), based on what information ($\pa_D$), with objective ($\EE[U]$).
They can also describe some multi-decision tasks such as Markov decision processes\footnote{Note Markov decision processes can be formulated as a single-decision single-utility CID, by modelling the choice of policy as a single decision and the cumulative discounted reward as a single utility variable.}. 
The utility is any real-valued function including standard loss and reward functions.

We assume that the environment is described by a set of random variables $\bm C$ that interact via causal mechanisms\footnote{This assumption follows from \citet{reichenbach1956direction}, and we discuss further in \Cref{appendix: distributional shifts}}, and where $\bm C$ satisfies causal sufficiency \citep{pearl2009causality} (includes all common causes),
noting that such a choice of $\bm C$ always exists. 
We refer to the CBN over $\bm C$ as the `true' or `underlying' CBN. 
Note we do not assume the agent has any knowledge of the underling CBN, nor do we assume which variables in $\bm C$ are observed or unobserved by the agent, beyond that the agent can observe $\Pa_D\subseteq \bm C$.
We also assume knowledge of the utility function $U(\Pa_U)$.

The conditional probability distribution for the decision node $\pi(d\mid \pa_D)$ (the policy) is not a fixed parameter of the model but is set by the agent so as to maximise its expected utility, which for a policy $\pi$ is $\EE^{\pi}[U] = \EE[U\mid \doo(D=\pi(\pa_D))]$.
A policy $\pi^*$ is \emph{optimal} if it maximises $\EE^{\pi^*}[U]$.
Typically, agents do not behave optimally and incur some \textit{regret} $\delta$, which is the decrease in expected utility compared to an optimal policy $\delta := \mathbb E^{\pi^*}[U] -  \mathbb E^\pi [U]$.

To simplify our theoretical analysis, we focus on a widely studied class of decision tasks where the agents decision does not causally influence the environment (e.g. \Cref{fig:supervised_learning}).
\begin{restatable}[Unmediated decision task]{assumption}{passivedecisiontask}\label{assumption: passive}
$\Desc_D \cap \Anc_{U} = \emptyset$.
\end{restatable}

In unmediated decision tasks, the agent is provided some (partial) observations of the environment and chooses a policy, which is then evaluated using the utility function which is a function of the environment state and the agent's decision.
Examples of unmediated decision tasks include prediction tasks such as classification and regression, whereas examples of \textit{mediated} decision tasks that are not covered by our theorems include Markov decision processes where the agent's decision (action) influences the utility via the environment state.

\begin{figure}
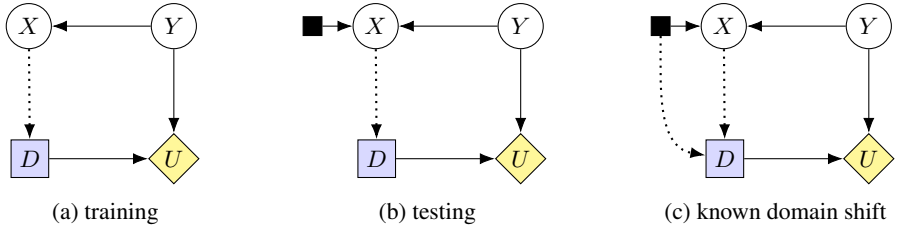

    \begin{subfigure}[t]{0.3\columnwidth}
    \centering
      \begin{influence-diagram}
      \node (help0) [draw=none] {};
      \node (helpb) [draw=none, below = of help0] {};
      \node (helpl) [draw=none, left = of help0] {};
      \node (U) [right = of helpb, utility] {$U$};
      \node (D) [left = of helpb, decision] {$D$};
      \node (helpr) [draw=none, above = of U] {};
      \node (Y) [above = of helpr, minimum size = .6cm] {$Y$};
      \node (helpt) [draw=none, left = of Y] {};
      \node (X) [left = of helpt, minimum size = .6cm] {$X$};
      \edge {D, Y} {U};
      \edge {Y} {X};
      \edge[information] {X} {D};
      \end{influence-diagram}
      \caption{training}
    \end{subfigure}
    \begin{subfigure}[t]{0.3\columnwidth}
    \centering
      \begin{influence-diagram}
      \node (help0) [draw=none] {};
      \node (helpb) [draw=none, below = of help0] {};
      \node (helpl) [draw=none, left = of help0] {};
      \node (U) [right = of helpb, utility] {$U$};
      \node (D) [left = of helpb, decision] {$D$};
      \node (helpr) [draw=none, above = of U] {};
      \node (Y) [above = of helpr, minimum size = .6cm] {$Y$};
      \node (helpt) [draw=none, left = of Y] {};
      \node (X) [left = of helpt, minimum size = .6cm] {$X$};
      \node (sigmax) [rectangle, left = of X, fill=black, minimum size=.25cm] {};
      \edge {D, Y} {U};
      \edge {Y, sigmax} {X};
      \edge[information] {X} {D};
      %\draw[information]
      %(sigmax) edge[->,out = -90, in= -190] (D);
      \end{influence-diagram}
      \caption{testing}
    \end{subfigure}
    \begin{subfigure}[t]{0.35\columnwidth}
    \centering
      \begin{influence-diagram}
      \node (help0) [draw=none] {};
      \node (helpb) [draw=none, below = of help0] {};
      \node (helpl) [draw=none, left = of help0] {};
      \node (U) [right = of helpb, utility] {$U$};
      \node (D) [left = of helpb, decision] {$D$};
      \node (helpr) [draw=none, above = of U] {};
      \node (Y) [above = of helpr, minimum size = .6cm] {$Y$};
      \node (helpt) [draw=none, left = of Y] {};
      \node (X) [left = of helpt, minimum size = .6cm] {$X$};
      \node (sigmax) [rectangle, left = of X, fill=black, minimum size=.25cm] {};
      \edge {D, Y} {U};
      \edge {Y, sigmax} {X};
      \edge[information] {X} {D};
      \draw[information]
      (sigmax) edge[->,out = -90, in= -190] (D);
      \end{influence-diagram}
      \caption{known domain shift}
    \end{subfigure}
\caption{
CID for a supervised learning task during (a) training and (b) testing following a distributional (covariate) shift (unsupervised domain adaptation, \citealp{wilson2020survey}). 
The agent chooses a label prediction $D = \hat Y$ given features $X$, with the goal of minimising loss $U = -\text{Loss}(Y, \hat Y)$.
Decision variables are depicted as square nodes, chance variables as circular nodes and utilities as diamond nodes. 
Information edges (dashed) show the variables the agent conditions their policy on.
In this example the labels cause the features $Y \rightarrow X$ (for examples where features cause labels see \citealp{castro2020causality,scholkopf2012causal}).
The black square (`regime node' \citep{correa2020calculus}) in (b) and (c) denotes a distributional shift induced by an intervention on $X$.
Diagram (c) depicts the idealised case where the agent knows what domain shift has occurred. 
By theorem \ref{theorem: main}, if the agent can return an optimal decision boundary for known covariate and label shifts, then it must have learned the CBN over $\bm C = \{X,Y\}$.
Note that even if the agent has sufficient training data to learn $P(X, Y)$, the causal structure $Y\rightarrow X$ is in general non-identifiable given $P(X, Y)$ and so domain adaptation requires that the agent solves a non-trivial causal discovery problem.
}\label{fig:supervised_learning}
\end{figure}
\vspace{-0.5mm}
\subsection{Distributional shifts}\label{section: distributional shifts}

We focus on generalisation that goes beyond the \emph{iid} assumption, where agents are evaluated in domains that are distributionally shifted from the training environment.
Distributional shifts can be changes to the environment (\textit{domain shifts}), as in domain adaptation and domain generalisation \citep{farahani2021brief,wilson2020survey}, or changes to the objective (\textit{task shifts)} as in zero shot learning \citep{xian2018zero}, in-context learning \citep{brown2020language} and multi-task reinforcement learning \citep{reed2022generalist}.
Our analysis focuses on domain shifts that involve changes to the causal data generating process, and hence can be modelled as interventions \citep{scholkopf2021toward}.
This does not assume that all shifts an agent will encounter can be modelled as interventions, but requires that the agent is \textit{at least} capable of adapting to these shifts.

Examples of interventionally generated shifts include translating objects in images \citep{engstrom2019exploring}, noising inputs and adversarial robustness \citep{hendrycks2019benchmarking}, and changes to the initial conditions or transition function in Markov decision processes \citep{peng2018sim}. 
Examples of shifts that are not naturally represented as interventions include changing the set of environment variables $\bm C$, and introducing selection biases \citep{shen2018causally}.
See \Cref{appendix: distributional shifts} for discussion. 

Our main results restrict to local \textit{domain shifts}, which correspond to local interventions on the chance variables $\bm C$.
We do not consider shifts that change the agent's decision $D$, although we include shifts that drop inputs to the policy $\Pa_D \rightarrow \Pa_D'\subseteq \Pa_D$ (e.g. masking) as local interventions. 
We do not consider task shifts i.e. changing the utility function.

As we are interested in determining the capabilities necessary for domain adaptation, we restrict our attention to decision tasks where domain adaptation is non-trivial, i.e. where the optimal policy depends on the environment distribution $P(\bm C = \bm c)$.

\begin{restatable}[Domain dependence]{assumption}{domaindependence}\label{assumption: environment-dependent}
There exists $P(\bm C = \bm c)$ and $P'(\bm C = \bm c)$ compatible with $M$ such that $\pi^* = \argmax_\pi \mathbb E_P^\pi[U]$ implies $\pi^* \neq \argmax_\pi \mathbb E_{P'}^\pi[U]$.
\end{restatable}

Assumption 2 implies the existence of domain shifts that change the optimal policy. 

\section{Causal models are necessary for robust adaptation}
\label{section: main theorem}

We now present our results in their most general form---an equivalence between learning the underlying CBN and learning regret bounded policies for local domain shifts. 
Then in \Cref{section: actual interpretation} we apply our theorems to three settings: adaptive agents, transfer learning, and causal inference, and show that robust agents must learn causal world models. % by considering their consequences for transfer learning, causal inference and why they imply that adaptive agents must learn causal world models.

First we focus on the idealised case where we assume optimality. We show for almost all decision tasks the underlying CBN can be reconstructed given optimal policies for a large set of domain shifts.
\begin{restatable}{theorem}{maintheorem}\label{theorem: main}
For almost all CIDs $M = (G, P)$ satisfying Assumptions 1 and 2, we can identify the directed acyclic graph $G$ and joint distribution $P$ over all ancestors of the utility $\Anc_U$ given $\{\pi^*_\sigma (d\mid \pa_D)\}_{\sigma \in \Sigma}$ where $\pi^*_\sigma (d\mid \pa_D)$ is an optimal policy in the domain $\sigma$ and $\Sigma$ is the set of all mixtures of local interventions. Proof in \Cref{appendix: main theorem}. 
\end{restatable}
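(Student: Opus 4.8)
The plan is to first reduce the statement to one purely about interventional distributions, and then reconstruct the causal model by an interventional causal-discovery argument.

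\textbf{Step 1: reduction to recovering interventional distributions.} Write $\bm S := \Pa_U\cap\bm C$ for the chance parents of the utility; since $U(\Pa_U)$ is known, $\bm S$ is known. Under Assumption 1 the decision $D$ has no causal effect on any chance variable, so in any domain $\sigma$
\[
\EE^{\pi}_\sigma[U]=\sum_{\pa_D}P(\pa_D;\sigma)\sum_d\pi(d\mid\pa_D)\sum_{\bm s}P(\bm s\mid\pa_D;\sigma)\,U(d,\bm s),
\]
hence an optimal policy $\pi^*_\sigma$ is supported, at each context $\pa_D$, on $\argmax_d\sum_{\bm s}P(\bm s\mid\pa_D;\sigma)\,U(d,\bm s)$. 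So $\{\pi^*_\sigma\}_{\sigma\in\Sigma}$ tells us, for every context $\pa_D$ and every mixture of local interventions $\sigma$, which decision maximises a \emph{known} linear functional of the \emph{unknown} vector $P(\bm S\mid\pa_D;\sigma)$ in the simplex of distributions over $\bm S$. It then suffices to (i) recover these interventional conditionals for enough $\sigma$, and (ii) reconstruct the sub-CBN of $M$ on $\Anc_U$ from them.

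\textbf{Step 2: from $\argmax$-policies to exact distributions.} The $\argmax$ over the finite decision set only locates $P(\bm S\mid\pa_D;\sigma)$ in a cell of the known hyperplane arrangement cut out by the vectors $U(d,\cdot)-U(d',\cdot)$. To upgrade this to the exact point I would sweep mixtures: along $\sigma_t=(1-t)\sigma+t\sigma'$ the vector $P(\bm S\mid\pa_D;\sigma_t)$ is affine in $t$, so the values of $t$ at which the optimal decision changes are precisely those at which this segment crosses an arrangement hyperplane, each crossing giving a linear equation in the unknown probabilities with known coefficients; sweeping between the null intervention and hard interventions $\doo(V_i=v_i)$, whose effect on the $i$-th coordinate is pinned, calibrates these equations to absolute values. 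For almost all CIDs---generic $U$, so the arrangement is in general position and the decision space resolves the simplex over $\bm S$ (automatic in prediction-type tasks, otherwise part of the non-degeneracy conditions); generic $P$, so no accidental utility ties arise and the resulting linear systems have full rank---this determines $P(\bm S\mid\pa_D;\sigma)$ exactly for every $\sigma\in\Sigma$, in particular for every single-variable hard and stochastic intervention.

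\textbf{Step 3: reconstructing $G$ and $P$ on $\Anc_U$.} By Assumption 1 every variable in $\Anc_U$ lies on a directed path into $U$ that passes through some $S_k\in\bm S$, hence is an ancestor of $S_k$; so $V_i\in\Anc_U$ iff some $\doo(V_i=v_i)$ changes the recovered distribution of some $S_k$ (after discarding the non-generic case where an intervention upstream only of $\Pa_D$ perturbs $P(\bm S\mid\pa_D)$ through collider-conditioning at $\Pa_D$), which identifies the vertex set $\Anc_U\cap\bm C$. Since a hard intervention severs a node from its parents, comparing the recovered distributions of $\bm S$ under $\doo(V_i=v_i)$ over all $i$ and $v_i$, together with the null-domain distribution, identifies every ancestral relation among $\Anc_U\cap\bm C$; with $P$ faithful (true for almost all CIDs), single-variable interventional data orients every edge, so $G$ restricted to $\Anc_U$ is recovered exactly. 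Finally, from this DAG and the null-domain and single-intervention distributions of $\bm S$, each $P(V_i\mid\Pa_{V_i})$ for $V_i\in\Anc_U$ is identified by the truncated factorisation formula, giving the joint $P$ over $\Anc_U$.

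\textbf{Main obstacle.} The crux is Step 2: extracting exact real-valued interventional distributions from the purely combinatorial content of optimal policies, and isolating exactly which non-degeneracy conditions on $(G,P,U)$---faithfulness, absence of utility ties, sufficiency of the decision space, full rank of the mixture-sweep systems---carve out the ``almost all'' set. A secondary difficulty is that mixtures of local interventions never yield a joint intervention on two variables, and only $\bm S$ (and $\Pa_D$) is effectively observed, so the discovery argument of Step 3 must work with single-variable interventions whose effects reach $U$ only through $\bm S$---which is exactly what Assumption 1 makes possible.
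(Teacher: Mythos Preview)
Your Step 2 mixture-sweep idea is exactly the paper's key lemma: mix a target $\sigma$ with a reference $\sigma'$, locate the $q$ at which the $\argmax$ flips, and read off a linear functional of the post-shift distribution. Where you diverge from the paper, and where the proposal breaks, is in what you try to extract from this and in your reading of $\Sigma$.

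\textbf{The main gap.} You aim to recover the \emph{entire} vector $P(\bm S\mid\pa_D;\sigma)$ from crossing points of the utility-difference arrangement. But with a binary decision there is a single hyperplane $U(0,\cdot)-U(1,\cdot)$, so any segment yields at most one crossing and hence one scalar---the value of $\sum_{\bm s}P(\bm s\mid\pa_D;\sigma)\bigl(U(0,\bm s)-U(1,\bm s)\bigr)$. Varying the endpoint $\sigma'$ changes the calibration but not the functional, so you never get a second independent equation. Your ``sufficiency of the decision space'' caveat cannot be absorbed into ``almost all'': $|D|\geq|\dom(\bm S)|$ is a structural constraint on the CID, not a measure-zero exclusion on $(P,U)$, and the theorem is meant to hold for binary decisions (the paper's worked example and experiments are binary). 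With only this single functional available, your Step 3---which needs the full $\bm S$-marginal under each single-variable intervention to back out intermediate CPDs via the truncated factorisation---has nothing to stand on.

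\textbf{What the paper does instead.} The paper never tries to recover $P(\bm S\mid\pa_D;\sigma)$. It takes $\Sigma$ to include \emph{joint} local interventions on several variables simultaneously (a hard intervention $\doo(\bm C=\bm c)$ is local in the sense of Definition~\ref{def: local interventions}: it adds no new edges), contrary to your ``secondary difficulty''. The reference $\sigma'$ is a hard intervention on \emph{all} chance variables, so its endpoint is a known simplex vertex, and the target $\sigma$ is a leave-one-out hard intervention $\doo(\bm C\setminus\{C_i\}=\bm c)$. Under this $\sigma$ the only randomness left is $P(c_i\mid\pa_i)$, so the single linear functional recovered in Step 2 already isolates one CPD parameter. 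Iterating outward from $\Pa_D$ and $\Pa_U$ (leaving out two, then three variables along a directed path, and using a non-hard local map $f$ to peel off one term at a time) recovers every $P(c_j\mid\pa_j)$ directly; the DAG then falls out by faithfulness. In short: you restrict $\Sigma$ too much, which forces you to try to reconstruct a whole distribution from a single scalar; the paper exploits joint leave-one-out interventions so that a single scalar is all that is ever needed.
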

The parameters $P(v_i \mid \pa_i)$, $U(\pa_U)$ of the underlying CBN define a parameter space and the condition \emph{for almost all CIDs} means that the subset of the parameter space for which the \Cref{theorem: main} does not hold is Lebesgue measure zero (see \Cref{appendix: parameterisation} for discussion). 
This condition is necessary  because there exist finely-tuned environments for which the CBN cannot be identified given the agent's policy due to variables $X\in \Anc_U$ that do not affect the expected utility.
For example consider $X\rightarrow Y \rightarrow U$,  $Y = \mathcal N(0, x)$ and $U = D + Y$, then changing $X$ can only change the variance of $U$ while leaving its expected value (and hence the optimal policy) constant.
However, this only occurs for very specific choices of the parameters $P$ and $U$. 

In \Cref{appendix: simplified proof} we give a simplified overview of the proof with a worked example. 
We assume access to an oracle for optimal policies $\pi^*_\sigma$ for any given local intervention on $\bm C$. 
Note, this assumes the agent is robust to distributional shifts on a causally sufficient set of variables $\bm C$, not that the set of variables the agent observes is causally sufficient. 
We devise an algorithm that queries this oracle with different mixtures of local interventions and identifies the mixtures for which the optimal policies changes. 
We then show that these critical mixtures identify the parameters of the CBN, specifying both the graph $G(\Anc_U)$ and the joint distribution $P(\Anc_U)$.
\subsection{Relaxing the assumption of optimality}\label{section: approximate}

We now relax the assumption of optimality, considering the case where the policies $\pi_\sigma$ satisfy a regret bound $\mathbb E^{\pi_\sigma}[U] \geq \mathbb E^{\pi^*_\sigma}[U] - \delta$. 
We show that for $\delta > 0$ we can recover an approximation of the environment CBN, with error that grows linearly in $\delta$ for $\delta \ll \mathbb E^{\pi^*}[U]$. 

\begin{restatable}{theorem}{approxtheorem}\label{theorem: main approx}
For almost all CIDs $M = (G, P)$ satisfying \Cref{assumption: passive,assumption: environment-dependent}, we can identify an approximate causal model $M' = (P', G')$ given $\{\pi_\sigma (d\mid \pa_D)\}_{\sigma \in \Sigma}$ where $\mathbb E^{\pi_\sigma}[U] \geq \mathbb E^{\pi^*_\sigma}[U] - \delta$ and $\Sigma$ is the set of mixtures of local interventions.
The parameters of $M'$ satisfy $\left| P' (v_i \mid \pa_i) - P(v_i \mid \pa_i)\right|\leq  \gamma (\delta)$ $\forall$ $V_i \in \bm V$ where $\gamma (0) = 0$ and $\gamma (\delta)$ grows linearly in $\delta$ for small regret $\delta \ll \mathbb E^{\pi^*}[U]$. Proof in \Cref{appendix: approximate}.
\end{restatable}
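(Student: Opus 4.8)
The plan is to rerun the reconstruction procedure underlying \Cref{theorem: main} on the $\delta$-suboptimal policies $\{\pi_\sigma\}_{\sigma\in\Sigma}$ and to carry the suboptimality $\delta$ through every step of that procedure. Recall the exact case: one sweeps one-parameter families of mixed local interventions $\sigma_t=(1-t)\sigma_0+t\sigma_1$, queries the oracle for the optimal policy along the sweep, and records the \emph{critical} values $t^\star\in[0,1]$ at which the optimal decision at some input $\pa_D$ switches. These critical coefficients are algebraic functions of the CBN parameters $P(v_i\mid\pa_i)$ and $U(\pa_U)$, and \Cref{theorem: main} shows that, off a measure-zero set of parameters, enough critical values are observable to invert for the graph $G(\Anc_U)$ and the distribution $P(\Anc_U)$. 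So it suffices to show (i) the critical values \emph{apparent} from $\pi_\sigma$ lie within $O(\delta)$ of the true ones, and (ii) the inversion map from critical values to $(G,P)$ is, on the generic parameter set, locally Lipschitz with a well-defined first-order expansion.

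For (i), I would first exploit \Cref{assumption: passive}: since the decision is unmediated, for every domain $\sigma$ we have $\EE^{\pi}[U]=\sum_{\pa_D}P(\pa_D;\sigma)\sum_d\pi(d\mid\pa_D)\,\EE[U\mid\pa_D,d;\sigma]$, and the optimal policy maximises each inner sum separately. Regret therefore decomposes into a sum of non-negative per-input terms, so at every input with $P(\pa_D;\sigma)>0$ the local regret is at most $\delta/P(\pa_D;\sigma)$, which is $O(\delta)$ once $P(\pa_D;\sigma)$ is bounded below uniformly over the compact family of mixtures used by the algorithm (a genericity condition). Fixing an input $\pa_D$ and two competing decisions whose conditional values cross at $t^\star$, write $g(t)=\EE[U\mid\pa_D,d_1;\sigma_t]-\EE[U\mid\pa_D,d_2;\sigma_t]$, a low-degree rational function of $t$ with $g(t^\star)=0$; a policy with local regret $\le c\delta$ can disagree with $\pi^\star$ at $\pa_D$ only where $|g(t)|\le c\delta$, i.e.\ on an interval around $t^\star$ of width $\le 2c\delta/|g'(t^\star)|$. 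Generically $|g'(t^\star)|$ is bounded away from zero; the set of parameters for which some crossing is tangential, or some $P(\pa_D;\sigma)$ degenerates, is Lebesgue-null and is absorbed into the ``almost all'' exclusion alongside the bad set of \Cref{theorem: main}. So each apparent critical value is within $O(\delta)$ of $t^\star$.

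For (ii), I would compose these $O(\delta)$ errors with the algebraic map that \Cref{theorem: main} uses to solve for the parameters. On the generic set this map is smooth with nonsingular derivative, so a first-order expansion yields $|P'(v_i\mid\pa_i)-P(v_i\mid\pa_i)|\le\gamma(\delta)$ with $\gamma$ linear in $\delta$ up to $O(\delta^2)$ corrections that are negligible for $\delta\ll\EE^{\pi^*}[U]$, and $\gamma(0)=0$ recovers \Cref{theorem: main}. The graph $G'$ is read off from discrete features of the reconstruction (which variables are strategically relevant, which critical points appear, the induced ancestral ordering), so it is recovered \emph{exactly} as soon as $\delta$ is below the threshold at which an $O(\delta)$ perturbation could flip one of these discrete tests — giving $G'=G(\Anc_U)$ for small enough $\delta$.

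The hard part will be making the estimates in (i)--(ii) \emph{uniform} over the continuum of mixtures the algorithm interrogates: one must simultaneously lower-bound $P(\pa_D;\sigma)$ and the crossing slopes $|g'(t^\star)|$, show that the parameter set where either degenerates is still measure zero (so it can be merged with the exclusion in \Cref{theorem: main}), and control how the per-step $O(\delta)$ errors accumulate when the variables of $\Anc_U$ are peeled off sequentially rather than recovered at once. A secondary subtlety is that a $\delta$-suboptimal policy may randomise between $d_1$ and $d_2$ inside the width-$O(\delta)$ transition window, so ``the critical value'' must be defined robustly (e.g.\ as the midpoint of the transition region), and one must check this choice feeds correctly into the algebraic inversion with all Lipschitz constants tracked — that bookkeeping, rather than any single inequality, is the main burden.
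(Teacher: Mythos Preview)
Your proposal is correct and follows essentially the same strategy as the paper: rerun the reconstruction of \Cref{theorem: main} on the $\delta$-suboptimal oracle, show the recovered critical mixture coefficients lie within $O(\delta)$ of the true ones, and propagate this through the algebraic inversion to obtain linear-in-$\delta$ parameter errors. Two execution differences are worth noting. First, rather than decomposing regret per input and requiring $P(\pa_D;\sigma)$ to be bounded below, the paper multiplies the regret inequality through by $P(\pa_D;\sigma)$ and then relaxes by taking $P(\pa_D;\sigma)\to 1$; this yields slightly looser but unconditional bounds on $q_{\text{crit}}$ (their \Cref{lemma: main approx}) and sidesteps the uniformity concern you flag as the hard part. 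Second, for the graph the paper does not argue $G'=G$ below a threshold; it instead declares an edge $C_j\to C_i$ only when the estimated conditionals have non-overlapping error intervals for two interventional values of $c_j$, giving $G'\subseteq G$ for every $\delta$ with only edges of strength $O(\delta)$ omitted---a weaker but more uniform conclusion than yours.
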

The worst-case error bounds $\gamma (\delta)$ for the parameter errors are detailed in \Cref{appendix: approximate}.
For $\delta > 0$ it may not be possible to identify $G$ perfectly as some weak causal relations cannot be resolved due to these error bounds.
We describe in \Cref{appendix: approximate} how we can learn a sub-graph $G'\subseteq G$ that may exclude directed edges corresponding to weak causal relations.

\Cref{theorem: main approx} shows that we can learn a (sparse) approximate causal models of the data generating process from regret bounded policies under domain shifts, where the approximation becoming exact as $\delta \rightarrow 0$.
In \Cref{appendix: experiments} we demonstrate learning the underlying CBN from regret-bounded policies using simulated data for randomly generated CIDs similar to \Cref{fig:supervised_learning}, and explore how the accuracy of the approximate CBN scales with the regret bound (\Cref{fig:experiment_main}).

\begin{figure}[htbp]
\centering

\begin{subfigure}[b]{0.45\textwidth}
\centering
\includegraphics[scale = 0.15]{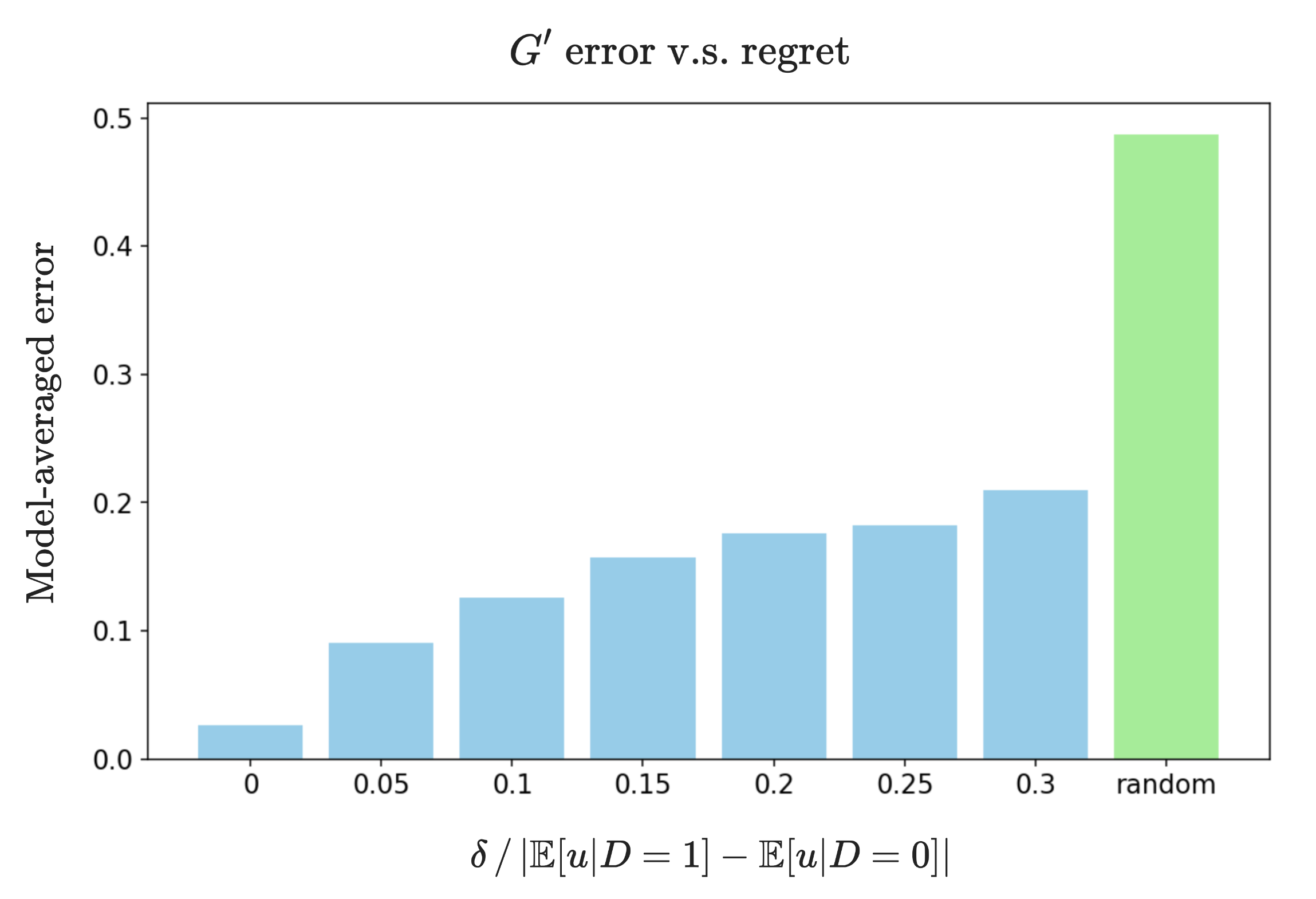}
\caption{Error rate for learned DAG v.s. regret bound}
\label{fig:sub1_main}
\end{subfigure}
\hfill
\begin{subfigure}[b]{0.45\textwidth}
\centering
\includegraphics[scale = 0.15]{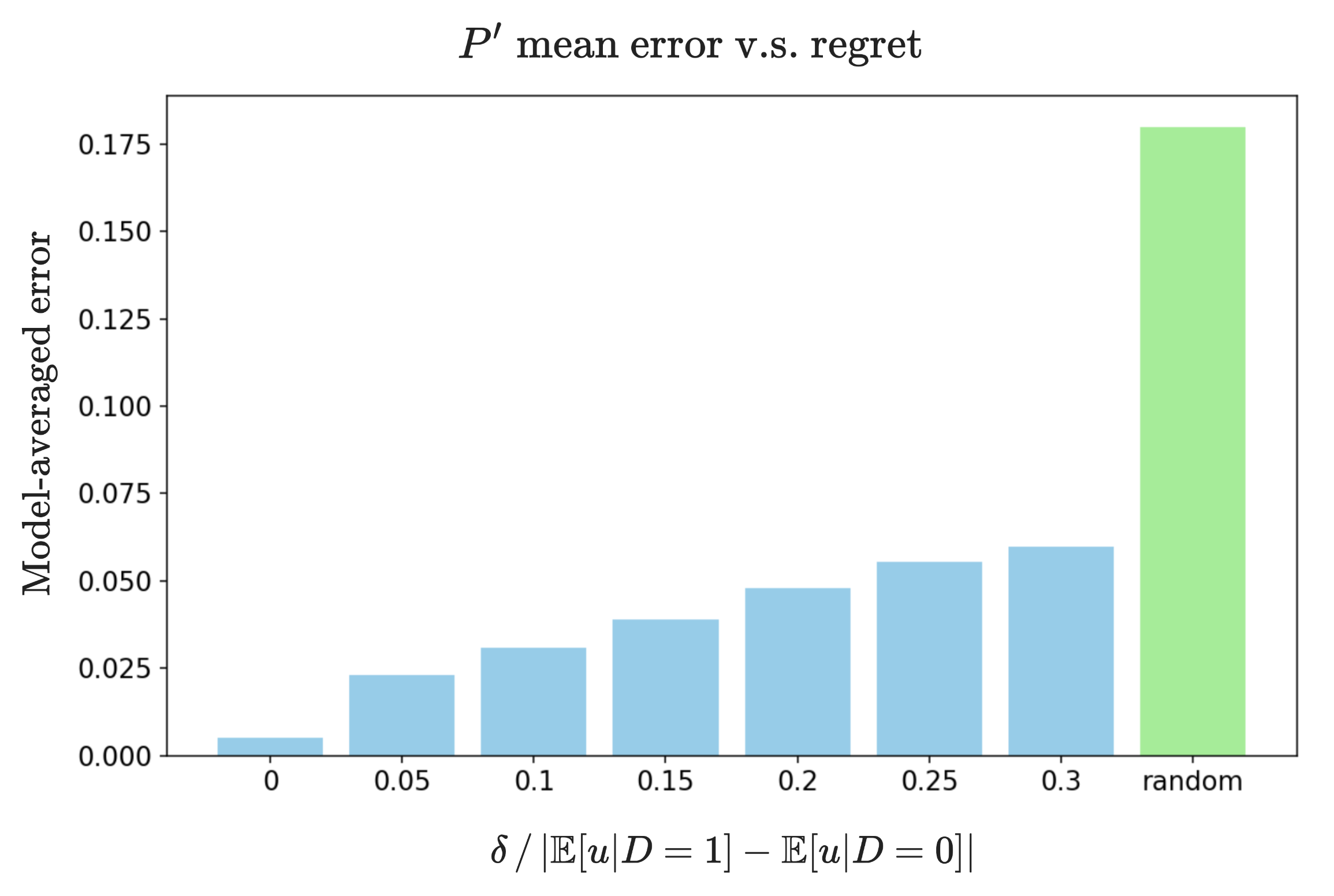}
\caption{Mean error for $P(x, y)$ v.s. regret bound}
\label{fig:sub2_main}
\end{subfigure}
\hfill
%\hspace{12mm}
\begin{minipage}[b]{0.45\textwidth}
\captionof{figure}{Comparing the model-average error rates for a) the learned DAG $G'$ and b) learned joint distribution $P'(x, y)$, v.s. the (normalised) regret bound $\delta / \big|\mathbb E[u \mid D= 1] - \mathbb E[u \mid D = 0]\big|$. Average error taken over 1000 randomly generated environments with binary decision $D$ and two binary latent variables $X, Y$. 
Comparison to error rate for random guess (green)
See \Cref{appendix: experiments} for details.}
\label{fig:experiment_main}
\end{minipage}
\end{figure}
Finally, we prove sufficiency, i.e.\ that having an (approximate) causal model of the data generating process is sufficient to identify regret-bounded policies.
The result is well-known for the non-approximate case \citep{bareinboim2016causal}.
\pagebreak
\begin{restatable}{theorem}{modeltooracle}\label{theorem: CBN powerful}
Given the CBN $M=(P, G)$ that is causally sufficient we can identify optimal policies $\pi^*_\sigma (d\mid \pa_D)$ for any given $U$ where $\Pa_U \subseteq \bm C$ and for all soft interventions $\sigma$. 
Given an approximate causal model $M' = (P', G')$ for which $\left| P' (v_i \mid \pa_i) - P(v_i \mid \pa_i)\right|\leq \epsilon \ll 1$, we can identify regret-bounded policies where the regret $\delta$ grows linearly in $\epsilon$. Proof in \Cref{appendix: cbn proof}.
\end{restatable}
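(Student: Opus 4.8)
This is the converse \emph{sufficiency} direction, and the plan is to exploit that causal sufficiency makes every interventional query computable from $(P,G)$, so the optimal policy in any domain is an explicit closed-form functional of the model. Write $\Pa_U=\{D\}\cup\bm S$ with $\bm S\subseteq\bm C$. For a soft intervention $\sigma$ on the chance variables, the post-intervention law $P(\bm c;\sigma)$ is obtained from the modified factorisation (\Cref{def: local interventions,def: mixtures of interventions}): knowing $G$ tells us which conditionals to replace and lets us verify that no cycle is created, and causal sufficiency of $\bm C$ validates the truncated factorisation (no hidden confounding). Under \Cref{assumption: passive} the decision influences nothing in $\bm C$, so $P(\bm c;\sigma)$ does not depend on $\pi$, and for every policy
\[
\mathbb E^\pi_\sigma[U]=\sum_{\pa_D}P(\pa_D;\sigma)\sum_{d}\pi(d\mid\pa_D)\,Q_\sigma(d,\pa_D),\qquad Q_\sigma(d,\pa_D):=\sum_{\bm s}P(\bm s\mid\pa_D;\sigma)\,U(\bm s,d).
\]
Every quantity on the right is determined by $(P,G,U,\sigma)$, so the deterministic policy placing all mass on $\argmax_d Q_\sigma(d,\pa_D)$ is optimal in domain $\sigma$; this identifies $\pi^*_\sigma$ for all soft $\sigma$ and recovers the stated exact-case (transportability) result.

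\emph{Approximate case.} From $M'=(P',G')$ form the proxy $\hat Q_\sigma(d,\pa_D):=\sum_{\bm s}P'(\bm s\mid\pa_D;\sigma)\,U(\bm s,d)$ and let $\pi'_\sigma$ be greedy in $\hat Q_\sigma$ (ties, and the case $P'(\pa_D;\sigma)=0$, broken arbitrarily). Since greedily maximising a proxy within $\eta$ of $Q_\sigma$ loses at most $2\eta$ against the true maximiser,
\[
\delta_\sigma=\mathbb E^{\pi^*_\sigma}_\sigma[U]-\mathbb E^{\pi'_\sigma}_\sigma[U]\le 2\sum_{\pa_D}P(\pa_D;\sigma)\,\max_d\bigl|Q_\sigma(d,\pa_D)-\hat Q_\sigma(d,\pa_D)\bigr|.
\]
Using $\bigl|Q_\sigma-\hat Q_\sigma\bigr|\le\|U\|_\infty\sum_{\bm s}\bigl|P(\bm s\mid\pa_D;\sigma)-P'(\bm s\mid\pa_D;\sigma)\bigr|$ together with the elementary identity $P(\pa_D;\sigma)\,|p(\bm s)-q(\bm s)|\le|P(\pa_D;\sigma)\,p(\bm s)-P'(\pa_D;\sigma)\,q(\bm s)|+|P(\pa_D;\sigma)-P'(\pa_D;\sigma)|\,q(\bm s)$ (with $p=P(\cdot\mid\pa_D;\sigma)$, $q=P'(\cdot\mid\pa_D;\sigma)$), the weighted conditional errors collapse, after summing over $\pa_D$, into total-variation errors on the joints, giving $\delta_\sigma\le 4\|U\|_\infty\,\|P(\cdot\,;\sigma)-P'(\cdot\,;\sigma)\|_1$. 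A telescoping argument along a topological order of $G$ then bounds $\|P(\cdot\,;\sigma)-P'(\cdot\,;\sigma)\|_1\le\sum_{V_i}\|P(\cdot\mid\pa_i)-P'(\cdot\mid\pa_i)\|_1\le\bigl(\sum_{V_i}|\dom(V_i)|\bigr)\epsilon$, uniformly in $\sigma$ (the intervened factors add no error beyond these per-factor bounds). Hence $\delta_\sigma\le C\epsilon$ with $C=4\|U\|_\infty\sum_i|\dom(V_i)|$ independent of $\epsilon$ and of $\sigma$, which is the claimed linear growth, with $C\epsilon\to 0$ as $\epsilon\to 0$.

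\emph{Main obstacle.} The one delicate step is conditioning: a direct bound on $\max_d|Q_\sigma-\hat Q_\sigma|$ carries a factor $1/P(\pa_D;\sigma)$ that blows up over domains in which some observation becomes rare, and $\hat Q_\sigma(\cdot,\pa_D)$ is even undefined when $P'(\pa_D;\sigma)=0$. The fix is to never detach the conditional error from its weight $P(\pa_D;\sigma)$: the displayed identity rewrites $P(\pa_D;\sigma)\,\|P(\cdot\mid\pa_D;\sigma)-P'(\cdot\mid\pa_D;\sigma)\|_1$ using differences of \emph{joint} probabilities, which stay controlled by $\|P(\cdot\,;\sigma)-P'(\cdot\,;\sigma)\|_1$ no matter how small the conditioning event is; and if $P'(\pa_D;\sigma)=0<P(\pa_D;\sigma)$ then $P(\pa_D;\sigma)\le\|P(\cdot\,;\sigma)-P'(\cdot\,;\sigma)\|_1$, so that observation contributes only $O(\epsilon)$ to the regret. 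A minor bookkeeping point: \Cref{theorem: main approx} may return $G'\subsetneq G$, in which case $\pa_i$ in the hypothesis denotes $G'$-parents and every edge of $G$ dropped from $G'$ is forced to have strength $\le\epsilon$, so reinstating $G$ in the telescoping bound perturbs $C$ by at most an additive $O(1)$ per dropped edge.
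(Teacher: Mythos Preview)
Your proof is correct, and for the exact case it is essentially the same as the paper's: compute $Q_\sigma(d,\pa_D)=\mathbb E[u\mid d,\pa_D;\sigma]$ from $(P,G)$ via the (modified) factorisation and return the greedy policy.

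For the approximate case your route differs from the paper's. The paper does a first-order perturbation: write $P'(v_i\mid\pa_i)=P(v_i\mid\pa_i)+c_i\epsilon$, expand the product/ratio defining $P'(\bm z\mid d,\pa_D;\sigma)$ to first order in $\epsilon$, and conclude $\mathbb E_{P'}[U\mid d,\pa_D;\sigma]=\mathbb E_P[U\mid d,\pa_D;\sigma]+\epsilon\,g(\theta)+\mathcal O(\epsilon^2)$, so greedy in the proxy incurs $\mathcal O(\epsilon)$ regret. The leading constant $g(\theta)$ is a rational function of the model parameters (it inherits factors like $1/P(c_i\mid\pa_i)$ from the expansion), so the paper's bound is model-dependent and, in particular, does not engage with the small-$P(\pa_D;\sigma)$ issue you single out. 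Your argument instead controls the regret by the total-variation error on the post-intervention joint and then telescopes along a topological order, yielding $\delta_\sigma\le 4\|U\|_\infty\bigl(\sum_i|\dom(V_i)|\bigr)\epsilon$ with a constant that is explicit, uniform over all soft $\sigma$, and independent of the particular $P$. This is a genuinely more robust proof of the same statement: it subsumes the paper's $\epsilon\ll 1$ asymptotic and additionally handles the conditioning blowup cleanly via the weighted-conditional-to-joint identity.
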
 %
Together, \Cref{theorem: main approx,theorem: CBN powerful} imply that learning an approximate causal model of the data generating process is necessary and sufficient for learning regret-bounded policies under local interventions. 

\subsection{Interpretation}\label{section: actual interpretation}

In this section we interpret \Cref{theorem: main,theorem: main approx,theorem: CBN powerful} through three lenses; agents, transfer learning and causal inference. 
First, we derive our result that any agent capable of adapting to local domain shifts must have learned a causal model of the data generating process. 

\paragraph{Agents} are adaptive goal-directed systems, meaning they choose actions to achieve some desired outcome, and would change their behaviour if they knew the consequences of their actions had changed \citep{dennett1989intentional}, i.e. following a domain shift \citep{kenton2023discovering}.
For example, a firm sets prices to maximise profit, and changes its pricing to adapt to changes in supply and demand.

We define adaptation as the ability to competently pursue goals (minimize regret) in a distributionally shifted environment.
Adaptation is achieved through a combination of generalisation---applying knowledge learned during training to the shifted environment---and through re-training in the shifted environment.
We are interested in the zero-shot setting \citep{kirk2023survey}, which describes powerful agents capable of adapting without re-training, i.e. using only knowledge of the environment and what change (shift) has occurred.  
Our aim is to determine precisely what knowledge of the environment is necessary to support this capability. 
Hence we focus on the simplified task of adapting to known distributional shifts, i.e. the agent conditions\footnote{$\sigma$ is equivalent to an environment index for (in-context) invariant risk minimization \citep{2309.09888,arjovsky2019invariant}, or the context variable for zero-shot reinforcement learning \citep{kirk2023survey}} their policy on $\sigma$, $\pi_\sigma = \pi (d \mid \pa_D, \sigma)$.
%This removes the need for the agent to infer what shift has occurred, separating the requisite knowledge of the environment from the requisite knowledge of the distributional shift. 
%This simplifies our analysis by removing the requirement for the agent to have sufficient data from the shifted domain to infer what shift has occurred, which in turn could be used by the agent to ...
Note this is strictly easier than adapting to unknown distributional shifts, and so any knowledge necessary for adapting to known distributional shifts is also necessary for adapting to unknown shifts\footnote{Any agent capable of returning a regret-bounded policy without input $\sigma$ can do so given $\sigma$ simply by discarding $\sigma$. Hence, we can trivially extend this agent's policy to include input $\sigma$ and the policy will still satisfy the regret bound, allowing \Cref{theorem: main approx} to be applied. Likewise, if the agent does condition their policy but on an inferred $\sigma'$ (see \citet{kirk2023survey} for a review of implementations), and this $\sigma'$-conditional policy satisfies a regret bound, we can apply \Cref{theorem: main approx}.}.

\begin{corollary}\label{corollary main}
Let $\pi(d\mid \pa_D, \sigma)$ be the policy of an agent that satisfies a regret bound $\mathbb E^{\pi(\sigma)}[U] \geq \mathbb E^{\pi^*}[U] - \delta$ for all local domain shifts $\sigma$, in an environment described by the CID $M = (G, P)$ which satisfies \Cref{assumption: passive,assumption: environment-dependent}.
By \Cref{theorem: main approx,theorem: CBN powerful}, $\pi$ is informationally equivalent to an approximation $M'$ of $M$, with $M' \rightarrow M$ smoothly as $\delta \rightarrow 0$.
\end{corollary}

\Cref{corollary main} follows immediately from \Cref{theorem: main approx} by replacing $\pi_\sigma(d\mid \pa_D) = \pi(d\mid \pa_D, \sigma)$. 
If $\pi_\sigma$ satisfies a tight regret bound for all local shifts $\sigma$, we can reconstruct the underlying CBN from the agent's policy alone (following the procedure in \Cref{appendix: main theorem}).
Therefore, any agent capable of generalising under known local domain shifts must have learned the underling CBN. 
Precisely, the agent has learned the policy $\pi ( d \mid \pa_D, \sigma)$, which is informationally equivalent to a causal model of the environment, as any associative or causal query that can be identified using the true causal model of the environment can be identified using $\pi ( d \mid \pa_D, \sigma)$.

\textit{Example:} Doctors are agents expected to make low regret decisions under a wide range of known distributional shifts, without re-training in the shifted environment. 
For example, consider the task of risk-stratifying patients based on their signs and medical history. 
The doctor may be transferred to a new ward where patients have received a treatment (known distributional shift) that has a stochastic effect on latent variables (mixed intervention) such as curing diseases and causing side effects. 
The doctor cannot re-train in this new domain, e.g. taking random decisions and observing outcomes. 
To be capable of this adaptivity, \Cref{theorem: main approx} implies the doctor must know a good approximation of the causal relations between the relevant latent variables---how the treatment affects diseases, how these diseases and their symptoms are causally related, and so on. 
Likewise, any medical AI that hopes to replicate this capability must have learned a similarly accurate causal model, and the better the agent's performance the more accurate its causal model must be.

\paragraph{Transfer learning.} In transfer learning \citep{zhuang2020comprehensive}, models are trained on a set of source domains and evaluated on held-out target domains where i) the data distribution differs from the source domains, and ii) the data available for training is restricted compared to the source domains \citep{wang2022generalizing}. 
For example, in unsupervised domain adaptation the learner is restricted to samples of the input features from the target domains $\pa_D \sim P(\Pa_D ; \sigma)$, whereas in domain generalisation typically no data from the target domain is available during training \citep{farahani2021brief}.

Let $\mathcal D_S$ denote the training data from the source domains and $\mathcal D_\sigma$ denote the training data available from a given target domain $\sigma$.
Let there exist a transfer learning algorithm that returns a policy $\pi_\sigma$ satisfying a regret bound for a given target domain $\sigma$, provided this training data. 
As $\pi_\sigma$ is a function of the training data, then by \Cref{theorem: main,theorem: main approx} the existence of this algorithm implies that we can identify the underlying CBN from $\mathcal D_S \cup \{\mathcal D_\sigma\}_{\sigma\in\Sigma}$. 
To see that this imparts non-trivial constraints on the existence of the transfer learning algorithm, we can consider the following simple example.

\textit{Example:} Consider the CID for the supervised learning task depicted in \Cref{fig:supervised_learning}.
%By \Cref{theorem: main}, if there exists a transfer learning algorithm that identifies an optimal policy under all domain and label shifts, then the underlying CBN can be identified from $\mathcal D_S \cup \{\mathcal D_\sigma\}_{\sigma\in\Sigma}$.
Let $\mathcal D_S = \{(x^i, y^i) \sim P(X, Y) \}_{i=1}^{n}$, so for sufficiently large $n$ the agent can learn the $P(X, Y)$ from $\mathcal D_S$.
However, $Y\rightarrow X$ must also be identifiable from the training data $\mathcal D_S \cup \{\mathcal D_\sigma\}_{\sigma\in\Sigma}$. 
In other words, the transfer learning problem contains a hidden causal discovery problem. 
If $\mathcal D_\sigma = \emptyset$ then $Y\rightarrow X$ must be identifiable from $P(x, y)$ alone, which is impossible unless the causal data generating process obeys additional assumptions (see for example \citealp{hoyer2008nonlinear}). 
If unlabelled features from the target domain are included in the training data $\mathcal D_\sigma = \{x^i \sim P(X ; \sigma) \}_{i=1}^{n_\sigma}$, $Y \rightarrow X$ can in principle be identified as $P(X ; \sigma_Y) \neq P(X)$.

\paragraph{Causal inference.}  \Cref{theorem: main} can also be interpreted purely in terms of causal inference. 
We can compare to the causal hierarchy theorem (CHT) \citep{bareinboim2022pearl}, which states that an oracle for L1 queries (observational) is almost always insufficient to evaluate all L2 queries (interventional).
Our \Cref{theorem: main} can be stated in an analogous way; an oracle for optimal policies under mixtures of local interventions $\Pi^*_\Sigma : \sigma \mapsto \pi^*(\sigma)$, can evaluate all L2 queries, which follows from the fact that the oracle identifies the underlying CBN which in turn identifies all L2 queries. 
Note $\Pi^*_\Sigma$ is a strict subset of L2, and we describe a subset of L2 as being \emph{L2-complete} if evaluating these queries is sufficient to evaluate all L2 queries. 
Hence \Cref{theorem: main} can be summarised as $\Pi^*_\Sigma$ is L2-complete. 
It would be interesting in future work to determine what other strict subsets of L2 are L2-complete, as identifying these queries is sufficient to identify all interventional queries.

Why is this surprising? 
Firstly, we may expect the optimal policies to encode a relatively small number of causal relations, as they can be computed from  $\mathbb E[ u \mid d, \pa_D ;\sigma]$, which describes the response of a single variable $U$ to intervention $\sigma$.
However, \Cref{theorem: main} shows that the optimal policies encode all causal and associative relations in $\Anc_U$, including causal relations between latent variables, for example $P(\bm Y_x)$ for any $\bm X, \bm Y\subseteq \Anc_U$.
Secondly, \Cref{theorem: main approx,theorem: CBN powerful} combined imply that learning to generalise under domain shifts is \textit{equivalent} to learning a causal model of the data generating process---problems that on the surface are conceptually distinct. 

\begin{figure}[h!]
    \centering
    \includegraphics[scale=0.32]{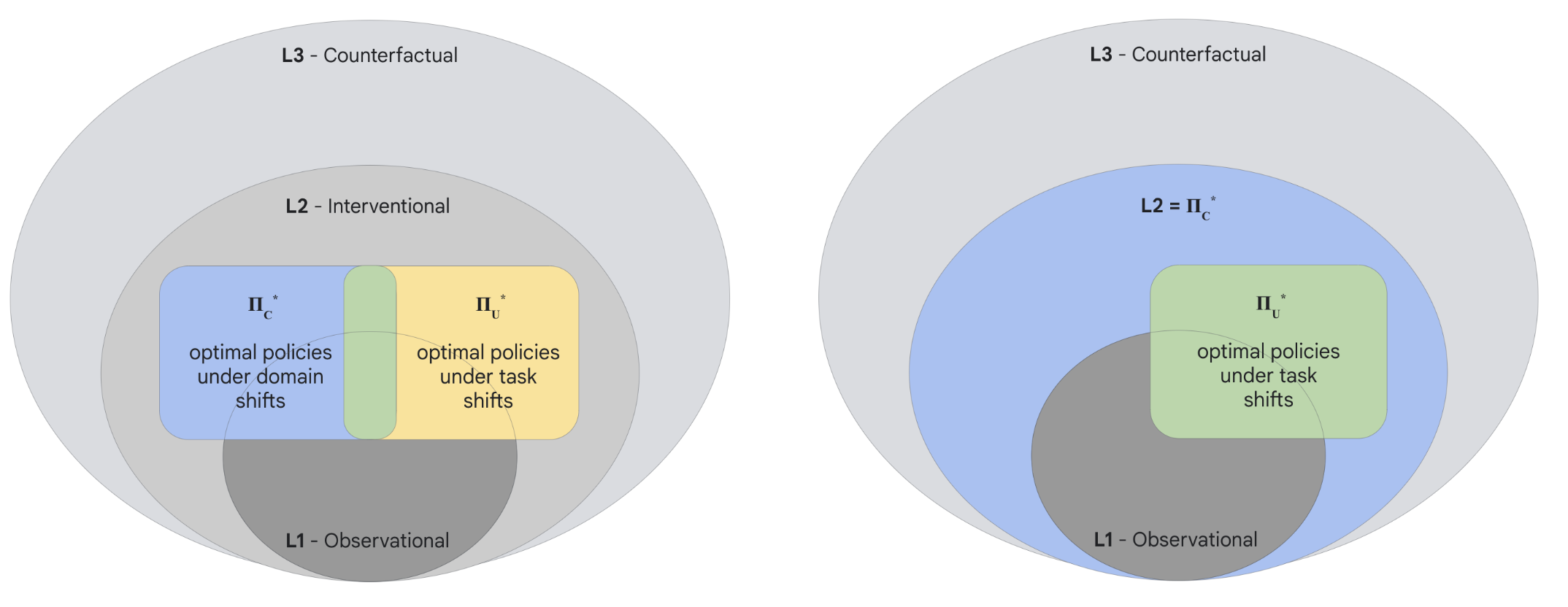}
    \caption{The left figure situates \cref{theorem: main} in Pearl's causal hierarchy \citep{bareinboim2022pearl}.
    L2 contains all interventional queries.
    L2 includes the sets of optimal policy queries under domain shifts $\bm \Pi^*_C$ and task shifts $\bm \Pi^*_U$, as optimal policies can always be found from a finite number of interventional queries.
    \Cref{theorem: main} surprisingly shows that $\bm \Pi^*_C$ contains L2, and therefore $\bm \Pi^*_C = \text{L2}$ (right). That is, learning optimal policies under all shifts for a single utility $U$ is sufficient to identify L2. This also implies that $\bm \Pi^*_U \subseteq \bm \Pi^*_C$, which implies that learning optimal policies for domain shifts is sufficient to identify optimal policies for task shifts. }
    \label{fig:venn diagram}
\end{figure}
\newpage
\section{Discussion}
\label{section: interpretation}
\vspace{-3mm}
Here we discuss the consequences for several fields and open questions, as well as limitations. 

\paragraph{Causal representation learning.} Causal representation learning (CRL) aims to learn representations of data that capture unknown causal structure \citep{scholkopf2021toward}, with the aim of exploiting causal invariances to achieve better generalisation across domains.
\Cref{theorem: main,theorem: main approx} show that any method that enables generalisation across many domains necessarily involves learning an (approximate) causal model of the data generating process---i.e. a causal representation. 
Hence, our results provide theoretical justification for CRL by showing it is necessary for strong robustness guarantees. 

\paragraph{Causal bounds on transfer learning.} As described in \Cref{section: actual interpretation}, \Cref{theorem: main,theorem: main approx} imply fundamental causal constraints on certain transfer learning tasks. 
For example in the supervised learning task depicted in \Cref{fig:supervised_learning}, identifying regret-bounded policies under covariate and label shifts requires learning the causal relations between features and labels.  
Causal discovery problems such as this are well understood in many settings \citep{vowels2022d}, and in general identifying this causal structure (e.g. that $Y\rightarrow X$ in \Cref{fig:cid example} a)) is impossible without interventional data and/or additional assumptions.
This connection allows us to convert (im)possibility results for causal discovery to (im)possibility results for transfer learning.
Future work could explore this for smaller sets of distributional shifts and derive more general causal bounds on transfer learning. 

\paragraph{Good regulator theorem.} The good regulator theorem is often interpreted as saying that any good controller of a system must have a model of that system \citep{conant1970every}. However, some imagination is needed to take this lesson from the actual theorem, which technically only states that there exists an optimal regulator that is a deterministic function of the state of the system (which could be trivial, \cite{Wentworth2021goodregulator}). 
Our theorem less ambiguously states that any robust agent must have learned an (approximate) causal model of the environment, as described in \Cref{section: actual interpretation}. It can therefore be interpreted as a more precise, causal good regulator theorem. 

\paragraph{Emergent capabilities.} %\Cref{theorem: main,theorem: main approx} imply agents must learn a causal model of the data generating process to optimise an objective function $U$ across many domains. 
Causal models enable a kind of general competency---an agent can use a causal model of its environment to optimise for any given objective function $U(\Pa_{U}\subseteq \bm V)$ without additional data (\Cref{theorem: CBN powerful}).
This could explain how general competence can arise from narrow training objectives \citep{brown2020language,silver2021reward}.
By \Cref{theorem: main,theorem: main approx}, agents trained to maximise reward across many environments are incentivized to learn a causal world model (as they cannot generalise without one), which can in turn be used to solve any other decision task in the same environment (\cref{theorem: CBN powerful}).
This incentive does not imply that training an agent with a simple reward signal is sufficient to learn causal world models. 
E.g. it will still be impossible for an agent to learn a causal model (and therefore to generalise) if the model is not identifiable from its training data.  
The question is then if current methods and training schemes are sufficient for learning causal world models.
Early results suggest that transformer models can learn world models capable of out-of-distribution prediction (\citealp{li2022emergent}, see \Cref{sec: conclusion} for discussion).
While foundation models are capable of achieving state of the art accuracy on causal reasoning benchmarks \citep{kiciman2023causal}, how they achieve this (and if it constitutes bona fide causal reasoning) is debated \citep{zevcevic2023causal}.

\paragraph{Causal discovery.} \Cref{theorem: main,theorem: main approx} involve learning the causal structure of the environment by observing the agent's policy under interventions.
It is perhaps surprising that the response of this single variable to interventions is sufficient to identify all associative and causal relations in $\Anc_U$. 
Typically, causal discovery algorithms involve measuring the response of many variables to interventions \citep{vowels2022d}.
Also, many causal discovery algorithms assume independent causal mechanisms \citep{scholkopf2021toward}, which is equivalent to assuming no agents are present in the data generating process \citep{kenton2023discovering}.
However, our results suggest that agents could be powerful resources for causal discovery. 
In \Cref{appendix: simplified proof} we use the proof of \Cref{theorem: main approx} to derive a causal discovery algorithm for learning causal structure over latents, and test it on synthetic data. 

\paragraph{No competence without understanding.} 
Causal models are fundamental to how humans understand and explain the world \citep{gopnik2007causal,pearl2018book}. 
Increasingly, deep learning models are used to predict and control complex systems we do not yet fully understand, such as inertially confined plasmas \citep{degrave2022magnetic} and biomoloecular systems \citep{abramson2024accurate}. These models arguably offer a shortcut to competence without understanding---solving problems without needing to develop richer models of the underlying systems, or understand how and why the solutions work. 
For example, AlphaFold accurately predicts protein folding but was found not to improve understanding of the underlying biochemical processes \citep{outeiral2022current}.
It has been argued that a reliance on black-box models could greatly widen the gap between capabilities and understanding \citep{pasquale2015black}.
However, our result points to a potential solution. 
If a controller is sufficiently capable and robust, we can always extract an interpretable causal model of the system it is controlling.
There is no robust control without `understanding', in the sense of learning a causal model of the underlying system. 
Future work could explore extending our results to develop efficient algorithms for eliciting causal models from robust agents. 

\paragraph{Applicability of causal methods.} Causal models have been used to formally define concepts such as intent \citep{halpern2018towards,ward2024reasons}, harm \citep{richens2022counterfactual}, deception \citep{ward2023defining}, manipulation \citep{ward2023honesty}
and incentives \citep{everitt2021agent}, and are required for approaches to explainability \citep{wachter2017counterfactual} and fairness \citep{kusner2017counterfactual}. 
Methods for designing safe and ethical AI systems that build on these definitions require causal models of the data generating process, which are typically hard to learn, leading some to doubt their practicality \citep{fawkes2022selection,rahmattalabi2022promises}. 
However, our results show that sufficiently capable agents must have learned a causal world model capable of supporting these methods, and demonstrate that these world models can be elicited from the agent.

\paragraph{Limitations.} \Cref{theorem: main,theorem: main approx} require agents to be robust to a large set of domain shifts (local interventions on all environment variables).
\Cref{theorem: main approx} shows that loosening regret bounds results in some causal relations being unidentifiable from the agent's policy.
Hence, we expect it is still possible to learn some casual knowledge of the environment from agents that are robust to a smaller set of domain shifts, albeit less complete that the full underlying CBN.
Finally, our results only apply to unmediated decision tasks (Assumption 1).
We expect \Cref{theorem: main,theorem: main approx} can be extended to active decision tasks, as Assumption 1 does not play a major role beyond simplifying the proofs.
%Finally, to derive our results it was necessary to restrict to environments with finite-dimensional variables.
\vspace{-2mm}
\section{Related work}\label{section: related work}
\vspace{-2mm}
Several recent empirical works have explored if deep learning models learn `surface statistics' (e.g. correlations between inputs and outputs) or learn internal representations of the world \citep{mcgrath2022acquisition,abdou2021can,li2022emergent,2310.02207}.
Our results offer some theoretical clarity to this discussion, tying an agents performance to the fidelity its world model, and showing that going beyond `surface statistics' to learning causal relations is fundamentally necessary for robustness. 
One study in particular  \citep{li2022emergent} found that a GPT model trained to predict legal next moves in the board game Othello learned a linear representation of the board
state \citep{nanda_othello_2023}. 
Further, this internal representation of the board state could be changed by intervening on the intermediate activations, with the model updating its predictions consistent with the intervention, including interventions that take the board state outside of the training distribution.
This indicates that the network is learning and utilising a representation of the data generating process that can support out-of-distribution generalisation under interventions---much like a causal model.

The problem of evaluating policies under distributional shifts has been studied extensively in causal transportability (CT) theory \citep{bareinboim2016causal,bellot2022partial}. 
CT aims to provide necessary and sufficient conditions for policy evaluation under known domain shifts when all assumptions on the data generating process (i.e. inductive biases) can be expressed as constraints on causal structure \citep{bareinboim2016causal}.
However, deep learning algorithms can exploit a much larger set of inductive biases \citep{neyshabur2014search,battaglia2018relational,rahaman2019spectral,goyal2022inductive} which in many real-world tasks may be sufficient to identify low regret policies without requiring causal knowledge.
Thus, CT does not imply that agents must learn causal models in order to generalise unless we assume agents only use causal assumptions to begin with, which would be proof by assumption. 
See \Cref{appendix: related work} for further discussion.

A similar result to \Cref{theorem: main,theorem: main approx} is the causal hierarchy theorem (CHT) \citep{bareinboim2022pearl,ibeling2021topological}, which shows that observational data is almost always \textit{insufficient} for identifying all causal relations between environment variables, whereas our results state that the set of optimal policies is almost always sufficient to identify all causal relations. In \Cref{section: actual interpretation} we discuss the similarities between these theorems, and in \Cref{appendix: related work} we discuss their differences.

\section{Conclusion }
\label{sec: conclusion}
\vspace{-2mm}
Causal reasoning is foundational to human intelligence, and has been conjectured to be necessary for achieving human level AI \citep{pearl2019seven}.
In recent years, this conjecture has been challenged by the development of artificial agents capable of generalising to new tasks and domains without explicitly learning or reasoning on causal models.
And while the necessity of causal models for solving causal inference tasks has been established \citep{bareinboim2022pearl}, their role in decision tasks such as classification and reinforcement learning is less clear.

We have resolved this conjecture in a model-independent way, showing that any agent capable of robustly solving a decision task must have learned a causal model of the data generating process, regardless of how the agent is trained or the details of its architecture. 
This hints at an even deeper connection between causality and general intelligence, as this causal model can be used to find policies that optimise any given objective function over the environment variables.
%Conversely, it shows that causal models can be learned simply by maximising utility (or minimising loss) across many environments.
By establishing a formal connection between causality and generalisation, our results show that causal world models are a necessary ingredient for robust and general AI.

\paragraph{Acknowledgements.} We would like to thank Alexis Bellot, Damiano Fornasiere, Pietro Greiner, James Fox, Matt MacDermott, David Reber, David Watson and Philip Bachman for their helpful discussions and comments on the manuscript. 

\bibliography{iclr2024_conference}
\bibliographystyle{iclr2024_conference}
\newpage
\appendix

\renewcommand{\thesection}{\Alph{section}}
\renewcommand{\thesubsection}{\Alph{section}.\arabic{subsection}}

\section{Preliminaries}

\subsection{Setup and assumptions}\label{appendix: setup}

The environment is described by a set of random variables $\bm C = \{C_1, C_2, \ldots, C_N\}$, which in combination with the decision $D$ and utility nodes $U$ define the state space for the CID $\bm V = \bm C \cup \{D, U\}$. 
In out notation individual variables $C_i \in \bm C$ are given indexes, whereas set of variables are indexless and bold, and we use $\bm V = \bm v$ as short hand for the joint state of the variables in a set $\bm C$.
The joint probability distribution $P(\bm C= \bm c, D = d, U = u)$ describes the statistical relations between environment variables.
Bayesian networks factorise joint probability distributions according to a graph $G$ \citep{pearl2009causality}.

\bayesiannetwork*

The distributions and statistical relationships between variables may change as a result of external \emph{interventions} applied to a system.
\emph{Hard} interventions set a subset $\bm{C}' \subseteq \bm{C}$ of the variables to particular values $\bm{c}'$, denoted $\doo(\bm{C}'=\bm{c}')$ or $\doo(\bm{c}')$.
Naively, one joint probability distribution $P_{\doo(\bm{c}')}$ would be needed to describe the updated relationship under each possible intervention $\doo(\bm{c}')$.
Fortunately, all interventional distributions can be derived from a single Bayesian network, if $G$ matches the causal structure of the environment (i.e.\ has an edge $V_i\to V_j$ whenever an intervention on $V_i$ directly influences the value of another variable $Y$, and lacks unmodeled confounders; \citealp{spirtes2000causation,pearl2009causality}).
When this holds, we call the Bayesian network \emph{causal} and $G$ a \emph{causal graph}.
With respect to the causal graph $G$ we denote the direct causes (parents) of $V_i$ as $\Pa_i$, the set of all causes (ancestors) $\Anc_i$, and the variables that $V_i$ directly causes (children) $\Ch_i$ and descendants $\Desc_i$ as the set of all downstream variables.
Note in particular that $\Anc_i$ and $\Desc_i$ refer to \emph{proper} ancestors and descendants, i.e.\ $V_i\not\in\Anc_i$ and $V_i\not\in\Desc_i$.
We denote a causal Bayesian network (CBN) as $M = (P, G)$ where $P$ is the joint and $G$ is the directed acyclic graph (DAG) describing the causal structure of the environment. 
Further, the interventional distribution $P_{\doo(v')}$ is given by the truncated factorisation
\[
P_{\doo(v')}(\bm{v}) %= P(\bm{V}\mid \doo(\bm{v}')) 
=\begin{cases}\prod_{i: v_i\not\in \bm{v}'} P(v_i\mid \pa_{v_i}) & \text{if $\bm{v}$ consistent with $\bm{v}'$}\\
0 & \text{otherwise.}
\end{cases} 
\]
Equivalently, the effect of interventions can be computed by adding an extra node $\hat X$ and edge $\hat V_i\to V_i$ for each node $V_i\in V$ \citep{correa2020calculus,Dawid2002}.
Intervening on $V_i$ then corresponds to conditioning on $\hat V_i$ in the extended graph.
More general, soft interventions $\sigma = P'(V_i \mid \Pa^*_i)$ replace the conditional probability distribution for $V_i$ with a new one, possibly using a new parent set $\Pa^*_o$ as long as no cycles are introduced in the graph \citep{correa2020calculus}.
The modified environment is denoted $M(\sigma)$.

General soft interventions cannot be defined without prior knowledge of the causal graph $G$.
For example, the soft intervention $\sigma_Y = P'(y \mid x)$ is incompatible with the causal structure $Y \rightarrow X$ as it would introduce a causal cycle, and so an agent's policy may not be well defined with respect to this intervention. 
We therefore focus our theoretical analysis on a subset of the soft interventions, \textit{local interventions}, that can be implemented without assuming knowledge of $G$.  

\localinterventions*

\textbf{Example:} Fixing the value of a variable (hard intervention) is a local intervention as $\doo (V_i = v'_i) = \doo (V_i = f(v_i))$ where $f(v_i) = v'_i$. %$\forall$ $v_i$. 

\textbf{Example:} Translations are local interventions as $\doo (V_i = v_i + k) = \doo (V_i = f(v_i))$ where $f(v_i) = v_i + k$. This includes changing the position of objects in RL environments \citep{shah2022goal}.

\textbf{Example:} Logical NOT operation $X \rightarrow \neg X$ for Boolean $X$ 

We also consider mixtures of interventions, which can also be described without knowledge of $G$. 

\mixtures*

\textbf{Example:} Adding Gaussian noise is a mixture over local operations (translations) $\sigma_\epsilon = \doo (X =  X + \epsilon)$ where $\epsilon \sim \mathcal N(0, 1)$.  

In common to most decision making tasks such as prediction, classification, and reinforcement learning, is that a decision should be outputted based on some information to optimise some objective.
The exact terms vary: decisions are sometimes called outputs, actions, predictions, or classifications; information is sometimes called features, context, or state; and objectives are sometimes called utility functions or loss functions.
However, all of these setups can be described within the causal influence diagram (CID) framework \citep{howard2005influence,everitt2021agent}.
CIDs are causal Bayesian networks where the variables are divided into decision $D$, utility $U$, and chance variables $V$, and no conditional probability distribution is specified for the decision variables.
The task of the agent is to select the distribution $\pi = P(D = d\mid \Pa_D = \pa_D)$, also known as the policy or decision rule. 
An optimal policy $\pi^*$ is defined as a policy $\pi^*$ that maximizes the expected value of the utility $\mathbb E_{\pi^*}[U]$.

\causalinfluencediagram*

By convention, decision nodes are drawn square, utility nodes diamond, and chance nodes round.
The parents of $D$, $\Pa_D$, can be interpreted as the information the decision is allowed to depend on, and are depicted as dashed lines.
See \Cref{fig:cid example} for an example.
In the following we restrict our attention to a class of CIDs we refer to as `unmediated decision tasks', where where the agent's decision does not causally influence any chance variables that go on to influence the utility. 
This simplifies our theoretical analysis, although it is likely that our results extend to the general case. 

\passivedecisiontask*

Examples of unmediated decision tasks include all standard classification and regression tasks, and generative AI tasks where the output is not included in the training set. 
For example, in classification typically the choice of label does not influence the data generating process. 
Problems that are mediated rather than unmediated decision tasks includes most control and reinforcement learning tasks, where the agent's decision is an action that influences the state of the environment. 
Furthermore, we will focus on non-trivial unmediated decision tasks i.e. where $U\in \Ch_D$, as the case $\Ch_D = \emptyset$ describes trivial decision tasks (the agent's action does not influence the utility).  \Cref{fig:cid example} is an example of a non-trivial unmediated decision task. 

In transfer learning we are typically interested in problems where generalising from the source to target domain(s) is non-trivial, and in the trivial case we cannot expect agents to have to learn anything about their environments in order to generalise.
If this is not the case, then generalising under distributional shifts is trivial. 
Therefore we restrict our attention to decision tasks where the distribution of the environment is relevant to the agent when determining its policy. 
Specifically, we say that a decision task is \textit{domain independent} if there exists a single policy that is optimal for all choices of environment distribution $P(\bm C = c)$.

\domaindependence*

\begin{lemma}\label{lemma: domain dependence}
Domain dependence implies that;
\begin{itemize}
    \item[i)] There exists no $d\in \dom(D)$ such that $d \in \argmax_d U(d, c)$ $\forall \bm c\in\dom(C)$.
    \item[ii)] $\Pa_D \subsetneq \Anc_U$
    \item[iii)] $D\in \Pa_U$
\end{itemize}
\end{lemma}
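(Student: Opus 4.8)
The plan is to prove the three implications by contraposition, showing in each case that if the stated structural condition fails then the decision task is domain independent, i.e.\ there is a single policy optimal for every environment distribution $P(\bm C = \bm c)$. Throughout I will use the fact (from the CID setup) that $\EE^\pi[U] = \sum_{\pa_D} P(\pa_D)\sum_d \pi(d\mid\pa_D)\, \EE[U\mid d, \pa_D]$, so that an optimal policy may be chosen pointwise: for each value $\pa_D$ it suffices to put all mass on some $d$ maximising $\EE[U\mid d,\pa_D]$ (expectations taken in the given environment, and under $\doo(D=d)$ since the task is unmediated by Assumption~1, so $\EE[U\mid d,\pa_D] = \sum_{\pa_U} P(\pa_U \mid d, \pa_D)\,U(\pa_U)$ is computed from the conditional of $\pa_U \cap \bm C$ given $\pa_D$).

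For (i): suppose there is a single $d^\dagger$ with $d^\dagger \in \argmax_d U(d,\bm c)$ for all $\bm c$. Since $U$ depends only on $\Pa_U = \{D\}\cup(\Pa_U\cap\bm C)$, for any environment and any $\pa_D$ we have $\EE[U\mid d^\dagger,\pa_D] = \sum_{\pa_U\cap\bm C} P(\cdot\mid d^\dagger,\pa_D)\,U(d^\dagger, \cdot) \ge \EE[U\mid d,\pa_D]$ for every $d$, because the inequality holds pointwise inside the sum and the mixing weights $P(\pa_U\cap\bm C\mid d^\dagger,\pa_D)$ don't depend on the comparison. Hence the constant policy $d^\dagger$ is optimal in every environment, contradicting domain dependence. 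For (iii): if $D\notin\Pa_U$, then $U$ is a function of $\bm C$ alone, so $\EE[U\mid d,\pa_D]$ is independent of $d$; every policy is optimal in every environment, so the task is domain independent. For (ii): note $\Pa_D\subseteq\Anc_U$ cannot fail in the direction $\Pa_D\not\subseteq\Anc_U$ without the non-ancestor components of $\Pa_D$ being irrelevant to $\EE[U\mid d,\pa_D]$ — I would argue that if $\Pa_D = \Anc_U$ then by the truncated factorisation $P(\pa_U\mid d,\pa_D)$ is fully determined by the graph structure among $\Anc_U$ and does not depend on the free parameters $P(C_i\mid\pa_i)$ for $C_i\in\Anc_U$ once we condition on all of $\Anc_U\subseteq\Pa_D$; more carefully, conditioning on $\Pa_D\supseteq\Anc_U$ fixes all of $\Anc_U$, so $\EE[U\mid d,\pa_D]$ reduces to $U$ evaluated on fixed arguments (plus $d$), whence the optimal $d$ at each $\pa_D$ is determined by $U$ alone and not by $P$, giving domain independence. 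So domain dependence forces $\Pa_D\subsetneq\Anc_U$ (strictness also using that $\Pa_D\subseteq\Anc_U$ always holds because information edges into $D$ come from chance variables that must be ancestors of $U$ for them to matter — or this is part of the standing CID conventions).

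The main obstacle I anticipate is item (ii), specifically pinning down why $\Pa_D \subseteq \Anc_U$ in the first place (as opposed to just $\Pa_D \subsetneq \Anc_U$ being the substantive claim) and handling the case where $\Pa_D$ and $\Anc_U$ are incomparable: I need the precise convention on which chance variables may be parents of $D$, or else an argument that any parent of $D$ outside $\Anc_U$ can be marginalised out without changing the achievable expected utility, so WLOG $\Pa_D\subseteq\Anc_U$. Once that reduction is in place, the equality case $\Pa_D=\Anc_U$ collapses via the pointwise-optimisation argument above. Items (i) and (iii) are essentially immediate from the pointwise structure of the optimal policy and the fact that $U$ is a deterministic function of $\Pa_U$.
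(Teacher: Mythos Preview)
Your contrapositive strategy is exactly the paper's, and your arguments for (i) and (iii) match it essentially verbatim. The obstacle you flag for (ii) dissolves once you establish (iii) first, as the paper does: from $D\in\Pa_U$ you get $D\in\Anc_U$, and then $\Pa_D\subseteq\Anc_D\subseteq\Anc_U$ by transitivity of the ancestor relation, so no appeal to CID conventions or a marginalisation-of-irrelevant-parents argument is needed; strictness then follows from your equality-case computation that $\mathbb E[U\mid d,\pa_D]=U(d,\pa_D)$ is $P$-independent (or, even more directly, from $D\in\Anc_U\setminus\Pa_D$).
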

\begin{proof}

i) For any $P'$ we have $\mathbb E_{P'}[U\mid \doo (D = d), \pa_D]= \sum_{\bm c} P'(\bm C_{d} = \bm c \mid \pa_D)U(d, \bm c)$ $= \sum_{\bm c} P'(\bm C = \bm c \mid \pa_D)U(d, c)$ where we have used $\Desc_D \cap \Anc_U = \emptyset$.  
Therefore if $\exists$ $d^*$ s.t. $d^* = \argmax_d U(d, \bm c)$ $\forall$ $\bm C = \bm c$ then $\mathbb E_{P'}[U\mid \doo (D = d^*), \pa_D]\geq \sum_{\bm c} P'(\bm C = \bm c \mid \pa_D)U(d', \bm c)\mathbb E_{P'}[U\mid \doo (D = d), \pa_D]$ $\forall$ $d\neq d^*$, and so $D = d^*$ is optimal for all $P'(\bm C = \bm c)$ and we violate domain dependence. 

ii) As $D\in \Pa_U$ (iii), then $\Pa_U \subseteq \Anc_U$. 
If $\Anc_U = \Pa_D$ then $\mathbb E_P[u \mid d, \pa_D] = U(d, \pa_D)$ which is independent of $P(\bm C = \bm c)$, and hence there is a single optimal policy for all $P$ and we violate domain dependence. 

iii) If $D\not\in \Anc_U$ then the CID is trivial, in the sense that $\mathbb E[U\mid \doo (D = d)] = \mathbb E[U]$, and hence all decisions are optimal for all distributions $P(\bm C)$, which violates domain dependence (\Cref{assumption: environment-dependent}). 
Therefore $D\in \Anc_U$ which with $\Desc_D \cap \Anc_U = \emptyset$ implies $D\in \Pa_U$.
\end{proof}

\begin{figure}
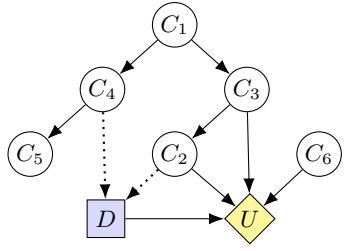

    \centering
    \begin{influence-diagram}
  \node (help) [draw=none] {};
  \node (v1) [above = of help] {$C_1$};
  \node (v2) [below = of help] {$C_2$};
  \node (v3) [right = of help] {$C_3$};
  \node (v4) [left = of help] {$C_4$};
  \node (help2) [below = of v2, draw=none] {};
  \node (help3) [below = of v4, draw=none] {};
  \node (help4) [right = of v2, draw=none] {};
  \node (v5) [left = of help3] {$C_5$};
  \node (v6) [right = of help4] {$C_6$};
  \node (d) [left = of help2, decision] {$D$};
  \node (u) [right = of help2, utility] {$U$};
  \edge {v4} {v5};
  \edge[information] {v4} {d};
  \edge {v1} {v4, v3};
  \edge {v2} {u};
  \edge {v6} {u};
  \edge[information] {v2} {d};
  \edge {v3} {v2, u};
  \edge {d} {u};
\end{influence-diagram}
    \caption{The CID for an unmediated decision task, where $D$ has no causal influence on the environment state $\bm C$. Our main theorem implies that an agent that is robust to distributional shifts on $\bm C$ must learn the CBN over $\Anc_U = \{C_1, C_2, C_3, C_4, C_6\}$, noting that $C_5\not\in \Anc_U$. $C_4$ is an example of a variable that is only an ancestor of $U$ via $D$ and so has no direct causal effect on the utility, but is still relevant to the decision task as it is a proxy for $C_1$ which is a cause of $U$. $C_6$ is a cause of $U$ but not of $D$, and naively one might assume that distributional shifts on $C_6$ cannot influence the agent's decision. However, the optimal policy can change under distributional shifts on $C_6$ as these effect the utility, and hence the agent will have to learn a CBN including $C_6$ if it is to be robust to shifts on $C_6$. }
    \label{fig:cid example}
\end{figure}

\subsection{Parameterisation of CIDs}\label{appendix: parameterisation}

The joint distribution $P$ is defined for all environment variables $\bm C$, and the CID is defined by the parameters for $P(\bm C)$ and $U(\Pa_U)$.
We restrict our attention to $\bm C$ that are categorical, and without loss of generality we label states $c_i = 0, 1, \ldots, \text{dim}_i-1$ where $\text{dim}_i$ is the dimension of variable $C_i$.
Firstly, the joint $P(\bm C)$ is parameterised by the conditional probability distributions (CPDs) in the Markov factorization with respect to $G$, $\theta_P = \{P(c_i\mid \pa_i) \, \forall \, c_i \in \{0, \ldots, \text{dim}_i - 2\}, \pa_i \in \Pa_i, \, C_i \in \bm C$.
Note that the CPDs $p(C_i = \text{dim}_i - 1\mid \pa_i)$ are not included in $\theta_P$ as they are fully constrained by normalization $P(C_i = \text{dim}_i  - 1 \mid \pa_i) = 1 - \sum_{j =0}^{\text{dim}_i - 1}P(c_i \mid \pa_i)$. 
Secondly, the utility function is simply parameterised by its value given the state of its parents $\theta_U = \{U(\pa_U)\, \forall \, \Pa_U = \pa_U\}$. 
For simplicity we work with the normalized utility function, 
\begin{equation}
     U(\pa_U) \rightarrow \frac{U(\pa_U) - \min_{\pa_U'}U(\Pa_U = \pa'_U)}{\max_{\pa_U'}U(\Pa_U =\pa'_U) - \min_{\pa_U'}U(\Pa_U =\pa'_U)}
\end{equation}
with values between 0 and 1. Noting that as this is a positive affine transformation of the utility function the set of optimal policies invariant, and we can re-scale regret bounds accordingly. 
Let $\theta_M$ denote the set of all parameters for the CID, $\theta_M = \theta_P \cup \theta_U$, and note that the elements of $\theta_M$ in the $[0, 1]$ interval and are logically independent, i.e. we can independently choose any $[0,1]$ value for each parameter and this defines a valid parameterization of the CID for the baseline environment.
In the following when we refer to `the parameters $P, U$' we are referring to $\theta_M$.

We follow the method outlined in \citep{meek2013strong} to prove that certain constraints on $P, U$ hold `for almost all $P, U$' and hence for almost all decision tasks. 
This involves converting a given constraint into polynomial equations over $\theta_M$ and applying the following Lemma, 
\begin{lemma}[\citealp{okamoto1973distinctness}]\label{lemma: nontrivial polynomial}
The solutions to a (nontrivial) polynomial are Lebesgue measure zero over the space of the parameters of the polynomial.
\end{lemma}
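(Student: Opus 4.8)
The plan is to prove the standard fact behind \Cref{lemma: nontrivial polynomial}: if $p$ is a polynomial in $m$ real variables that is not identically zero, then its real zero set $Z = \{\theta \in \R^m : p(\theta) = 0\}$ has $m$-dimensional Lebesgue measure zero; in particular $Z \cap [0,1]^{m}$, the relevant region since the parameters $\theta_M$ live in a unit cube, is null. I would argue by induction on $m$, cutting down the dimension one variable at a time via Fubini/Tonelli.

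For $m = 1$, a nonzero polynomial of degree $d$ has at most $d$ real roots, so $Z$ is finite and hence $\lambda_1$-null. For the inductive step I would regard $p$ as a polynomial in its last coordinate, writing $p(\theta_1,\dots,\theta_m) = \sum_{k=0}^{d} a_k(\theta_1,\dots,\theta_{m-1})\,\theta_m^{k}$ where each $a_k$ is a polynomial in the first $m-1$ variables. Since $p \not\equiv 0$, some $a_{k_0}$ is a nonzero polynomial, so by the inductive hypothesis $E := \{a_{k_0} = 0\} \subseteq \R^{m-1}$ is Lebesgue-null. For any fixed $(\theta_1,\dots,\theta_{m-1}) \notin E$, the map $\theta_m \mapsto p(\theta_1,\dots,\theta_m)$ is a nonzero univariate polynomial, so the slice $\{\theta_m : p = 0\}$ is finite, hence $\lambda_1$-null.

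To assemble the slices I would note that $Z$ is closed (preimage of $\{0\}$ under a continuous map), hence Borel, and then restrict attention to a box $[-R,R]^{m}$ to keep every quantity finite; by Tonelli's theorem,
\[
\lambda_m\big(Z \cap [-R,R]^{m}\big) = \int_{[-R,R]^{m-1}} \lambda_1\big(\{\theta_m \in [-R,R] : p = 0\}\big)\, d\lambda_{m-1}.
\]
Off $E$ the integrand vanishes by the previous paragraph; on $E$ it is at most $2R$ while $\lambda_{m-1}(E) = 0$, so that piece contributes nothing. Hence $\lambda_m(Z \cap [-R,R]^{m}) = 0$ for every $R$, and letting $R \to \infty$ gives $\lambda_m(Z) = 0$; intersecting with $[0,1]^{m}$ preserves nullity.

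The hard part is the measure-theoretic bookkeeping in the Tonelli step: verifying measurability of $Z$ and of the section-measure function, and above all handling the set $E$ on which $p$ may degenerate to the zero polynomial in $\theta_m$ (so that its slices need not be null). Passing to bounded boxes is exactly what neutralises that bad set, after which everything is routine. In the paper's application, each claimed constraint on $\theta_M$ is first rewritten as the vanishing of a specific polynomial that is checked to be not identically zero (using some concrete parameter choice where it is nonzero), and this lemma then yields that the constraint fails only on a Lebesgue-measure-zero subset of $[0,1]^{|\theta_M|}$, i.e.\ for almost all CIDs.
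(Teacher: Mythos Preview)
Your argument is correct and is the standard induction-on-dimension proof of this classical fact. However, the paper does not actually supply its own proof of \Cref{lemma: nontrivial polynomial}: the lemma is simply stated and attributed to \citet{okamoto1973distinctness}, with the surrounding text only explaining what ``nontrivial'' means and how the lemma is applied (by exhibiting a single parameter instantiation at which the polynomial is nonzero). So there is nothing in the paper to compare your approach against beyond the citation; your write-up supplies the missing justification in full, and the Fubini/Tonelli treatment of the degenerate set $E$ is exactly the care that is needed.
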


A polynomial in $n$ variables is non-trivial (not an identity) if not all instantiations of the $n$ variables are solutions of the polynomial.
For example, the equation $\text{poly}(\theta_M) = 0$ is trivial if and only if all coefficients of the polynomial expression $\text{poly}(\theta_M)$ are zero. 
Therefore, any constraint on $P, U$ that can be converted into a polynomial equation over $\theta_M$ must either hold for all $\theta_M$ or for a Lebesgue measure zero subset of instantiations of $\theta_M$. 

Operationally, this means that if we have any smooth distribution over the parameter space (for example, describing the distribution of environments we expect to encounter), the probability of drawing an environment from this distribution for which the condition does not hold is 0. 
 
%This in turn allows us to derive several lemmas that will be used in later proofs that hold `for almost all $P, U$'', i.e. for almost all choices of parameters for the CID. 

\subsection{Distributional shifts \& policy oracles}\label{appendix: distributional shifts}

In the derivation of our results we restrict out attention to distributional shifts that can be modelled as (soft) interventions on the data generating process. 
We note that by Reichenbach's principle \citep{reichenbach1956direction}, which states that all statistical associations are due to underlying causal structures, we can assume the existence of a causal data generating process that can be described in terms of a CBN $M = (P, G)$. 
Therefore there is a causal factorization of the joint $P(\bm C= \bm c) = \prod_i P(c_i \mid \Pa_i)$. 
By allowing for mixtures of interventions, we can reach any distribution over $\bm C$, which can be seen trivially by noting that we can perform a soft intervention to achieve any deterministic distribution $P(\bm C= \bm c) = \delta (\bm C = \bm c')$, and then take a mixture over these deterministic distributions to achieve an arbitrary distribution over $\bm C$.
The set of distributions that cannot be generated by interventions include those that change the set of variables $\bm V$ including the decision and utility variables, and introducing selection biases (which are causally represented with the introduction of additional nodes that are conditioned on \citealp{bareinboim2012controlling}).
For further discussions on the relation between distributional shifts and interventions see \citet{scholkopf2021toward,meinshausen2018causality}. 

In the following proofs we use \textit{policy oracles} to formalise knowledge of regret-bounded behaviour under distributional shifts. 

\begin{restatable}[Policy oracle]{dfn}{policyoracle}\label{def: policy oracle}
A \emph{policy oracle} for a set of interventions $\Sigma$ is a map $\Pi^\delta_{\Sigma} : \sigma \mapsto \pi_\sigma(d \mid \pa_D)$ $\forall$ $\sigma \in \Sigma$ where $\Sigma$ is a set of domains.
It is $\delta$-optimal if $\pi_\sigma(d \mid \pa_D)$ achieves an expected utility $\mathbb E^{\pi_\sigma} [U] \geq \mathbb E^{\pi^*}[U] - \delta $ in the CID $M(\sigma)$ where $\delta \geq 0$. 
\end{restatable}

Here $\delta$ is the regret upper bound, which is satisfied under all distributional shifts $\sigma \in \Sigma$. 
We refer to $\delta$-optimal policy oracles for $\delta = 0$ as optimal policy oracles. 
For the proof of our main result we restrict our attention to policy oracles with $\Sigma$ that includes mixtures over all local interventions (\Cref{def: local interventions}).

Note that the policy oracle specifies only what policy the agent returns in a distributionally shifted environment $M(\sigma)$. 
It does not specify how this policy is generated, which will depend on the specific setup. 
For example, in domain generalisation that agent typically receives no additional data from the target domains, and is expected to produce a policy (decision boundary) that achieves a low regret across all target domains. 
On the other hand in domain adaptation and few shot learning, the agent is provided with some new data from each target domain with which to adjust its policy. 
As we hope to accommodate all of these perspectives we specify only the agent's policy, not the data used to generate it. 
This is discussed further in \Cref{section: actual interpretation}.

\textbf{What distributional shifts do we consider?} In our proofs, we assume the agent is robust to any domain shifts that can be described as a mixture of local interventions on the environment variables $\bm C$. We do not consider interventions that change the utility $U$ or the agent's decision $D$, though we do include dropping inputs to the policy (masking) $\Pa_D \rightarrow \Pa_D'\subseteq \Pa_D$ as local interventions.

\section{Appendix: Simplified proof}\label{appendix: simplified proof}

In this section we outline the proof of \Cref{theorem: main} for a simple binary decision task with binary latent variables. As mentioned in \cref{section: interpretation}, the method used to identify the CBN in \cref{theorem: main} can be viewed as an algorithm for learning the CBN over latent variables by observing the policy of a regret-bounded agent under various distributional shifts. To demonstrate this, in \cref{appendix: experiments} we use an implementation of the algorithm on randomly generated CIDs, showing empirically that we can learn the underlying CBN in this way, and explore how the agent's regret bound affects the accuracy of the learned CBN.

Consider the CID in \Cref{fig: simple cid}, describing a binary decision task $D\in \{0, 1\}$ with two binary latent variables $X, Y\in \Pa_U$. 

\begin{figure}[H]
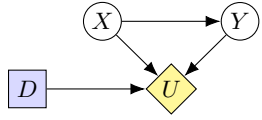

    \centering
\begin{influence-diagram}
\node (U) [utility] {$U$};
\node (help) [above = of U, draw=none] {};
\node (help2) [left = of U, draw=none] {};
\node (X) [left = of help] {$X$};
\node (Y) [right = of help] {$Y$};
\node (D) [left = of help2, decision] {$D$};
\edge {X, Y, D} {U};
\edge {X} {Y};
\end{influence-diagram}
\caption{Example CID describing a context-free mutli-armed bandit with binary latent variables $X, Y$.}\label{fig: simple cid}
\end{figure}

Consider an agent that selects a policy $\pi_D$ such that it maximises the expected utility. 
That is, the CID describes a context-free bandit problem, where $X, Y$ are latent variables that influence the arm values $\mathbb E[u \mid d] = \sum_{x, y} P(x, y) U(x, y, d)$.

Our aim is to learn this CID given only knowledge of the agent's policy under distributional shifts, and knowledge that it satisfies a regret bound. 
We assume knowledge of i) the set of chance variables $\bm C = \{X, Y\}$, ii) the utility function $U(d, x, y)$, and iii) the policy $\pi_D(\sigma)$ under distributional shifts $\sigma$ (other variables ($U, X, Y$) are unobserved).
To learn the CID the aim is therefore to learn the parameters of the joint distribution over latents $P(x, y)$ and the unknown causal structure. 
As we know the utility function we know $D, X, Y\in \Pa_U$, and by assuming the CID is unmediated (\Cref{assumption: passive}) we know $X, Y \not\in \Desc_D$. 
Likewise the decision task is context free hence $D\not\in \Desc_X\cup \Desc_Y$. 
Hence the only unknown causal structure is the DAG over the latent variables $\bm C = \{X, Y\}$.

The expected utility difference between $D = 0$ and $D = 1$ following a hard intervention on $X$ is given by 
\begin{align}
    &\mathbb E[u \mid D = 0 ; \doo (X = 0)] -\mathbb E[u \mid D = 1 ; \doo (X = 0)] = \sum\limits_{y}P(Y_{X = 0} = y ) [U(0, 0, y) - U(1, 0, y)]\\
    &= P(Y_{X = 0} = 0)[U(0,0, Y = 0) - U(1, 0, 0)] + (1-P(Y_{X = 0} = 0))[U( 0, 0, 1) - U(1, 0, 1)]
\end{align}

As we know $U(d, x, y)$ we can therefore identify $P(Y_{X = 0} = 0)$ if we can identify this expected utility difference.
We do this using the agent's policy under distributional shifts, and in this simple case we can restrict our attention to hard interventions. 
Following the steps outlined in \Cref{lemma: policy oracle}, domain dependence insures that we can identify a hard intervention $\sigma_2 = \doo (X = x', Y = y')$ that results in a different optimal policy to the optimal policy under $\sigma_1 = \doo (X = 0)$. 
For a mixture of these two interventions $\sigma_3 = q \sigma_1 + (1-q)\sigma_2$ the expected utility is $\mathbb E[u \mid d, \sigma_3] = q \mathbb E[u \mid d, \sigma_1] + (1-q) \mathbb E[u\mid d, \sigma_2]$. 
This is a linear function with respect to $q$, and for $q = 1$ the optimal decision ($d_1$) is different than for $q= 0$ ($d_2 \neq d_1$). 
Therefore, there is a single indifference point $q_\text{crit}$ for which both decisions are optimal.  
It is simple to show that this indifference point is given by, 
\begin{equation}\label{eq:qcrit-simple}
    q_\text{crit} = \left( 1 - \frac{\mathbb E[u \mid D = d_1 ; \doo (X = 0)] -\mathbb E[u \mid D = d_2 ; \doo (X = 0)]}{U(d_1, x', y') - U(d_2, x', y')} \right)^{-1}
\end{equation}
$D = d_1$ is optimal for $q\leq q_\text{crit}$ and $D = d_2$ is optimal for $q\geq q_\text{crit}$.
We can estimate $q_\text{crit}$ by randomly sampling values of $q$ uniformly over $[0, 1]$ and observing the optimal decision under the resulting mixed intervention (\Cref{alg: q est}). 
That is, $q_\text{crit}$ is the probability that $D = d_1$ is returned by the policy oracle for a randomly sampled $q$. In this way we learn $q_\text{crit}$ and as we know $U(d, x, y)$ we can identify the expected utility difference under $\doo (X = 0)$ in the numerator of \cref{eq:qcrit-simple} and so identify $P(Y_{X = 0} = 0)$.

Similarly we identify $P(Y_{X = 1} = 0), P(X_{Y = 0} = 0)$ and $P(X_{Y = 1} = 0)$, which encode both the causal relation between $X$ and $Y$ (e.g.\ there is a directed path from $X$ to $Y$ if and only if $P(Y_{X = 0}) \neq P(Y_{X = 1})$ for almost all CBNs), and determine the parameters of the CBN as $P(C_i = c_i \mid \doo (\bm C\setminus C_i)) = P(C_i = c_i \mid \Pa_i = \pa_i)$.

\section{Proof of \Cref{theorem: main}}\label{appendix: main theorem}

In this appendix we prove \Cref{theorem: main}. 
For an informal overview of the proof see \Cref{appendix: simplified proof}.

First, we show that for a given distributional shift $\sigma$, for almost all $P, U$ there is a single optimal decision. 
While this is not necessary for our proof, it simplifies our analysis. 
And as our main theorem holds for almost all $P, U$, we can include any finite number of independent conditions that hold for almost all $P, U$ without strengthening this condition, as the union of Lebesgue measure zero sets is Lebesgue measure zero.

\begin{lemma}\label{lemma: single decision}
For any given local intervention $\sigma$ there is a single deterministic optimal policy for almost all $P, U$.
\end{lemma}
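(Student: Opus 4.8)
The plan is to show that the set of parameter values $\theta_M$ for which two distinct decisions are simultaneously optimal in the shifted environment $M(\sigma)$ is the zero set of a nontrivial polynomial, and then invoke \Cref{lemma: nontrivial polynomial}. First I would fix a local intervention $\sigma$ and, using \Cref{assumption: passive}, write the expected utility of a deterministic decision $d$ in $M(\sigma)$ explicitly: since $\Desc_D \cap \Anc_U = \emptyset$, intervening on $D$ does not disturb the distribution of the chance variables feeding into $U$, so
\begin{equation}
\mathbb E^{d}[U \mid \pa_D ; \sigma] = \sum_{\pa_U} P(\pa_U \setminus d \mid \pa_D ; \sigma)\, U(d, \pa_U \setminus d),
\end{equation}
where $P(\cdot ; \sigma)$ is obtained from $P$ by the truncated/soft-intervention factorisation of \Cref{def: local interventions}. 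The key structural observation is that each such conditional marginal is a polynomial in the CPD entries $\theta_P$ (a sum of products of conditionals, with the local intervention only relabelling/aggregating states), and $U(d,\cdot)$ appears linearly, so $\mathbb E^{d}[U \mid \pa_D;\sigma]$ is a polynomial $g_d(\theta_M)$ in the full parameter vector.

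Next I would argue that, since the optimal policy over $\dom(D)$ at a given $\pa_D$ is attained at a vertex (the expected utility is linear in the policy $\pi(\cdot \mid \pa_D)$), a non-unique optimal policy forces two distinct decisions $d \neq d'$ to have equal and maximal expected utility, i.e. $g_d(\theta_M) = g_{d'}(\theta_M)$ for some pair. So the ``bad'' parameter set is contained in $\bigcup_{d \neq d'} \{\theta_M : g_d(\theta_M) - g_{d'}(\theta_M) = 0\}$, a finite union. By \Cref{lemma: nontrivial polynomial} each piece is Lebesgue-null provided the polynomial $g_d - g_{d'}$ is not identically zero, and a finite union of null sets is null, giving the claim.

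The main obstacle is therefore establishing nontriviality of $g_d - g_{d'}$ for each relevant pair $d \neq d'$ — i.e. exhibiting at least one parameter instantiation where the two decisions give different expected utility. Here I would use \Cref{assumption: environment-dependent} (domain dependence) together with \Cref{lemma: domain dependence}: part (i) says there is no decision that is pointwise optimal for $U(d,\bm c)$ across all $\bm c$, which in particular means the functions $d \mapsto U(d,\bm c)$ are not all maximised at a common $d$; part (iii) gives $D \in \Pa_U$ so $U$ genuinely depends on $d$. Hence for generic $\theta_M$ — e.g. choosing the chance distribution to be a point mass concentrating $\pa_U\setminus d$ on a state witnessing non-dominance — we get $g_d(\theta_M) \neq g_{d'}(\theta_M)$, so the polynomial is nontrivial. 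One has to be a little careful that the local intervention $\sigma$ does not collapse the relevant states (a constant local intervention could in principle erase the dependence on some variable), but since $\sigma$ is fixed and we are free to choose the remaining parameters, and since $U$ still depends on $d$ itself which $\sigma$ cannot touch, nontriviality survives. Finally I would note that this adds only finitely many polynomial conditions, consistent with the remark preceding the lemma that such conditions can be freely conjoined with the genericity hypothesis of \Cref{theorem: main}.
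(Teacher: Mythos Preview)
Your approach matches the paper's: reduce non-uniqueness of the optimal policy to a finite union of polynomial equality constraints on $\theta_M$ and invoke \Cref{lemma: nontrivial polynomial}. One technical slip worth fixing: the conditional $P(\pa_U\setminus d \mid \pa_D;\sigma)$ is a \emph{rational} function of $\theta_P$, not a polynomial, because of the normalising denominator $P(\pa_D;\sigma)$. The paper handles this by multiplying the equality $\mathbb E[u\mid \pa_D,\doo(d);\sigma]=\mathbb E[u\mid \pa_D,\doo(d');\sigma]$ through by $P(\pa_D;\sigma)$ (common to all $d$ since $\Desc_D\cap\Anc_U=\emptyset$), yielding the genuinely polynomial constraint $\sum_{\bm z}[U(d,\bm x)-U(d',\bm x)]\,P(\bm z,\pa_D;\sigma)=0$. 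With that adjustment your argument is complete. Your nontriviality argument via \Cref{lemma: domain dependence} is also more elaborate than necessary: since $U(d,\bm x)$ and $U(d',\bm x)$ are independent free parameters in $\theta_U$ whenever $d\neq d'$, the paper simply asserts nontriviality directly from $d\neq d'$.
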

\begin{proof}
Following intervention $\sigma$ two decisions $d, d'$ are simultaneously optimal in context $\pa_D$ if, 
\begin{equation}
    \mathbb E[u \mid \pa_D, \doo (D = d) ;\sigma] = \mathbb E[u \mid \pa_D, \doo (D = d') ;\sigma]\label{eq: exp util same}
\end{equation}

Let $\bm Z = [\Anc_U \setminus \Pa_D]$ and $\bm X = \Pa_U \setminus \{D\}$.
Noting that 
\begin{equation}
    \mathbb E[u \mid \pa_D, \doo (D = d) ;\sigma]=
    \sum_{\bm z} U(d, \bm x) P(\bm z , \pa_D \mid \doo (D = d) ; \sigma) / P(\pa_D\mid \doo (D = d) ; \sigma)
\end{equation}
and that $P(\pa_D\mid \doo (D = d) ; \sigma) = P(\pa_D ; \sigma)$ and $P(\bm z, \pa_D\mid \doo (D = d) ; \sigma) = P(\bm z, \pa_D ; \sigma)$ which follows from $\Desc_D \cap \Anc_U = \emptyset$, we can multiple both sides of \eqref{eq: exp util same} with  $P(\pa_D ; \sigma)$ giving,
\begin{equation}
   \sum_{\bm z} U(d, \bm x) P(\bm z , \pa_D; \sigma)
   =   \sum_{\bm z} U(d', \bm x) P(\bm z , \pa_D; \sigma)
\end{equation}
and
\begin{equation}
   \sum_{\bm z} [U(d, \bm x) - U(d', \bm x)] P(\bm z , \pa_D; \sigma) = 0\label{eq: exp u diff poly}
\end{equation}

Let $\sigma = \doo (v_1 = f_1(v_1), \ldots, v_N = f_N(v_N))$.
The joint $P(\bm z , \pa_D; \sigma) = \prod_i P(c_i \mid \pa_i ; \sigma)$ is polynomial, and the local interventions $ P(c_i \mid \pa_i ; \sigma) = \sum_{c'_i: f_i(c'_i) = c_i}P(c'_i \mid \pa_i)$ keep it polynomial. 
Therefore \eqref{eq: exp u diff poly} is a polynomial equation over the model parameters, and is certain to be non-trivial as $d\neq d'$.
Therefore by \Cref{lemma: nontrivial polynomial} for almost all $P, U$ \eqref{eq: exp u diff poly} is not satisfied, and as there are a finite number of decisions this implies that for almost all $P, U$ there is a single optimal decision for a given $\sigma$, $\pa_D$ and hence a single optimal policy.
\end{proof}

Next, we detail how a policy oracle can be used to identify a specific causal query in the shifted environment $M(\sigma)$, that we will later use to identify the model parameters.

\begin{lemma}\label{lemma: policy oracle}
Using an optimal policy oracle $\Pi^*_{\Sigma}$ where $\Sigma$ includes all mixtures of local interventions on $\bm C$ including masking inputs $\Pa_D'\subseteq \Pa_D$, then for any given $\Pa_D' = \pa_D'$ such that $\Pa_D' \cap \Pa_U = \emptyset$ we can identify $\sum_z P( \bm C= \bm c; \sigma )[U(d, \bm c) - U(d', \bm c)]$, for $d$ and $d'$ where $d\neq d'$ and $\bm Z = \bm C \setminus \Pa_D'$.

\end{lemma}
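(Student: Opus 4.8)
Fix a context $\pa_D$ and write the target quantity as $f_d - f_{d'}$, where $f_e := \sum_{\bm z} U(e, (\bm z, \pa_D))\, P(\bm z, \pa_D ; \sigma)$ and $\bm Z = \bm C \setminus \Pa_D$. Using $\Desc_D \cap \Anc_U = \emptyset$ (\Cref{assumption: passive}) exactly as in the proof of \Cref{lemma: single decision}, one has $P(\bm z, \pa_D \mid \doo(D=e); \sigma) = P(\bm z, \pa_D; \sigma)$, so $f_e = \mathbb E[u \mid \pa_D, \doo(D=e); \sigma]\, P(\pa_D; \sigma)$ and $f_d - f_{d'}$ is precisely the object in the statement (with the $\pa_D$-dependence left implicit). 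The plan is to read off differences $f_A - f_B$ from the \emph{location} at which the policy oracle's optimal decision switches as we interpolate, via mixed interventions, between $\sigma$ and a fully specified hard intervention --- for which the analogue of $f_e$ is known.

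First I would observe that for a hard intervention $\sigma^* = \doo(\bm C = \bm c^*)$ with $\bm c^* = (\bm z^*, \pa_D)$ we have $P(\bm z, \pa_D; \sigma^*) = \1[\bm z = \bm z^*]$, so the analogue of $f_e$ is just $U(e, \bm c^*)$, which is known since $U$ is known, and the optimal decision under $\sigma^*$ is $\argmax_e U(e,\bm c^*)$. For the mixed intervention $\sigma_q = q\sigma + (1-q)\sigma^*$ --- a mixture of local interventions, hence in $\Sigma$ (\Cref{def: mixtures of interventions}) --- linearity gives $P(\bm z,\pa_D;\sigma_q) = qP(\bm z,\pa_D;\sigma) + (1-q)\1[\bm z=\bm z^*]$ and hence
\[
  \mathbb E[u \mid \pa_D, \doo(D=e); \sigma_q]\, P(\pa_D;\sigma_q) \;=\; q\, f_e + (1-q)\, U(e,\bm c^*).
\]
So the oracle's decision at $\pa_D$ in domain $\sigma_q$ is $\argmax_e\big[q f_e + (1-q)U(e,\bm c^*)\big]$, a piecewise-constant function of $q\in[0,1]$. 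For generic $P,U$ it has finitely many breakpoints and at each exactly two decisions $A, B$ are co-optimal (the tie locus of any decision pair is the zero set of a nontrivial expression that is affine in $q$ and polynomial in the parameters, hence finite by \Cref{lemma: nontrivial polynomial}). At a breakpoint $q_i$, co-optimality of $A, B$ rearranges to $f_A - f_B = \tfrac{1-q_i}{q_i}\big(U(B,\bm c^*) - U(A,\bm c^*)\big)$, whose right-hand side is known once we observe $q_i$ and the two adjacent decisions. I would recover each $q_i$ (and its adjacent decisions) by querying the oracle on a dense set of $q$ --- sampling $q\sim\mathrm{Unif}[0,1]$ as in \Cref{appendix: simplified proof}, or binary search.

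The breakpoints chain the $q=0$-optimal decision $\argmax_e U(e,\bm c^*)$ to the $q=1$-optimal decision (the one the oracle returns in domain $\sigma$); since $A\mapsto f_A$ is additive along such a chain, summing the telescoping differences $f_A - f_B$ recovers $f_{e_0} - f_{e_{\mathrm{last}}}$, and varying the anchoring intervention $\bm z^*$ yields further equations of this form. \Cref{lemma: domain dependence}(i) --- no decision is optimal for every chance configuration --- is what guarantees the chains are non-degenerate. The main obstacle I anticipate is the final bookkeeping: some decisions may never be uniquely optimal at $\pa_D$ for any configuration, so one must argue, again using the ``almost all $P,U$'' genericity (via \Cref{lemma: nontrivial polynomial}, to exclude coincident breakpoints and rank-deficiency of the relevant utility matrix), that the equations collected this way determine either the full vector $\{P(\bm z,\pa_D;\sigma)\}_{\bm z}$ --- whence every $f_d - f_{d'}$ by direct computation --- or at least the specific differences $f_d - f_{d'}$ needed downstream. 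I expect this genericity argument, not the interpolation construction, to be where the real work lies.
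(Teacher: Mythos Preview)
Your core mechanism is exactly the paper's: mix $\sigma$ with a hard intervention on all of $\bm C$, observe that the oracle's decision at $\pa_D$ is $\argmax_e\big[q f_e + (1-q)U(e,\bm c^*)\big]$, and read off a difference $f_A - f_B$ from the location of a breakpoint via the same rearrangement you wrote down.

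Where you diverge is in how much you think the lemma asks for. You read it as requiring the quantity for arbitrary (or at least many) pairs $d,d'$, which forces you to track \emph{all} breakpoints, telescope along the upper envelope, vary the anchor $\bm z^*$, and then confront a rank/genericity argument to show the collected equations pin everything down --- the ``final bookkeeping'' you flag as the real work. The paper reads the lemma as producing the quantity for \emph{some} pair $d,d'$ that the procedure itself outputs, and that is exactly how it is consumed in the proof of \Cref{theorem: main}. With this reading a single breakpoint suffices: choose the hard intervention $\sigma'$ so that $d_2 = \argmax_d U(d,\bm x')$ differs from the $q{=}1$-optimal decision $d_1$ (possible by \Cref{lemma: domain dependence}(i)), locate the unique $q_\text{crit}$ at which the oracle's decision first leaves the set $\{d : d = \argmax_d U(d,\bm x')\}$, and solve for the one difference involving $d_2$ and whichever $d_3$ is co-optimal there. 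Algorithm~\ref{alg: q est} returns $q_\text{crit}$, $d_2$, $d_3$ together, and the lemma's output is the target sum for that specific pair. So your chaining construction and the anticipated genericity argument for full identifiability are not wrong, just unnecessary: the single-breakpoint version already delivers what the lemma claims and what the downstream proof needs.
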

\begin{proof}

By \Cref{lemma: single decision} for almost all $P, U$ there is a single optimal decision following the shift $\sigma$. 
Let $d_1 = \argmax_d \mathbb E[u \mid  \doo (D = d), \pa_D' ; \sigma]$ where $d_1 = \pi^*(\sigma)$. 
We can identify $d_1$ by querying the policy oracle with $\sigma$. 

%Let $\bm Z = \bm C \setminus \Pa_U'$.
Consider a hard intervention on all $C_i\in \bm C$, $\sigma' := \doo (c'_1, c'_2, \ldots, c'_N)$ where for all $C_i \in \Pa_D'$ we set $C_i = c_i$ to be the same state as in observation $\Pa_D' = \pa_D'$. 
The expected utility under this intervention is $\mathbb E[u \mid \doo (D = d), \pa_D'; \sigma'] = U(d, \bm x')$ where $\bm X = \Pa_U \setminus \{D\}$ (and we have that $D\in \Pa_U$ from \Cref{lemma: domain dependence} iii)).

Next we show that there is a choice of hard intervention $\sigma'$ such that the policy oracle must return different optimal decisions in the context $\Pa_D' = \pa_D'$ for $\sigma'$ and $\sigma$.
As $\Pa_D' \cap \Pa_U = \emptyset$ then we are free to choose any $X = x'$ and the resulting $\sigma'$ will be compatible with the evidence $\Pa_D' = \pa_D'$.
Note that by \Cref{lemma: domain dependence} i) $\exists$ $\bm X = \bm x'$ s.t. $d_1 \neq \argmax_d U(d, x')$, else $D = d_1$ is optimal for all $\bm X = \bm x$ which violates domain dependence. 
We can determine this $\bm X = \bm x'$ given the utility function and $d_1$.
Let $d_2 = \argmax_d U(d, \bm x')$ and $\sigma'=\doo (c'_1, c'_2, \ldots, c'_N)$ be the hard intervention for which $\bm X = \bm x'$ and $\Pa_D' = \pa_D'$.

Consider the joint distribution over $\bm C$ under the mixed local intervention $\tilde \sigma (q) = q \sigma + (1-q) \sigma'$, 
\begin{align}
    P(\bm C = \bm c \mid \doo (D = d); \tilde \sigma(q)) &= P(\bm C = \bm c ;  \tilde \sigma(q))\\
    &= q P(\bm C= \bm c  ; \sigma) + (1-q) P(\bm C = \bm c ; \sigma')
\end{align}
where in the first line we have used $\Ch_D = \{U\}$ to drop the intervention. 
Note that $\bm Z = \bm C \setminus \Pa_D \neq \emptyset$ by \Cref{lemma: domain dependence} i). 
The expected utility is given by, 

\begin{align}
    &\mathbb E[u \mid \pa_D, \doo (D = d) ;\tilde \sigma (q)] = \sum\limits_{\bm z} P(\bm Z = \bm z \mid \pa_D, \doo (D = d); \tilde \sigma(q))U(d, \bm x)\\
    &= \sum\limits_{\bm z} \frac{P(\bm C = \bm c \mid \doo (D = d); \tilde \sigma(q))}{P(\pa_D \mid \doo (D = d);\tilde \sigma(q))}U(d, \bm x)\\
    &= \frac{1}{P(\pa_D ; \tilde \sigma (q) )}\sum\limits_{\bm z} P(\bm C = \bm c ; \tilde \sigma (q) )U(d, \bm x)\\
    &= \frac{1}{P(\pa_D ; \tilde \sigma (q) )}\sum\limits_{\bm z} q P(\bm C= \bm c ; \sigma ) U(d, \bm x) + (1-q)P(\bm C = \bm c ; \sigma' ) U(d, \bm x') \label{eq: piecewise}
\end{align}

Note that for $q = 1$ the optimal decision is $d_1$ and for $q = 0$ the optimal decision returned by the policy oracle belongs to the set $\{d \text{ s.t. } d = \argmax_d U(d, \bm x')\}$ which does not contain $d_1$. 
Furthermore, the argmax of \eqref{eq: piecewise} with respect to $d$ is a piecewise linear function with domain $q\in [0,1]$. 
Therefore there must be some $q = q_\text{crit}$ that is the smallest value of $q$ such that for $q < q_\text{crit}$ the policy oracle returns an optimal decision in the set $\{d \text{ s.t. } d = \argmax_d U(d, \bm x')\}$ and for $q\geq q_\text{crit}$ the optimal decision is not in this set.
The value of $q_\text{crit}$ is given by $\mathbb E[u \mid \pa_D, \doo (D = d) ;\tilde \sigma (q_\text{crit})] = 0$, which by \eqref{eq: piecewise} is, 
\begin{equation}
    q_\text{crit} \sum\limits_{\bm z} P(\bm C = \bm c; \sigma )[U(d_2, \bm x) - U(d_3, \bm x)] + (1-q_\text{crit})[U(d_2, \bm x') - U(d_3, \bm x')] = 0 
\end{equation}
where $d_2 \in \{d \text{ s.t. } d = \argmax_d U(d, \bm x')\}$ and $d_3 \not\in \{d \text{ s.t. } d = \argmax_d U(d, \bm x')\}$.
This yields the following expression for $q_\text{crit}$, 
\begin{equation}\label{eq: q crit}
    q_\text{crit} = \left( 1 - \frac{\sum\limits_{\bm z} P(\bm C = \bm c; \sigma )[U(d_2, \bm x) - U(d_3, \bm x)]}{U(d_2, \bm x') - U(d_3, \bm x')}\right)^{-1}
\end{equation}
where we have used $\sum\limits_{\bm z} P(\bm C = \bm c; \sigma' )[U(d_2, \bm c) - U(d_3, \bm c)] = U(d_2, \bm x') - U(d_3, \bm x')$. 
We can determine $\sum_{\bm z} P(\bm C = \bm c; \sigma )[U(d_2, \bm x) - U(d_3, \bm x)]$ given $q_\text{crit}$ and the utility function $U(d, \bm x)$. 

Finally, we describe a \Cref{alg: q est} (below) that uses a policy oracle for the Monte Carlo estimation of $q_\text{crit}$, which can be used to determine $\sum_{\bm z} P(\bm C = \bm c; \sigma )[U(d_2, \bm c) - U(d_3, \bm c)]$) in the asymptotic limit $N \rightarrow \infty$ as well as identifying $d_2, d_3$.

\begin{algorithm}
    \begin{algorithmic}
    \State $d_1 \leftarrow \Pi^*_{\Sigma}(\sigma)$ \State $\sigma', d_2 \leftarrow \text{ any hard intervention on }\bm C \text{  s.t. } d_2 = \argmax_d U(d, \bm x) \neq d_1$
    \State $D(q = 1) \leftarrow \{d \text{ s.t. } d = \argmax_d U(d, \bm x')\}$
    \State $\theta = 0$
    \For{ $i\leftarrow 1$ $\text{to }$ $N$}
        \State $q\sim \text{Uniform}(0, 1)$
        \State $\pi^*(d\mid \pa_D) \leftarrow \Pi^*_{\Sigma} (\sigma_3(q))$
    \If{$d\in D(q=1)$\,\, $\forall$  $\pi^*(d\mid \pa_D) > 0$}
        \State $\theta \leftarrow \theta + 1$
    \EndIf    
    \EndFor
    \State $q_\text{crit} = \theta / N$
    \State $D(q_\text{crit}) \leftarrow \Pi^*_{\Sigma} (q_\text{crit} \sigma + (1-q_\text{crit})\sigma')$
    \State $d_3\in D(q_\text{crit})$, $d_3\neq d_2$
    \State\Return $q_\text{crit}, d_2, d_3$
    \end{algorithmic}
    \caption{Identify $q_\text{crit}, d_2, d_3$ using policy oracle. Input: $(U, \Pi^*_{\Sigma}, N, \sigma)$}
    \label{alg: q est}
\end{algorithm}

\end{proof}

We are now ready to derive \Cref{theorem: main}. 

\maintheorem*

\begin{proof}

We learn the graph $G$ and parameters $P(c_i \mid \pa_i)$ by learning `leave-one-out' interventional distributions $P(c_i \mid \doo (c_1, \ldots, c_{i-1}, c_{i+1}, \ldots, c_N))$.
Note that under this intervention $C_i$ depends only on its parent set and hence $P(c_i \mid \doo (c_1, \ldots, c_{i-1}, c_{i+1}, \ldots, c_N) = P(c_i \mid \pa_i)$ where $\Pa_i = \pa_i$ denotes the state of $\Pa_i$ under the leave-one-out intervention. 
Almost all $P$ are causally faithful \citep{meek2013strong}.
Hence, for almost all $P$, these interventional distributions can be used to determine $\Pa_i$ as in the interventional distribution $C_i \not\perp C_j$ if and only if $C_j \in \Pa_i$. 
Explicitly, for almost all environments, $C_j\in \Pa_i$ if and only if there are two leave-one-out interventions that differ only on $C_j$ with $C_j = c_j$ and $C_j = c_j'$ such that $P(c_i \mid \doo (c_1, \ldots, c_j, \ldots,  c_{i-1}, c_{i+1}, \ldots, c_N)) \neq P(c_i \mid \doo (c_1, \ldots, c'_j, \ldots,  c_{i-1}, c_{i+1}, \ldots, c_N))$.
For ease of notation we will use $P(c_i \mid \pa_i)$ interchangeably with $P(c_i \mid \doo (c_1, \ldots, c_j, \ldots,  c_{i-1}, c_{i+1}, \ldots, c_N))$.

First we learn the parameters for chance variables that have a directed path to $U$ that does not include $D$, i.e. are ancestors of $U$ in the graph $G_{\hat D}$ where we intervene on $D$. 

\textbf{Case 1: learning parameters for $C_i\in \Anc_U(G_{\hat D})$.} 

Consider a directed path $C_{k}\rightarrow \ldots \rightarrow C_1$ where $C_1\in \Pa_U$ and all variables are chance nodes (the path does not include $D$). 
Assume we know $\Pa_{k - 1}, \ldots, \Pa_1$ and the parameters $P(C_i \mid \pa_i)$ for $i = k-1, \ldots, 1$. 
We show that given these parameters we can identify the unknown parameters $P(c_{k} \mid \pa_{k})$ (and hence $\Pa_{k}$). 
Define $ \bm Y = \bm C \setminus \{C_{k}, \ldots, C_1\}$ and consider the local intervention $\sigma = \doo (y_1, \ldots, y_{N - k}, c_{k} = f(c_{k}))$ where $\doo (c_{k} = f(c_{k}))$ is a local intervention on $C_k$ such that, 
\begin{equation}
    f(C_k) = \begin{cases}
    c_k' \, ,\, C_k = c_k' \\
    c_k'' \, \text{ otherwise}
    \end{cases}\label{eq: local int}
\end{equation}
I.e. $f(C_k)$ maps $C_k$ to a 2 dimensional subspace where the image of $C_k = c_k'$ is $C_k = c_k'$ and all other states being mapped to $C_k = c_k''$, where $c_k', c_k''\neq c_k'$ are arbitrary states of $C_k$.
In the following we mask all inputs to the policy $\Pa_D' = \emptyset$.

By \Cref{lemma: policy oracle} we can identify, 
\begin{align*}
&\sum_{\bm c} P(\bm C = \bm c ; \sigma)[U(d, \bm c) - U(d', \bm c)]\\
   &=\sum_{c_{k}}\ldots \sum_{c_{1}}P(c_{k} \mid \pa_{k} ; \sigma)\ldots P(c_1 \mid \pa_1; \sigma)[U(d,\bm c) - U(d',\bm c)]\\
    &= \sum_{c_{k}}P(c_{k} \mid \pa_{k}; \sigma)\beta (c_{k})\numberthis\label{eq: beta 1}
\end{align*}
where 
\begin{equation}
    \beta (c_{k}) := \sum_{c_{k - 1}}\ldots \sum_{c_{1}}P(c_{k-1} \mid \pa_{k-1}; \sigma)\ldots P(c_1 \mid \pa_1; \sigma)[U(d,\bm  c) - U(d', \bm c)]
\end{equation}
and $\beta (c_1) := [U(d, \bm c) - U(d', \bm c)]$.
Note that in \eqref{eq: beta 1} $\beta (c_{k})$ is determined by the known parameters $P(c_{k-1}\mid \pa_{k-1}), \dots, P(c_1\mid \pa_1)$ and $U(\pa_U)$, and $\beta (c_{k})$ are non-zero for almost all $P, U$ as $\beta (c_{k}) = 0$ is a polynomial equation in these parameters it is not satisfied for almost all $P, U$.

Using the definition of the local intervention in \eqref{eq: local int} we have $P(C_{k} = c_{k}' \mid \pa_{k} ; \sigma) = P(C_{k} = c_{k}' \mid \pa_{k})$, and $P(C_{k} = c_{k}'' \mid \pa_{k} ; \sigma) = 1 - P(C_i = c_{k}' \mid \pa_{k} ; \sigma) = 1 - P(C_{k} = c_{k}' \mid \pa_{k})$.
Therefore the right hand side of \eqref{eq: beta 1} has a single undetermined parameter $P(C_{k} = c_{k}' \mid \pa_{k})$ and the left hand side can be determined using the policy oracle (\Cref{lemma: policy oracle}), and we can solve for $P(C_{k} = c_{k}' \mid \pa_{k})$.
By repeating this procedure with different interventions, varying the hard intervention $\doo (\bm Y = \bm y)$ and the choices of $c_k', c_k''$, we can identify $P(c_k \mid \pa_k)$ for all $c_k, \pa_k$ and hence $\Pa_k$.

We now learn the parameters for all $C_i \in \Anc_U(G_{\hat D})$.
We know the set $\Pa_U$ as this is the domain of the utility function $U(\Pa_U)$ which is known by assumption.
We can then proceed iteratively, first learning the parameters of $P, G$ that are  $P(c_1 \mid \pa_1)$ and $\Pa_1$ for some $C_1 \in \Pa_U$. 
We can do this as $\beta (c_1) = U(d, \bm x) - U(d', \bm x)$ with $d, d' \bm x$ returned by \Cref{alg: q est} in \Cref{lemma: policy oracle} and $U(\Pa_U)$ is known. We can then determine the parameters for all $C_j \in \Pa_1$, and so on until we have traversed $\Anc_1$. 
We repeat this for all $C_i \in \Pa_U$ until we have covered all $\Anc_{U}(G_{\hat D})$.

\textbf{Case 2: learning parameters for $C_i \in \Anc_D$, $C_i \not \in \Anc_U(G_{\hat D})$.}

Consider $C_k\in \Anc_U$ for which all directed paths to $U$ are via $D$, $C_k \rightarrow C_{k-1} \rightarrow \ldots \rightarrow C_1$ where $C_1 \in \tilde Pa_D$. 
As before, assume we know $\Pa_{k - 1}, \ldots, \Pa_1$ and the parameters $P(C_i \mid \pa_i)$ for $i = k-1, \ldots, 1$. 
We now show that given these parameters we can identify the unknown parameters $P(c_{k} \mid \pa_{k})$ (and hence $\Pa_{k}$). 
Define $ \bm Y = \bm C \setminus \{C_{k}, \ldots, C_1\}$ and let $\sigma = \doo (y_1, \ldots, y_{N - k}, c_{k} = f(c_{k}))$ where $\doo (c_{k} = f(c_{k}))$ is a local intervention defined in \eqref{eq: local int}. 
We now mask all evidence except $C_1$, i.e. $\Pa_D' = \{C_1\}$. 
Note that as $C_1\not \in \Pa_U$ we can apply \Cref{lemma: policy oracle}, giving (for $k\geq 2$)
\begin{align*}
&\sum_{\bm z} P(\bm C = \bm c ; \sigma)[U(d, \bm c) - U(d', \bm c)]\\
   &=\sum_{c_{k}}\ldots \sum_{c_{2}}P(c_{k} \mid \pa_{k}; \sigma)\ldots P(c_1 \mid \pa_1)[U(d, \bm c) - U(d',\bm c)] \numberthis\\
    &= \sum_{c_{k}}P(c_{k} \mid \pa_{k})\alpha (c_{k})\numberthis
\end{align*}
where $\bm Z = \bm C \setminus \{C_1\}$ and,
\begin{equation}
    \alpha (c_{k}) := \sum_{c_{k - 1}}\ldots \sum_{c_{2}}P(c_{k-1} \mid \pa_{k-1})\ldots P(c_1 \mid \pa_1)[U(d, \bm c) - U(d', \bm c)]
\end{equation}
and for $k = 1$ we have, 
\begin{equation}
    \sum_{\bm z} P(\bm C = \bm c ; \sigma)[U(d, \bm c) - U(d', \bm c)] = P(C_1 = c_1 \mid \pa_1 ; \sigma) \alpha (1) 
\end{equation}
where $\alpha (1) := [U(d, \bm x) - U(d', \bm x)]$.
We can determine $\alpha (c_{k})$ as we know the parameters for $C_{k-1}, \ldots, C_1$ by assumption, and $\alpha (c_k)\neq 0$ for almost all $P, U$ as the equation $\alpha (c_k) = 0$ is a polynomial in the model parameters by \Cref{lemma: nontrivial polynomial} it is not satisfied for almost all $P, U$. 
Using the definition of the local intervention \eqref{eq: local int} we have $P(C_{k} = c_{k}' \mid \pa_{k} ; \sigma) = P(C_{k} = c_{k}' \mid \pa_{k})$, and $P(C_{k} = c_{k}'' \mid \pa_{k} ; \sigma) = 1 - P(C_i = c_{k}' \mid \pa_{k} ; \sigma) = 1 - P(C_{k} = c_{k}' \mid \pa_{k})$.
Therefore the right hand side of \eqref{eq: beta 1} has a single undetermined parameter $P(C_{k} = c_{k}' \mid \pa_{k})$ and the left hand side can be determined using the policy oracle (using \Cref{lemma: policy oracle}, noting $\Pa'_D = \{C_1\}$ and $\{C_1\} \cap \Pa_U = \emptyset$), and we can solve for $P(C_{k} = c_{k}' \mid \pa_{k})$.
By repeating this procedure with different interventions, varying the hard intervention $\doo (\bm Y = \bm y)$ and the choices of $c_k', c_k''$, we can identify $P(c_k \mid \pa_k)$ for all $c_k, \pa_k$ and hence $\Pa_k$.

We now learn the parameters for all $C_i \in \Anc_D \setminus \Anc_U(G_{\hat D})$.
We know $\Pa_D$ from the domain of the policy returned by the policy oracle. 
If the parameters for all variables in $\Pa_D$ have be learned in the previous set, we are finished. 
Otherwise, there are variables that are in $\Anc_U$ for which all directed paths to $U$ are via $D$. 
Let this set of variables by $\tilde \Pa_D\subseteq \Pa_D$. 
For any $C_1 \in \tilde \Pa_D$ we can determine $\alpha (c_1) = U(d, \bm x) - U(d', \bm x)$ with $d, d', \bm x$ returned by \Cref{alg: q est} in \Cref{lemma: policy oracle}, noting that $C_1 \not\in \Pa_U$. 
We can then determine the parameters for all $C_j \in \Pa_1$, and so on until we have traversed $\Anc_1$, and repeat until we have learned the parameters for all $C_i \in \Anc_D \setminus \Anc_U(G_{\hat D})$.

\end{proof}

\section{Proof of \Cref{theorem: main approx}}\label{appendix: approximate}

In this section we derive a version of \Cref{lemma: policy oracle} using a $\delta$-optimal policy oracle for $\delta > 0$.
The reason we consider this case is that \Cref{theorem: main} assumes optimality, which is a strong assumption that won't be satisfied by realistic systems. 
It is therefore important to determine if our main results are contingent on this assumption.  
For example, it may be that we can only identify a causal model from the agent's policy for $\delta = 0$, and for $\delta > 0$ no causal model can be learned. 
Instead, what we find is that realistic agents with $\delta > 0$ have to learn approximate causal models, with the fidelity of these approximations increasing in a reasonable way as $\delta \rightarrow 0$.

\paragraph{Low-regret analysis.} What is a reasonable way for the approximation errors to change with $\delta$?
Clearly, if an agent has an arbitrarily large regret bound we cannot expect to learn anything about the environment from its policy. 
For example, a completely random policy can satisfy a large enough regret bound, and an agent does not need to learn anything about the environment to learn this policy. 
Therefore we must still constrain the regret to be small in our analysis, and the standard way to do this by an order analysis. 

We define `small regret' as $\delta \ll \mathbb E^{\pi^*}[U]$. 
As we work with the normalised utility function (see \Cref{appendix: setup}), we have $\mathbb E^{\pi^*}[U] \leq 1$ and so we can define the small regret regime as $\delta \ll 1$. 
What we find is that for small $\delta$ the order of the error in our estimation of the model parameters grows linearly with the order of increase in the regret for agents that incur only a small regret.
Therefore we get a linear trade-off between regret and accuracy for small $\delta$.

First we show that \Cref{alg: q est} allows us to estimate the value of $Q = \sum_{\bm z} P( \bm C= \bm c; \sigma )[U(d, \bm c) - U(d', \bm c)]$ with an approximate value $\tilde Q$, and estimate bounds $\tilde Q^\pm$ such that the true value of $Q$ is guaranteed to satisfy $\tilde Q^- \leq Q \leq \tilde Q^+$. 

\begin{lemma} \label{lemma: main approx}
Using a $\delta$-optimal policy oracle $\Pi^\delta_{\Sigma}$ where $\Sigma$ includes all mixtures of local interventions, including masking inputs $\Pa_D'\subseteq \Pa_D$, then for any given $\Pa_D' = \pa_D'$ such that $\Pa_D' \cap \Pa_U = \emptyset$, we can determine $d, d',\bm x'$ where $d\neq d'$ and a point estimate $\tilde Q$ for $Q(\pa_D, d, d') :=  \sum_{\bm z} P( \bm C= \bm c; \sigma )[U(d, \bm x) - U(d', \bm x)] < 0$ and bounds $Q \in [\tilde Q^-, \tilde Q^+]$ where $\bm Z = \bm C \setminus \Pa_D$, $\bm X = \Pa_U\setminus \{D\}$ and,
\begin{equation}
  \frac{1}{1 - \xi} (Q - \delta)  \leq  \tilde Q \leq \frac{1}{1 + \xi}(Q + \delta)
\end{equation}
where 
\begin{equation}
    \xi := \delta / (U(d, \bm x') - U(d', \bm x')) > 0
\end{equation}
and in the worst case these bounds scale with $\delta$ as
\begin{align}
    \tilde Q^+ &\leq \left(\frac{1 - \xi}{1 + \xi}\right)Q + \frac{2 \delta}{1 + \xi}\\
    \tilde Q^- &\geq  \left(\frac{1 + \xi}{1 - \xi}\right)Q -  \frac{2 \delta}{1 - \xi}
\end{align}

\end{lemma}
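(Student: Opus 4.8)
I would follow the architecture of the proof of \Cref{lemma: policy oracle}, keeping track of how the slack $\delta$ smears out the indifference point that \Cref{alg: q est} is designed to locate. First, exactly as there, use \Cref{lemma: domain dependence}(iii) together with \Cref{assumption: environment-dependent} to pick a decision $d$ and a hard intervention $\sigma'$ on $\bm C$ whose $\bm X := \Pa_U\setminus\{D\}$-coordinate $\bm x'$ satisfies $d = \argmax_{d''} U(d'',\bm x')$ while the oracle's decision under $\sigma$ lies outside $\argmax_{d''} U(d'',\bm x')$; I would also take $\sigma'$ to fix $\Pa_D$ at the target context $\pa_D$, so that under the mixtures $\tilde\sigma(q) := q\sigma + (1-q)\sigma'$ one has $P(\pa_D;\tilde\sigma(q)) = qP(\pa_D;\sigma) + (1-q)$ and $\sum_{\bm z} P(\bm C=\bm c;\sigma') = 1$. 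Writing $Q = Q(\pa_D,d,d') := \sum_{\bm z} P(\bm C=\bm c;\sigma)[U(d,\bm x) - U(d',\bm x)]$ for a competitor $d'$ and $A := U(d,\bm x') - U(d',\bm x')$, the computation behind \eqref{eq: piecewise} shows that $P(\pa_D;\tilde\sigma(q))$ times the expected-utility gap between $d$ and $d'$ at context $\pa_D$ under $\tilde\sigma(q)$ equals the affine function $h(q) := qQ + (1-q)A$. Its zero $q^{*}$ solves $q^{*}(A-Q) = A$, which (since $A>0$) lies in $(0,1)$ precisely when $Q<0$; this is why $Q(\pa_D,d,d')<0$ appears in the statement, and taking $\tilde Q := A\,(1 - 1/q^{*})$ recovers $\tilde Q = Q$, i.e.\ reproduces \eqref{eq: q crit} in the limit $\delta\to 0$.

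The quantitative heart of the argument is to sandwich the Monte-Carlo output $\hat q$ of \Cref{alg: q est} --- in the $N\to\infty$ limit the probability, over $q\sim\mathrm{Unif}[0,1]$, that the oracle returns a policy supported on $D(q{=}1) = \{d\}$ --- inside $[(A-\delta)/(A-Q),\,(A+\delta)/(A-Q)]$. For the upper bound: if the oracle returns the pure decision $d$ at $\pa_D$ under $\tilde\sigma(q)$, then switching $d$ to the optimal decision at $\pa_D$ can raise $\mathbb E^{\pi}[U]$ by at most $\delta$, which is exactly the inequality $h(q) \ge -\delta$ after the $P(\pa_D;\tilde\sigma(q))$ factors cancel; hence $q \le (A+\delta)/(A-Q)$ and so $\hat q \le (A+\delta)/(A-Q)$. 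For the lower bound: choosing $d'$ to be the decision the oracle is driven onto just above the crossover, whenever $h(q) > \delta$ one has (after the same cancellation) that playing any $d''\neq d$ at $\pa_D$ would cost more than $\delta$, forcing the oracle onto $d$; since $h(q)>\delta \iff q < (A-\delta)/(A-Q)$, this gives $\hat q \ge (A-\delta)/(A-Q)$. Combining, $\big|(A-Q)\hat q - A\big| \le \delta$.

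Finally I would set the point estimate $\tilde Q := A\,(1 - 1/\hat q) = A(\hat q - 1)/\hat q$, which is strictly increasing in $\hat q$; substituting the two extreme values $\hat q = (A\pm\delta)/(A-Q)$ and simplifying gives exactly $\frac{1}{1-\xi}(Q-\delta) \le \tilde Q \le \frac{1}{1+\xi}(Q+\delta)$ with $\xi := \delta/A = \delta/(U(d,\bm x') - U(d',\bm x'))$. Inverting these two inequalities yields the certified bounds $\tilde Q^{-} := (1+\xi)\tilde Q - \delta \le Q \le (1-\xi)\tilde Q + \delta =: \tilde Q^{+}$, and re-substituting the extreme values of $\tilde Q$ back into $\tilde Q^{\pm}$ produces the displayed worst-case scalings $\tilde Q^{+} \le \frac{1-\xi}{1+\xi}Q + \frac{2\delta}{1+\xi}$ and $\tilde Q^{-} \ge \frac{1+\xi}{1-\xi}Q - \frac{2\delta}{1-\xi}$. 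The main obstacle is the lower bound on $\hat q$: making precise that below the smeared crossover the $\delta$-optimal oracle is genuinely \emph{forced} onto $d$ requires simultaneously controlling all competing decisions (not merely $d'$) --- using genericity of $U$ via \Cref{lemma: nontrivial polynomial} to rule out ties within $\delta$ and concavity of the advantage envelope $q\mapsto\min_{d''\neq d}\big(\text{gap to }d''\big)$ to propagate the margin towards $q=0$ --- and controlling the per-context normalisation $P(\pa_D;\tilde\sigma(q))$, and (if randomised policies are permitted) reducing to deterministic $\delta$-optimal policies; by comparison the upper bound and the closing algebra are routine.
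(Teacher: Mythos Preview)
Your proposal is correct and follows essentially the same route as the paper: mix $\sigma$ with a hard intervention $\sigma'$, track the affine gap $h(q)=qQ+(1-q)\Delta_0$, use the regret bound to sandwich the output of \Cref{alg: q est} within $q_{\mathrm{crit}}(1\pm\xi)$, and push these bounds through $\tilde Q=\Delta_0(1-1/\tilde q)$ to obtain the displayed inequalities and worst-case scalings. Your handling is in places slightly tighter---you note that the $P(\pa_D;\tilde\sigma(q))$ factor cancels exactly when passing from conditional to total regret (the paper instead relaxes it to $1$), and you correctly flag as the main obstacle that the lower bound on $\hat q$ requires controlling \emph{all} competing decisions rather than just $d'$, a point the paper's own proof leaves implicit.
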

\begin{proof}

By \Cref{lemma: single decision} for almost all $P, U$ there is a single optimal decision following the shift $\sigma$. 
Let $d_1$ be the decision returned by the policy returned by the policy oracle in the context $\Pa_D' = \pa_D'$, which must satisfy the bound $\mathbb E[u \mid d, \pa_D ; \sigma ] \leq \max_d \mathbb E[u \mid d, \pa_D ; \sigma ] - \delta$.

Consider a hard intervention on all $C_i\in \bm C$, $\sigma' := \doo (c'_1, c'_2, \ldots, c'_N)$ where for all $C_i \in \Pa_D'$ we set $C_i = c_i$ to be the same state as in observation $\Pa_D' = \pa_D'$. 
The expected utility under this intervention is $\mathbb E[u \mid \doo (D = d), \pa_D'; \sigma'] = U(d, \bm x')$ where $\bm X = \Pa_U \setminus \{D\}$ (and we have that $D\in \Pa_U$ from \Cref{lemma: domain dependence} iii)).

Next we show that there is a choice of hard intervention $\sigma'$ such that the policy oracle must return different optimal decisions in the context $\Pa_D' = \pa_D'$ for $\sigma'$ and $\sigma$.
As $\Pa_D' \cap \Pa_U = \emptyset$ then we are free to choose any $X = x'$ and the resulting $\sigma'$ will be compatible with the evidence $\Pa_D' = \pa_D'$.
Note that by \Cref{lemma: domain dependence} i) $\exists$ $\bm X = \bm x'$ s.t. $d_1 \neq \argmax_d U(d, x')$, else $D = d_1$ is optimal for all $\bm X = \bm x$ which violates domain dependence. 
We can determine this $\bm X = \bm x'$ given the utility function and $d_1$.
Let $d_2 = \argmax_d U(d, \bm x')$ and $\sigma'=\doo (c'_1, c'_2, \ldots, c'_N)$ be the hard intervention for which $\bm X = \bm x'$ and $\Pa_D' = \pa_D'$.
Note, we do not use the policy oracle to determine $d_2$ which can be determined from $U(\Pa_U)$ alone, and hence there is no uncertainty if $d_2$ is in fact optimal under $\sigma'$ for any regret bound, nor that $d_1$ is not optimal under $\sigma'$.

Consider the joint distribution over $\bm C$ under the mixed local intervention $\tilde \sigma (q) = q \sigma + (1-q) \sigma'$, 
\begin{align}
    P(\bm C = \bm c \mid \doo (D = d); \tilde \sigma(q)) &= P(\bm C = \bm c ;  \tilde \sigma(q))\\
    &= q P(\bm C= \bm c  ; \sigma) + (1-q) P(\bm C = \bm c ; \sigma')
\end{align}
where in the first line we have used $\Ch_D = \{U\}$ to drop the intervention. 
Note that $\bm Z = \bm C \setminus \Pa_D \neq \emptyset$ by \Cref{lemma: domain dependence} i). 
The expected utility is given by, 

\begin{align}
    &\mathbb E[u \mid \pa_D, \doo (D = d) ;\tilde \sigma (q)] = \sum\limits_{\bm z} P(\bm Z = \bm z \mid \pa_D, \doo (D = d); \tilde \sigma(q))U(d, \bm x)\\
    &= \sum\limits_{\bm z} \frac{P(\bm C = \bm c \mid \doo (D = d); \tilde \sigma(q))}{P(\pa_D \mid \doo (D = d);\tilde \sigma(q))}U(d, \bm x)\\
    &= \frac{1}{P(\pa_D ; \tilde \sigma (q) )}\sum\limits_{\bm z} P(\bm C = \bm c ; \tilde \sigma (q) )U(d, \bm x)\\
    &= \frac{1}{P(\pa_D ; \tilde \sigma (q) )}\sum\limits_{\bm z} q P(\bm C= \bm c ; \sigma ) U(d, \bm x) + (1-q)P(\bm C = \bm c ; \sigma' ) U(d, \bm x') \label{eq: piecewise here}
\end{align}

Note that for $q = 1$ the optimal decision is $d_1$ and for $q = 0$ the optimal decision returned by the policy oracle belongs to the set $\{d \text{ s.t. } d = \argmax_d U(d, \bm x')\}$ which does not contain $d_1$. 
Furthermore, the argmax of \eqref{eq: piecewise here} with respect to $d$ is a piecewise linear function with domain $q\in [0,1]$. 
Therefore there must be some $q = q_\text{crit}$ that is the smallest value of $q$ such that for $q < q_\text{crit}$ the policy oracle returns an optimal decision in the set $\{d \text{ s.t. } d = \argmax_d U(d, \bm x')\}$ and for $q\geq q_\text{crit}$ the optimal decision is not in this set.
The value of $q_\text{crit}$ is given by $\mathbb E[u \mid \pa_D, \doo (D = d) ;\tilde \sigma (q_\text{crit})] = 0$, which by \eqref{eq: piecewise here} is, 
\begin{equation}
    q_\text{crit} \sum\limits_{\bm z} P(\bm C = \bm c; \sigma )[U(d_2, \bm x) - U(d_3, \bm x)] + (1-q_\text{crit})[U(d_2, \bm x') - U(d_3, \bm x')] = 0 
\end{equation}
where $d_2 \in \{d \text{ s.t. } d = \argmax_d U(d, \bm x')\}$ and $d_3 \not\in \{d \text{ s.t. } d = \argmax_d U(d, \bm x')\}$.
This yields the following expression for $q_\text{crit}$, 
\begin{equation}\label{eq: q crit here}
    q_\text{crit} = \left( 1 - \frac{\sum\limits_{\bm z} P(\bm C = \bm c; \sigma )[U(d_2, \bm x) - U(d_3, \bm x)]}{U(d_2, \bm x') - U(d_3, \bm x')}\right)^{-1}
\end{equation}
where we have used $\sum\limits_{\bm z} P(\bm C = \bm c; \sigma' )[U(d_2, \bm c) - U(d_3, \bm c)] = U(d_2, \bm x') - U(d_3, \bm x')$.

While \Cref{alg: q est} identifies the smallest value of $q$ such that the optimal policy changes, as we no longer have an optimal policy oracle, the probability $\tilde q$ returned by \Cref{alg: q est} is no longer necessarily equal to $q_\text{crit}$. 
Instead, there are minimal and maximal value of $\tilde q$ that \Cref{alg: q est} can return, which are determined by the regret bound (see \Cref{fig:bounds}).

Our first aim is to bound $q_\text{crit}$ using $\tilde q$ returned by the policy oracle.
The maximal (minimal) values $\tilde q$ can take while satisfying the regret bound are $q^\pm$, which are the solutions to the equations

\begin{align}
    \delta &\geq \mathbb E[u \mid \pa_D, \doo (D = d_2) ; \tilde \sigma (q^+)] - \mathbb E[u \mid \pa_D, \doo (D = d_3) ; \tilde \sigma (q^+)]\\
    - \delta &\leq \mathbb E[u \mid \pa_D, \doo (D = d_2) ; \tilde \sigma (q^-)] - \mathbb E[u \mid \pa_D, \doo (D = d_3) ; \tilde \sigma (q^-)]
\end{align}

Using \eqref{eq: piecewise here} these simplify to
\begin{align}
    \delta P(\pa_D ; \sigma (q^+) ) &\geq  q^+ \sum\limits_{\bm z} P(\bm C = \bm c; \sigma )[U(d_2, \bm x) - U(d_3, \bm x)] + (1-q^+)[U(d_2, \bm x') - U(d_3, \bm x')]\\
    \delta P(\pa_D ; \sigma (q^-) ) &\leq q^- \sum\limits_{\bm z} P(\bm C = \bm c; \sigma )[U(d_2, \bm x) - U(d_3, \bm x)] + (1 - q^-)[U(d_2, \bm x') - U(d_3, \bm x')] 
\end{align}

We can relax and simplify these bounds by taking the maximum possible values for the unknown quantity $P(\pa_D ; \sigma (\tilde q) ) \rightarrow 1$ giving, 

\begin{align}
    \delta  &\geq  q^+ \sum\limits_{\bm z} P(\bm C = \bm c; \sigma )[U(d_2, \bm x) - U(d_3, \bm x)] + (1-q^+)[U(d_2, \bm x') - U(d_3, \bm x')] \label{eq: upper}\\
    \delta &\leq q^- \sum\limits_{\bm z} P(\bm C = \bm c; \sigma )[U(d_2, \bm x) - U(d_3, \bm x)] + (1 - q^-)[U(d_2, \bm x') - U(d_3, \bm x')] \label{eq: lower}
\end{align}

Let $\Delta_1 := \mathbb E[u \mid \pa_D, \doo (D = d_2) ; \sigma] - \mathbb E[u \mid \pa_D, \doo (D = d_3) ; \sigma]$ and $\Delta_0 := U(d_2, \bm x') - U(d_3, \bm x')$.
Note that $\Delta_0 > 0$ as $d_2$ is optimal under $\sigma'$, and $\Delta_1 < 0$ as by linearity we have $ \mathbb E[u \mid \pa_D, \doo (D = d_3) ; \tilde \sigma (q)] >  \mathbb E[u \mid \pa_D, \doo (D = d_2) ; \tilde \sigma (q)]$ for $q > q_\text{crit}$ and $q_\text{crit} < 1$ therefore $ \mathbb E[u \mid \pa_D, \doo (D = d_3) ; \tilde \sigma (1)] >  \mathbb E[u \mid \pa_D, \doo (D = d_2) ; \tilde \sigma (1)]$. 
We now define $q^\pm$ w.r.t the (relaxed) bounds \eqref{eq: upper} and \eqref{eq: lower}, and simplifying these inequalities using \eqref{eq: q crit here} gives
\begin{align}
    \tilde q &\leq q_+ = \min\{ 1,  q_\text{crit}(1 + \xi)\} \label{eq: qmax}\\
    \tilde q &\geq q_- = \max \{ 0, q_\text{crit}(1 - \xi)\}\label{eq: qmin}
\end{align}
where 
\begin{equation}
    \xi := \delta / \Delta_0 > 0
\end{equation}
We therefore generate bounds on $q_\text{crit}$ using \eqref{eq: qmin} and \eqref{eq: qmax}, i.e.
\begin{align}
   q_\text{crit} &\leq \tilde q / (1 - \xi)   \label{eq: bounds max}\\
   q_\text{crit} &\geq  \tilde q / (1 + \xi)    \label{eq: bounds min}
\end{align}
We use $\tilde q$ in place of $q_\text{crit}$ in \eqref{eq: q crit here}, giving an estimate $\tilde Q$ for $Q = \sum_{\bm z} P(\bm C = \bm c; \sigma )[U(d_2, \bm x) - U(d_3, \bm x)]$, yielding, 
\begin{equation}
    \tilde Q = \Delta_0 \left(1 - 1/\tilde q \right)\label{eq: tilde Q}
\end{equation}

Finally, applying bounds \eqref{eq: bounds max} and \eqref{eq: bounds min} gives,
\begin{equation}
  \frac{1}{1 - \xi} \left( \Delta_0 - \delta \right)  \leq  \tilde Q \leq \frac{1}{1 + \xi}  \left( \Delta_0 + \delta \right)
\end{equation}

Next, we determine upper and lower bounds $Q^\pm (\tilde q)$ using $Q = \Delta_0 (1- 1/ q_\text{crit})$ and \eqref{eq: bounds max} and \eqref{eq: bounds min} giving, 
\begin{align}
    Q &\leq \tilde Q^+(\tilde q) = \Delta_0 \left(1 - \frac{1 - \xi}{\tilde q } \right)\\
    Q &\geq \tilde Q^-(\tilde q) = \Delta_0 \left(1 - \frac{1 + \xi}{\tilde q } \right)
\end{align}
noting that as \eqref{eq: tilde Q} is monotonic in $q$ that the true value of $Q$ is guaranteed to fall between these bounds. 
Finally, we derive expressions for the worst-case bounds in terms of the true value of $Q$, which are given by determining $\tilde Q^\pm$ for the max and min values of $\tilde q$ which are given by \eqref{eq: bounds min} and \eqref{eq: bounds max}, 
\begin{align}
    Q^+ &= \max_{\tilde q} \tilde Q^+(\tilde q) = \Delta_0 \left(1 - \frac{1 - \xi }{1 + \xi} \frac{1}{q_\text{crit}}\right)\\
    &= \left(\frac{1 - \xi}{1 + \xi}\right)Q + \frac{2 \delta}{1 + \xi}\\
    Q^- &= \min_{\tilde q} \tilde Q^- (\tilde q) = \Delta_0 \left(1 - \frac{1 + \xi}{1 - \xi} \frac{1}{q_\text{crit}} \right)\\
    &= \left(\frac{1 + \xi}{1 - \xi}\right)Q -  \frac{2 \delta}{1 - \xi}
\end{align}
\begin{figure}[H]
    \centering
    \includegraphics[scale=0.12]{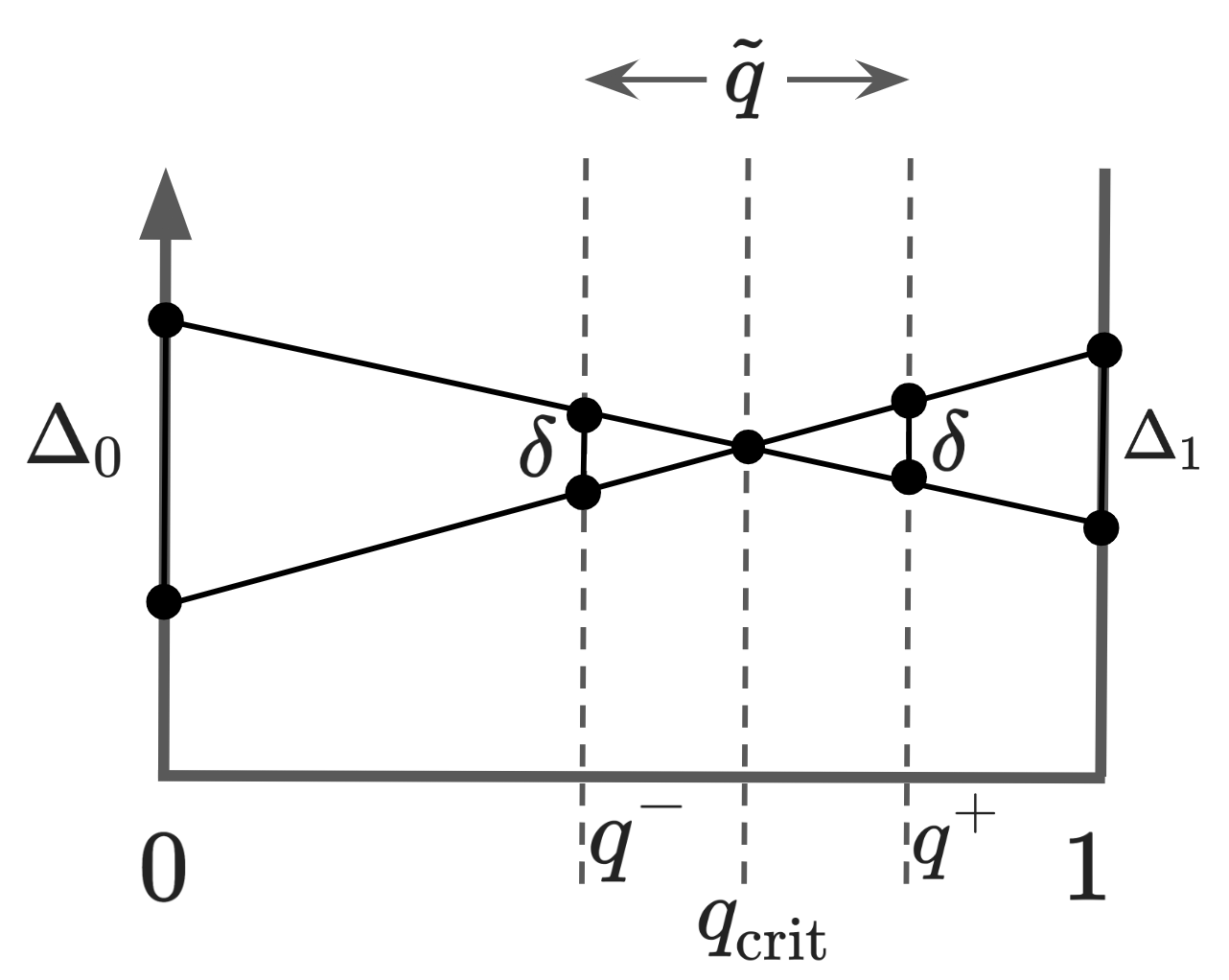}
    \caption{Overview of \Cref{lemma: main approx}. $\Delta_0 = U(d_2, \bm x') - U(d_3, \bm x')$ and $\Delta_1 = \mathbb E[u \mid \pa_D, \doo (D = d_2) ; \sigma] - \mathbb E[u \mid \pa_D, \doo (D = d_3) ; \sigma]$. Using an optimal policy oracle we can identify $q_\text{crit}$ precisely as detailed in \Cref{lemma: policy oracle}. For $\delta > 0$ instead of returning $q_\text{crit}$ \Cref{alg: q est} returns $\tilde q$, as the agent can incur regret and so the value of $q$ for which the policy changes is no longer constrained to be $q_\text{crit}$. We use $\tilde q$ in place to $q_\text{crit}$ to calculate an approximate value of the target query, in the same way as in \Cref{lemma: policy oracle}. The maximum and minimum values of $\tilde q$ can take are $q^\pm$ which result in maximal regret $\delta$, $\tilde q \geq q^- = q_\text{crit} (1 - \delta / \Delta_0)$ and $\tilde q \leq q^+ = q_\text{crit} (1 + \delta / \Delta_0)$. We can therefore bound the amount that $\tilde Q$ deviates from the value of the target query $Q$. }
    \label{fig:bounds}
\end{figure}
\end{proof}

\begin{lemma}\label{lemma: linear regret bounds}
For $\delta \ll \mathbb E^{\pi^*}[U]$, $\tilde Q$ and $\tilde Q^\pm$ (as defined in \Cref{lemma: main approx}) satisfy bounds, 
\begin{equation}\label{eq: delta bound Q}
  \left| \tilde Q - Q\right| \leq \delta (1 -\frac{Q}{\Delta_0} ) + \mathcal O (\delta^2)
\end{equation}
and 
\begin{align}
    \tilde Q^+ - Q &\leq 2 \delta (1 - \frac{Q}{\Delta_0}) + \mathcal O(\delta^2)\label{eq: delta bound Qplus}\\
    Q - \tilde Q^- &\geq  -2 \delta (1 - \frac{Q}{\Delta_0}) + \mathcal O(\delta^2)\label{eq: delta bound Qminus}
\end{align}

\end{lemma}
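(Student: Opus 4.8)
The plan is a direct computation: both \eqref{eq: delta bound Q} and the pair \eqref{eq: delta bound Qplus}--\eqref{eq: delta bound Qminus} follow by substituting $\xi = \delta/\Delta_0$ into the closed-form bounds already established in \Cref{lemma: main approx} and Taylor-expanding $(1\pm\xi)^{-1}$ about $\xi = 0$. Throughout I would use that for almost all $P,U$ the quantity $\Delta_0 = U(d_2,\bm x') - U(d_3,\bm x')$ is nonzero (it is a non-trivial polynomial in the model parameters, so \Cref{lemma: nontrivial polynomial} applies), hence a fixed positive constant of the environment, so that $\xi\to 0$ as $\delta\to 0$; in particular the hypothesis $\delta \ll \mathbb E^{\pi^*}[U]$ forces $\xi \ll 1$ for generic environments. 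I would also invoke the sign conventions from \Cref{lemma: main approx}: $Q < 0$ and $\Delta_0 > 0$, so $1 - Q/\Delta_0 > 0$ and $\xi > 0$, together with the elementary identity $\xi Q = \delta Q/\Delta_0$, equivalently $\delta - \xi Q = \delta(1 - Q/\Delta_0)$.

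First I would prove \eqref{eq: delta bound Q}. \Cref{lemma: main approx} gives $\tfrac{Q - \delta}{1 - \xi} \le \tilde Q \le \tfrac{Q + \delta}{1 + \xi}$. Subtracting $Q$ and using $\xi Q = \delta Q/\Delta_0$, the upper bound reads $\tilde Q - Q \le \tfrac{\delta(1 - Q/\Delta_0)}{1+\xi}$ and the lower bound reads $\tilde Q - Q \ge -\tfrac{\delta(1 - Q/\Delta_0)}{1-\xi}$. Since $1-\xi \le 1+\xi$, the magnitude is controlled by the $(1-\xi)^{-1}$ side, so $|\tilde Q - Q| \le \tfrac{\delta(1-Q/\Delta_0)}{1-\xi}$. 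Expanding $(1-\xi)^{-1} = 1 + \xi + \mathcal O(\xi^2)$ and using $\xi = \delta/\Delta_0 = \mathcal O(\delta)$ (with the constant factor $1-Q/\Delta_0$ fixed) turns this into $\delta(1 - Q/\Delta_0) + \mathcal O(\delta^2)$, which is \eqref{eq: delta bound Q}.

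The worst-case bounds are handled identically. From \Cref{lemma: main approx}, $\tilde Q^+ = \tfrac{1-\xi}{1+\xi}Q + \tfrac{2\delta}{1+\xi}$, so $\tilde Q^+ - Q = \tfrac{2(\delta - \xi Q)}{1+\xi} = \tfrac{2\delta(1 - Q/\Delta_0)}{1+\xi}$; expanding $(1+\xi)^{-1} = 1 - \xi + \mathcal O(\xi^2)$ gives $\tilde Q^+ - Q \le 2\delta(1-Q/\Delta_0) + \mathcal O(\delta^2)$, i.e.\ \eqref{eq: delta bound Qplus}. Likewise $\tilde Q^- = \tfrac{1+\xi}{1-\xi}Q - \tfrac{2\delta}{1-\xi}$ yields $Q - \tilde Q^- = \tfrac{2(\delta - \xi Q)}{1-\xi} = \tfrac{2\delta(1 - Q/\Delta_0)}{1-\xi} = 2\delta(1-Q/\Delta_0) + \mathcal O(\delta^2)$, from which \eqref{eq: delta bound Qminus} follows (indeed this quantity is nonnegative and $\ge 2\delta(1-Q/\Delta_0)(1-\xi)^{-1} \ge 2\delta(1-Q/\Delta_0)$, so the stated one-sided estimate holds with room to spare).

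There is no substantive obstacle here: the mathematical content sits entirely in \Cref{lemma: main approx}, and this step is bookkeeping. The one point that warrants care is the meaning of the $\mathcal O(\delta^2)$ remainders — the implied constants depend on $\Delta_0^{-1}$ (and on $Q$), which are fixed for a given environment, so these are asymptotic statements as $\delta\to 0$ for each generic $(P,U)$ with $\Delta_0\neq 0$ rather than uniform statements over the parameter space. One should also keep track of the signs (using $Q<0$, $\Delta_0>0$, $\xi>0$) so that, for instance, the negative $\mathcal O(\delta^2)$ correction coming from $(1-\xi)^{-1}$ on the lower side does not accidentally reverse an inequality.
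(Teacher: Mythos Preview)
Your proposal is correct and follows exactly the paper's approach: Taylor-expand the closed-form bounds of \Cref{lemma: main approx} in $\xi=\delta/\Delta_0$ about $0$ and read off the linear terms. Your write-up is in fact more explicit than the paper's (which simply states ``Taylor expand the bounds on $\tilde Q,\tilde Q^\pm$ about $\delta=0$'' and records the result), and your observation that \eqref{eq: delta bound Qminus} is trivially satisfied since $Q-\tilde Q^-\ge 0$ while the right-hand side is negative is a valid remark the paper does not make.
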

\begin{proof}
As we work with the normalised utility function (see \Cref{appendix: setup}), we have $\mathbb E^{\pi^*}[U] \leq 1$ and so we can define the small regret regime as $\delta \ll 1$. 
We can Taylor expand the bounds on $\tilde Q, \tilde Q^\pm$ about $\delta = 0$ giving, 
\begin{equation}
      Q - \delta (1 -\frac{Q}{\Delta_0}) + \mathcal O(\delta^2)  \leq \tilde Q \leq   Q + \delta (1 -\frac{Q}{\Delta_0} ) + \mathcal O(\delta^2)
\end{equation}
and therefore,
\begin{equation}
  \left| \tilde Q - Q\right| \leq \delta (1 -\frac{Q}{\Delta_0} ) + \mathcal O (\delta^2)
\end{equation}
and 
\begin{align}
    \tilde Q^+ - Q &\leq 2 \delta (1 - \frac{Q}{\Delta_0}) + \mathcal O(\delta^2)\\
    Q - \tilde Q^- &\geq  -2 \delta (1 - \frac{Q}{\Delta_0}) + \mathcal O(\delta^2)
\end{align}

therefore for small $\delta$ the worst case error on our estimate $\tilde Q$ grows linearly in $\delta$, and our upper and lower bounds for $\tilde Q$ also grow linearly. 
\end{proof}

\approxtheorem*

\begin{proof}

We use the $\delta-$optimal policy oracle to estimate the model parameters following the same steps as in the proof of \Cref{theorem: main} in \Cref{appendix: main theorem}. 
However, as the policy oracle is no longer optimal, the parameters estimates will have errors. 
Here, we show that for the parameters of $P$ these errors grow linearly in $\delta$ for $\delta \ll 1$, and that we learn a sparse sub-graph of $G$. 

\paragraph{Estimating parameters of $P$.} 

In the proof of \Cref{theorem: main} we estimate the parameters $P(c_i \mid \pa_i)$ in two cases.

Case 1.

\begin{equation}
Q_k = \sum_{\bm c} P(\bm C = \bm c ; \sigma)[U(d, \bm c) - U(d', \bm c)]=\sum_{c_{k}}P(c_{k} \mid \pa_{k}; \sigma)\beta (c_{k})
\end{equation}
where 
\begin{equation}
    \beta (c_{k}) := \sum_{c_{k - 1}}\ldots \sum_{c_{1}}P(c_{k-1} \mid \pa_{k-1}; \sigma)\ldots P(c_1 \mid \pa_1; \sigma)[U(d,\bm  c) - U(d', \bm c)]
\end{equation}

which we rearrange using $P(c'_{k} \mid \pa_{k}; \sigma) = 1- P(c''_{k} \mid \pa_{k}; \sigma)$ to give, 
\begin{equation}
    P(c'_{k} \mid \pa_{k}; \sigma) = \frac{Q_k - \beta (c_k'')}{\beta (c_k') - \beta (c_k'')}
\end{equation}

Assume we have approximate values $\hat P(c'_{k-1} \mid \pa_{k-1}; \sigma), \ldots, \hat P(c'_{1} \mid \pa_{1}; \sigma)$ where $\hat P(c'_{k} \mid \pa_{k}; \sigma) = P(c'_{k} \mid \pa_{k}; \sigma) + \mathcal O(\delta)$, i.e. errors in our estimates for these parameters grow linearly in $\delta$ for $\delta \ll 1$. 
As $\beta (c_k)$ is a sum of products of these parameter estimates, then our estimate of $\beta (c_k)$ also has linear error for $\delta \ll 1$, i.e. $\hat \beta (c_k) = \beta (c_k) + \mathcal O(\delta)$, and likewise, 

\begin{equation}
   \hat  P(c'_{k} \mid \pa_{k}; \sigma) = \frac{Q_k - \beta (c_k'') + \mathcal O(\delta) }{\beta (c_k') - \beta (c_k'') + \mathcal O(\delta) } =  P(c'_{k} \mid \pa_{k}; \sigma)\left( 1 + \mathcal O(\delta)\right)
\end{equation}

Then for $k = 1$ we know $\beta (c_1) = U(d, \bm c) - U(d', \bm c)$ precisely, and so 

\begin{equation}
   \hat  P(c'_{1} \mid \pa_{1}; \sigma) = \frac{Q_1 - \beta (c_1'') + \mathcal O(\delta) }{\beta (c_1') - \beta (c_1'')} =  P(c'_{1} \mid \pa_{1}; \sigma)\left( 1 + \mathcal O(\delta)\right)
\end{equation}

Which satisfies our assumption of $\mathcal O(\delta)$ error for $k = 1$, $\delta \ll 1$. 
Therefore for all $k$ we have error that grows linearly in $\delta$ for $\delta \ll 1$. 

The expressions for $Q_k, \alpha (c_k)$ for case 2 in the proof of \cref{theorem: main} are similar, and it is trivial to show by the same method that for these parameters the error also grow linearly in $\delta$ for $\delta \ll 1$. 

\paragraph{Learning graph structure. }  In \Cref{theorem: main} we determine $\Pa_{k}$ from $P(c_{k}\mid \doo (\bm C \setminus \{C_{k}\}))$.
Assuming causal faithfulness, which is satisfied for almost all $P$ \citep{meek2013strong}, $C_j \in \Pa_{k}$ if and only if $P(c_{k}\mid \doo (\bm C \setminus \{C_{k}\}))$ differ for some $C_j = c_j, C_j = c'_j$. 
However, as we now only have estimates $\hat P(c_{k}\mid \doo (\bm C \setminus \{C_{k}\}))$, any variation with respect to $C_j = c_j$ may be due to the varying errors in these estimates rather than variation in the conditional probability itself. 
However, we have shown that we can learn any $P(c_i \mid \pa_i)$ within error bounds, and that these bounds scale linearly with $\delta$ for $\delta \ll 1$.
Let $C_j\in \Pa_{i+n}$ and $\theta_{kj}=  P(c_{k}\mid \doo (\bm C \setminus \{C_{k}\}))$, and denote the corresponding upper and lower bounds from \Cref{lemma: linear regret bounds} as $\theta_{kj}^\pm$. 
If $\exists$ $\theta_{kj} \neq \theta_{kj'}$ and either $\theta_{kj}^+ < \theta_{kj'}^-$ or $\theta_{kj'}^+ < \theta_{kj}^-$, non-overlapping bounds for $C_j = c_j$ and $C_j = c_{j}'$, then we know with certainty that $C_j \in \Pa_{k}$. 
If there are no such non-overlapping bounds for all $j$, we do not know if $C_j\in \Pa_k$ and so exclude it from the set. 
This approach is guaranteed to identify a sub-graph of $G$ (i.e. no false positives---directed edges present in the approximate CBN that are not present in the environment). 
Further, we only miss a parent if in the true underlying causal model for all $\Pa_{k} = \pa_{k}$ intervening to change $C_j$ gives $|P(c_{k} \mid \pa_{k}, \doo (c_j)) - P(c_{k} \mid \pa_{k}, \doo (c_j'))| < \mathcal O (\delta)$. 
Hence for $\delta \ll 1$ we only fail to learn causal relations that small in magnitude (with respected to the regret $\delta$), i.e. where the causal effect of the parent on the child is $\mathcal O (\delta)$. 

In \Cref{appendix: experiments} we explore the relation between the regret bound and the error in the learned causal graph using simulated data, and find that even agents that incur relatively high regret can be used to identify causal structure to a high accuracy compared to a random baseline. 
\end{proof}

\section{Appendix: proof of \Cref{theorem: CBN powerful}}\label{appendix: cbn proof}

\modeltooracle*

\begin{proof}
First we consider the case where we know the exact model $M = (P, G)$.
As $M$ is causally sufficient we can identify $\mathbb E[u \mid d, \pa_D ; \sigma]$ for any given soft interventions compatible with $G$ and which involve only variables in $G$ (which includes $\Anc_U\cup \{U\}$). 
Our policy oracle is constructed by i) estimating $\mathbb E[u \mid d, \pa_D ; \sigma]$ for the input $\sigma$, ii) calculating $d^* = \argmax_d \mathbb E[u \mid  d, \pa_D ; \sigma]$ and returning any $d^*$ satisfying this. 

Next, consider the case where we know the approximate model $M' = (P', G')$, for which $\left| P' (v_i \mid \pa_i) - P(v_i \mid \pa_i)\right|\leq \epsilon \ll 1$ which implies $ P' (v_i \mid \pa_i) = P(v_i \mid \pa_i) + c_i\, \epsilon$ where $|c_i|\leq 1$. First we show that for any soft intervention $\sigma$ we can approximate the post-intervention joint distribution $P'(\bm Z = \bm z \mid \doo (D = d), \Pa_D=\pa_D ; \sigma) = P(\bm Z = \bm z \mid \doo (D = d), \Pa_D=\pa_D ; \sigma) + k \epsilon + \mathcal O (\epsilon^2)$ where $\bm Z = \bm C \setminus \Pa_D$ and  $k$ is a function of the model parameters and constant in $\epsilon$. 
Let $\sigma = \sum_j q_j \sigma_j$ where $\sigma_j$ are soft interventions. 
\begin{align}
    &P'(\bm Z = \bm z \mid \doo (D = d), \Pa_D=\pa_D ; \sigma) = \sum_j q_j \frac{P'(\bm C = \bm c \mid \doo (D = d) ; \sigma)}{P'(\bm Z = \bm z' , \Pa_D = \pa_D ; \sigma_j)}\\
    &=  \sum_j q_j \frac{\prod\limits_i P'(C_i =  c_i \mid \doo (D = d) ; \sigma_j)}{\sum_{\bm z'}\prod\limits_i P'(C_i =  c'_i \mid \doo (D = d) ; \sigma_j)}\\
    &= \sum_j q_j \frac{\prod\limits_i [P(C_i =  c_i \mid \doo (D = d) ; \sigma_j) + c_i \epsilon]}{\sum_{\bm z'}\prod\limits_i [P(C_i =  c'_i \mid \doo (D = d) ; \sigma_j)+ c_i \epsilon]}\\
    &= \sum_j q_j \frac{\prod\limits_i P(C_i =  c_i \mid \doo (D = d) ; \sigma_j)(1 +  c'_{ij} \epsilon)}{\sum_{\bm z'}\prod\limits_i P(C_i =  c'_{ij} \mid \doo (D = d) ; \sigma_j)(1 + c'_i \epsilon)}\\
    &= P(\bm Z = \bm z \mid \doo (D = d), \Pa_D=\pa_D ; \sigma) + \epsilon f(\theta) + \mathcal O (\epsilon^2)
\end{align}
where $c'_{ij} := c_{i} / P(C_i =  c_i \mid \doo (D = d) ; \sigma_j)$ and $f(\theta)$ is a polynomial in the model parameters $\theta_i = P(v_i \mid \pa_i)$. 
Therefore the expected utility under intervention $\sigma$ evaluated using $M'$ satisfies, 
\begin{align}
    \mathbb E_{P'}[U \mid \doo (D = d), \Pa_D = \pa_D] &= \sum_{\bm z} P'(\bm Z = \bm z \mid \doo (D = d), \Pa_D=\pa_D ; \sigma)\\
    &=  \sum_{\bm z} '(\bm Z = \bm z \mid \doo (D = d), \Pa_D=\pa_D ; \sigma) + \epsilon g(\theta) + \mathcal O(\epsilon^2)\\
    &= \mathbb E[U \mid \doo (D = d), \Pa_D = \pa_D] + \epsilon g(\theta)+ \mathcal O(\epsilon^2)
\end{align}
where $g(\theta)$ is a polynomial in the model parameters. 
The decision $d^* = \argmax_d  \mathbb E_{P'}[U \mid \doo (D = d), \Pa_D = \pa_D]$ incurs at most $\epsilon g(\theta)$ regret, and therefore the regret is linear in $\epsilon$. 
\end{proof}

%\section{Further interpretation of \Cref{theorem: main,theorem: main approx}}

%Here we give a deeper discussion of the interpretation of \Cref{theorem: main,theorem: main approx} in terms of agents, building on the discussion in \Cref{section: actual interpretation}. 
\begin{comment}
\section{Interpretations}\label{appendix: interpretations}

In this appendix we discuss how we get from \Cref{theorem: main approx} to the result statement, namely that \textit{Any agent capable of adapting to a sufficiently large set of distributional shifts must have learned a causal model of the data generating process}. To make this claim we have to answer i) what does it mean for an agent to have learned a causal world model, ii) how \Cref{theorem: main,theorem: main approx} imply that a robust agent satisfies i). 

\end{comment}

\section{Experiments}\label{appendix: experiments}

As discussed in \Cref{section: interpretation} the proofs of \Cref{theorem: main,theorem: main approx} can be viewed as causal discovery algorithms where we assume i) knowledge of the set of environment variables $\bm C$, ii) knowledge of the utility function $U$, iii) the decision task is unmediated and iv) domain dependence. 
Given these assumptions we can learn an approximation of the underlying CBN given only the policy of the agent $\pi(\sigma)$ under interventions $\sigma$, with the approximation being exact when $\pi(\sigma)$ are optimal.

To demonstrate this theoretical result we take the proof for simple Binary decision tasks outlined in \Cref{appendix: simplified proof} and recast it as a causal discovery algorithm (Algorithm 2 below). 
We test it on CIDs of the form shown in \Cref{fig: simple cid} where we randomly choose the joint distribution over $X, Y$ and their causal structure $G$.
Note that Algorithm 2 is significantly simpler than the general method outlined in the proof of \cref{theorem: main}, as it exploits the fact that $D, X, Y$ are binary variables and that $|\bm C| = 2$. This causal discovery algorithm requires that we can intervene on the latent variables $X, Y$, but only requires that we can observe the response of a single variable (the decision) to these interventions.
To motivate this setting, we can imagine situations where the latents $X, Y$ cannot be directly observed but can be intervened on. 

\textbf{Example.} Many diseases cannot be directly observed in patient physiology, but can only be indirectly observed through the presence of symptoms. 
Let $X,Y\in \{0, 1\}$ be two such diseases, for which there are treatments, i.e. we can intervene to `turn off' $X$ and $Y$ but cannot observe them. 
$D\in \{0, 1\}$ represents a decision to provide a specific pain relief medication, which results in a change in the symptom severity (utility). 
The response to pain relief depends on the presence or absence of the diseases (e.g pain relief is highly effective for patients with $X = T$, moderately effective for $Y = T$ and less effective for $X = F, Y = F$). 
The doctor's goal is to minimise symptom severity while avoiding unnecessary use of pain medication, e.g. $U(d, x, y) = d[s(x, y) - c]$ where $c$ is some cost associated with pain relief and $s(x, y)$ is the response to pain relief. 
Following an intervention $\sigma$ (e.g. curing a disease $\sigma = \doo (X = F)$), the doctor adapts their treatment policy in the shifted population. 
For example, this adaptation could occur by trial and error, with the doctor choosing random treatment decisions $D$ and observing the change in symptom severity---a context-free bandit problem. 
%Finally, the doctor adjusts their decision depending on the treatment, meaning their policy is conditioned on the treatment choice $\pi(d ; \sigma)$---for example, patients treated for disease $X$ are not administered pain relief, $\pi(D = 0 ; \sigma = \doo (X = F)) = 1$.
Although we cannot directly observe the disease states $X, Y$, by intervening on the latent disease state and observing how the doctor's policy adapts, we can learn both the joint distribution $P(X, Y)$ and the causal graph over $X, Y$. 
%If the doctors is effective at maximising the utility (low regret), we can then identify the distribution over $X$ and $Y$ and their causal relation to each other simply by observing the doctors policy under different interventions. 

\begin{figure}[htbp]
\centering

\begin{subfigure}[b]{0.45\textwidth}
\includegraphics[width=\linewidth]{figures/G_graph.png}
\caption{Misclassification rate for G scaling with regret bound}
\label{fig:sub1}
\end{subfigure}
\hfill
\begin{subfigure}[b]{0.45\textwidth}
\includegraphics[width=\linewidth]{figures/P_mean_graph.png}
\caption{Mean parameter error for P(x, y) scaling with regret bound}
\label{fig:sub2}
\end{subfigure}
\begin{subfigure}[b]{0.45\textwidth}
\includegraphics[width=\linewidth]{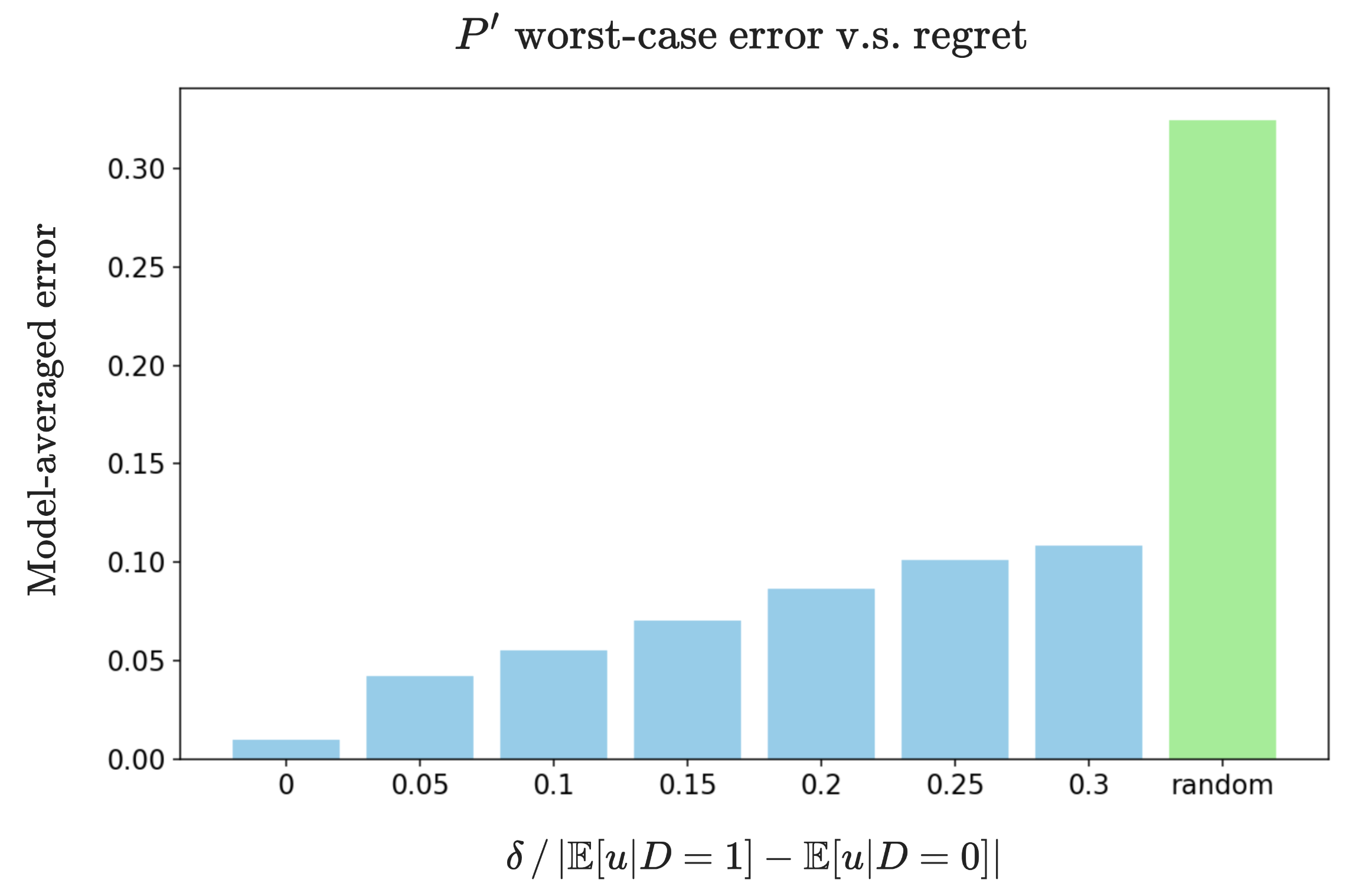}
\caption{Worst-case error for P(x, y) scaling with regret bound}
\label{fig:sub3}
\end{subfigure}
\hfill
%\hspace{12mm}
\begin{minipage}[b]{0.45\textwidth}
\captionof{figure}{Comparing the model-average error rates for a) the learned DAG and b) the mean error for parameters $P(x, y)$ and c) the worst-case error for parameters $P(x, y)$, v.s. the (normalised) regret bound $\delta / \left|\mathbb E[u \mid D= 1] - \mathbb E[u \mid D = 0]\right|$. Average error taken over 1000 randomly generated environments with binary decision $D$ and two binary latent variables $X, Y$. 
Comparison to error rate for random guess (green).
Results appear to show sub-linear growth in error rate with regret bound. 
Note that even weakly generalising agents can be used to identify causal structure significantly better than the random baseline. }
\label{fig:experiment}
\end{minipage}

\end{figure}

\Cref{fig:experiment} shows the average error in the learned parameters $P(x, y)$ and $G$ when $\pi(\sigma)$ satisfy different regret bounds. 
The results are averaged over 1000 randomly generated CBNs where i) the parameters of the joint distribution $P(x, y)$ are chosen at random, ii) the DAG $G$ over $X, Y$ is chosen at random from $X\rightarrow Y$ and $X\leftarrow Y$, iii) the utility function $U(d, x, y)\in[0,1]$ is chosen at random (see \Cref{appendix: parameterisation} for description of parameters). 
To simulate the regret-bounded agent we calculate the optimal policy for each environment and if the sub-optimal decision satisfies the regret bound we choose randomly from the two decisions when sampling from the policy oracle in \Cref{alg: q est}. 
We also compare to a random baseline algorithm which estimates $P(x, y) = 1/4$ and randomly selects from $X\rightarrow Y$ or $X\leftarrow Y$ with equal probability. 
In a small number of cases Algorithm 1 fails to predict $P(x, y) \in [0, 1]$ due to finite sample errors, and for these cases we replace the output of the causal discovery algorithm with a random guess. 

From \Cref{fig:experiment} it appears that the error rate grows sub-linearly with regret. 
Note that the relevant scale for the regret is the difference in expected utility between the two decisions, hence we plot the normalised regret bound where we divide $\delta$ by this expected utility difference. 
Note that even for relatively large regret bounds, representing agents that generalise weakly, we can still identify the causal structure with a high accuracy. 
For example when the regret bound is 30\% of the expected utility difference, we can still identify the correct causal structure in $\sim 90\%$ of the randomly generated CIDs. 
This describes an agent that is guaranteed to incur a regret of at most $30\%$ of the expected utility difference between the decisions \textit{before} the domain shift. 
If the domain shift results in the expected utility difference being less that $30\%$ of the unshifted expected utility difference, the agent can return a sub-optimal decision.

\begin{algorithm}\label{algorithm: graph learner}
\caption{Graph Learner for simple CID}
\begin{algorithmic}[1]
\Function{graph learner}{$\Pi^\delta_\Sigma$, $U$, $\delta$, $N$}
    \State $d_1, d_2, x', y', q_\text{crit} \gets \text{Algorithm 1}(U, \Pi^\delta_\Sigma, N, \sigma_1 = \doo(Y = 0))$
    \Comment{Identify $q_\text{crit}$ for $\doo (Y= 0)$}
    \State \text{Exp. U difference} $ = (U(d_2, x', y') - U(d_1, x', y')) * (1-1/q_\text{crit})$
    \State $\Delta_0 = U(0,0,d_2) - U(0,0,d_1)$
    \State $\Delta_1 = U(1,0,d_2) - U(1,0,d_1)$
    \State $P(X_{Y = 0} = 0) = (\text{Exp. U difference} - \Delta_1)/(\Delta_0 - \Delta_1)$
    \State
    
    \State $d_1, d_2, x', y', q_\text{crit} \gets \text{Algorithm 1}(U, \Pi^\delta_\Sigma, N, \sigma_1 = \doo(Y = 1))$
    \Comment{Identify $q_\text{crit}$ for $\doo (Y= 1)$}
    \State \text{Exp. U difference} $ = (U(d_2, x', y') - U(d_1, x', y')) * (1-1/q_\text{crit})$
    \State $\Delta_0 = U(0,1,d_2) - U(0, 1,d_1)$
    \State $\Delta_1 = U(1,1,d_2) - U(1,1,d_1)$
    \State $P(X_{Y = 1} = 0) = (\text{Exp. U difference} - \Delta_1)/(\Delta_0 - \Delta_1)$
    \State
    
    \State $d_1, d_2, x', y', q_\text{crit} \gets \text{Algorithm 1}(U, \Pi^\delta_\Sigma, N, \sigma_1 = \doo(X = 0))$
    \Comment{Identify $q_\text{crit}$ for $\doo (X= 0)$}
    \State \text{Exp. U difference} $ = (U(d_2, x', y') - U(d_1, x', y')) * (1-1/q_\text{crit})$
    \State $\Delta_0 = U(0,0,d_2) - U(0,0,d_1)$
    \State $\Delta_1 = U(0,1,d_2) - U(0,1,d_1)$
    \State $P(Y_{X = 0} = 0) = (\text{Exp. U difference} - \Delta_1)/(\Delta_0 - \Delta_1)$
    \State
    
    \State $d_1, d_2, x', y', q_\text{crit} \gets \text{Algorithm 1}(U, \Pi^\delta_\Sigma, N, \sigma_1 = \doo(X = 1))$
    \Comment{Identify $q_\text{crit}$ for $\doo (X= 1)$}
    \State \text{Exp. U difference} $ = (U(d_2, x', y') - U(d_1, x', y')) * (1-1/q_\text{crit})$
    \State $\Delta_0 = U(1,0,d_2) - U(1,0,d_1)$
    \State $\Delta_1 = U(1,1,d_2) - U(1,1,d_1)$
    \State $P(Y_{X = 1} = 0) = (\text{Exp. U difference} - \Delta_1)/(\Delta_0 - \Delta_1)$
    \State

    \If{$P(Y_{X = 0} = 0)=P(Y_{X = 1} = 0)$}
    \Comment{Identify $G$ and $P$ from interventionals}
        \If {$P(X_{Y = 0} = 0)=P(X_{Y = 1} = 0)$}
            \State $G \gets ()$
            \State $P(x, y) = P(X_{Y = 0} = x)P(Y_{X = 0} = y)$
        \Else
            \State $G \gets (Y\rightarrow X)$
            \State $P(x, y) = P(Y_{X = 0} = y)P(X_{Y = y} = x)$
        \EndIf
    \Else 
    \State $ G \gets (X\rightarrow Y)$
    \State $P(x, y) = P(X_{Y = 0} = x)P(Y_{X = x} = y)$
    
    \EndIf

    \State \Return $G, P(x, y)$
\EndFunction
\end{algorithmic}
\end{algorithm}

\section{Appendix: transportability \& Pearl's causal hierarchy}\label{appendix: related work}

\paragraph{Transportability.} The problem of evaluating policies under distributional shifts has been studied extensively in transportability theory \citep{pearl2011transportability,bareinboim2016causal,bellot2022partial}. 
For decision tasks as outlined in \Cref{section: decision tasks}, transportability aims to provide necessary and sufficient conditions for identifying the expected utility following a distributional shift, $R = \mathbb E[u \mid d, \pa_D ; \sigma]$, given (partial) knowledge of i) the joint $P$, causal graph $G$ and interventional distributions $I$ in the source domain, and ii) (partial) knowledge of the joint $P^*$ and causal graph $G^*$ in the target domain \citep{pearl2011transportability,bareinboim2012transportability}.
Hence, these results differ from \Cref{theorem: main,theorem: main approx} in that they restrict to the case where all assumptions on the data generating process (i.e. inductive biases) can be expressed as (partial) knowledge of the underlying CBN. 
For example, \citet{bareinboim2016causal} claim the problem is essentially solved in the case where `assumptions are expressible in DAG form'.
This does not constrain possible approaches to domain generalisation that make use of non-causal assumptions and heuristics\footnote{Indeed, notable examples of causal assumptions that go beyond those expressible in DAG form include restricting the classes of structural equations \cite{mooij2016distinguishing} and assuming cause-effect asymmetry \citep{mitrovic2018causal}}, and indeed deep learning algorithms exploit a much wider set of inductive biases than causal assumptions alone \citep{neyshabur2014search,battaglia2018relational,rahaman2019spectral,goyal2022inductive,cohen2016group}.  
In many real-world tasks these may be sufficient to identify `good enough' (i.e. regret-bounded) policies without requiring knowledge of the causal structure of the data generating process.
Our aim has been to establish if learning causal models is necessary for domain generalisation in general. 
Hence assuming that agents are restricted to using inductive biases that amount to (partial) knowledge of the underlying CBN would be begging the question. 

\paragraph{Causal hierarchy's theorem (CHT).} The celebrated causal hierarchy theorem \citep{bareinboim2022pearl,ibeling2021topological} shows that for almost all environments there are causal relations between environment variables that cannot be identified from observational data without additional assumptions.
Does this imply that a causal model is necessary for identifying optimal policies?

First, note that the CHT is an insufficiency result, and only implies trivial necessity results.
For example, is a causal model necessary for identifying all causal and associative relations between environment variables?
Yes, but only because this set of observational and interventional distributions \textit{is} a causal model. Formally, we can identify the underlying causal model (up to latent confounders) by assuming causal faithfulness, which holds for almost all causal models \citep{meek2013strong}.
The difference here is that the CHT is concerned with the identifiability of all causal and associative relations between environment variables. 
This sets a much higher bar than domain generalisation, which focuses on identifying a strict subset of these (regret-bounded policies) (\Cref{fig:venn diagram}). 

Secondly, the CHT is concerned with the collapse (or lack thereof) of the causal hierarchy. 
For example, that observational data is insufficient for identifying all causal queries. 
We do not restrict agents to having observational training data---in fact, typically we assume that agents have access to both observational and interventional data in the online learning setting that we consider (e.g. agents can intervene to fix the decision node $D$ by assumption).

Finally, we can imagine a refinement of the CHT which states that observational data is insufficient for identifying regret-bounded policies without additional assumptions, bringing it in line with \Cref{theorem: main,theorem: main approx}. 
If this was implied by the CHT, it would not imply our results unless we restrict to the case where all assumptions as constraints on the causal structure (similar to transportability).  
Likewise, it is simple to show that \Cref{theorem: main} does not imply the CHT.
In deriving \Cref{theorem: main} we do not restrict to observational distributions (or make any restrictions on the data available to the agent when generating its policy). 

\end{document}